\documentclass{article}

\usepackage{microtype}
\usepackage{graphicx}
\usepackage{subcaption}
\usepackage{booktabs} 

\usepackage{hyperref}

\usepackage[accepted]{icml2026}

\usepackage{amsmath}
\usepackage{amssymb}
\usepackage{mathtools}
\usepackage{amsthm}
\usepackage{amsfonts}       
\usepackage{nicefrac}       
\usepackage{bm}             
\usepackage{dsfont}
\usepackage{latexsym}
\usepackage{amscd}
\usepackage[capitalize,noabbrev]{cleveref}

\theoremstyle{plain}
\newtheorem{theorem}{Theorem}[section]

\newtheorem{lemma}[theorem]{Lemma}

\theoremstyle{definition}

\theoremstyle{remark}
\newtheorem{remark}[theorem]{Remark}

\usepackage[textsize=tiny]{todonotes}

\icmltitlerunning{Transformers Can Learn Posterior Predictive Distributions In-Context}

\newcommand{\mbf}[1]{\mathbf{#1}}
\newcommand{\mbb}[1]{\mathbb{#1}}
\newcommand{\mcl}[1]{\mathcal{#1}}

\newcommand{\mrm}[1]{\mathrm{#1}}
\newcommand{\inp}[2]{\langle #1, #2 \rangle}
\newcommand{\opn}[1]{\operatorname{#1}}

\begin{document}

\twocolumn[
  \icmltitle{Transformers Can Learn Posterior Predictive Distributions In-Context}



  \icmlsetsymbol{equal}{$\dagger$}

  \begin{icmlauthorlist}
    \icmlauthor{Gyeonghun Kang}{yyy}
    \icmlauthor{Changwoo J. Lee}{yyy,equal}
    \icmlauthor{Xiang Cheng}{zzz,equal}
  \end{icmlauthorlist}

  \icmlaffiliation{yyy}{Department of Statistical Science, Duke University, Durham, NC, USA}
  \icmlaffiliation{zzz}{Department of Electrical and Computer Engineering, Duke University, Durham, NC, USA}

  \icmlcorrespondingauthor{Gyeonghun Kang}{gyeonghun.kang@duke.edu}

  \icmlkeywords{in-context learning, posterior predictive distribution, Bayesian inference, self-attention, transformer, prior-data fitted networks (PFNs)}

  \vskip 0.3in
]


\printAffiliationsAndNotice{\textsuperscript{$\dagger$}Equal advising.}  

\begin{abstract}
Prior-data fitted networks (PFNs) have recently emerged as a powerful approach for Bayesian prediction tasks, approximating the posterior predictive distribution (PPD) through in-context learning. 
Despite their strong empirical performance and ability to go beyond point predictions, theoretical understandings of the algorithmic capability of transformers to learn distributions in context are still lacking. Focusing on Gaussian process regression problems, we show by construction that transformers can implement a gradient descent algorithm targeting the posterior predictive mean and variance, followed by nonlinear mappings that yield binned probabilities of PPD. We study the error bounds of the approximated PPD in terms of attention depth and bin resolution. 
Based on these results, we further demonstrate the key role of normalization and the choice of attention depth in enabling the extrapolation abilities of transformers beyond the pretraining sample size range. 
We conduct simulations that corroborate our findings, providing insight into the expressivity of PFNs targeting PPDs and how architectural choices may influence generalization capabilities.
\end{abstract}

\section{Introduction}
\begin{figure}[t]
    \centering
    \includegraphics[width=0.5\textwidth]{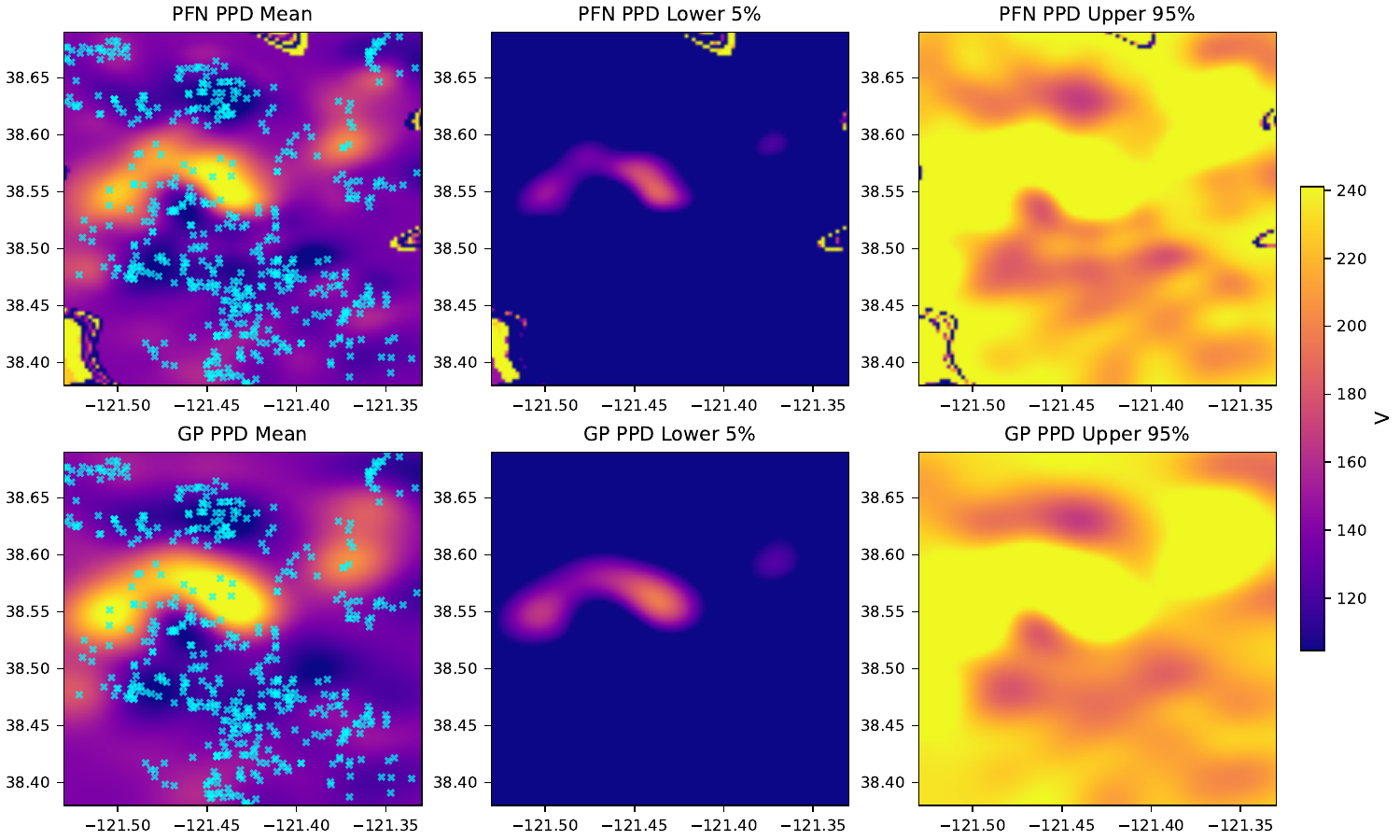}
    \caption{
    Comparison of PPDs produced by a transformer-based PFN and an empirical-Bayes Gaussian process (GP) on the Sacramento home price dataset \citep{kuhn2008building}.
    The axes represent spatial coordinates (longitude and latitude), and the color scale represents price per square foot (V).
    The top row shows PFN posterior predictive summaries, and the bottom row shows the corresponding GP outputs.
    Columns correspond to the PPD mean (left), $5\%$ quantile (center), and $95\%$ quantile (right).
    Housing locations used as context are shown by blue $\times$ marks in the left column.
    The PFN PPD tracks that of exact GP closely overall except in the regions with no nearby observations.
    See Section~\ref{supp-sec:real data} for details.
    }
    \label{fig:sacramento}
\end{figure}

In-context learning (ICL) is the ability of a pre-trained model to adapt to new tasks at inference time by using examples provided in the input sequence without any parameter updates \citep{brown2020language,garg2022can}. This ability has been most prominently observed in transformer architectures, which have become the dominant modeling component in modern machine learning \citep{vaswani2017attention}. Formally, we refer to ICL as the ability to learn a mapping from a set of input-output pairs $\mathcal{D}_n = \{(x_i, y_i)\}_{i=1}^n$ together with a new input $x_{n+1}$ to a prediction of $y_{n+1}$; see \citet{dong2024survey} and references therein for a review.

When the goal is probabilistic prediction of $y_{n+1}$, rather than point prediction, prior-data fitted networks (PFNs) \citep{muller2022transformers} have recently gained considerable attention as an ICL method for approximating the posterior predictive distribution (PPD) of $y_{n+1}$. 
By pre-training a transformer model on synthetic data from a joint prior predictive model of $(x,y)$, PFNs aim to learn a mapping from $(\mathcal{D}_n, x_{n+1})$ to the posterior predictive distribution $p(\cdot|x_{n+1},\mathcal{D}_n)$. 
Compared to traditional approaches for computing PPDs such as Markov chain Monte Carlo (MCMC), PFNs offer substantial computational gains since model parameters are no longer updated after pre-training and also maintain robustness through pre-training on diverse data-generating scenarios. 
Most notably, PFNs for tabular data have demonstrated strong empirical performance and excellent scalability across a wide range of benchmarks \citep{hollmann2023tabpfn,hollmann2025accurate}. PFNs have also been adopted in a broad range of application domains, such as RNA biophysics \citep{scheuer2024kinpfn} and genomic prediction \citep{ubbens2025gpfn}.

We are particularly motivated by PFN's ability to quantify posterior predictive uncertainty. For instance, under Gaussian process regression settings, \citet{muller2022transformers} reports that 95\% credible intervals of $y_{n+1}$ produced by PFNs are virtually indistinguishable from the truth; see also \citet{zhang2025tabpfn}, where PFN-based PPD intervals closely match nominal coverage under interpolation settings. 
Also, PFNs have been shown to successfully serve as flexible surrogate models for Bayesian optimization \citep{muller2023pfns4bo,yu2025git}, accurately computing common acquisition functions such as expected improvement, which depend on both posterior predictive means and variances. See Figure~\ref{fig:sacramento} for an illustration of PFN's ability to accurately capture the lower 5\% and upper 95\% PPD quantiles; details are in Section~\ref{supp-sec:real data}. 

Despite its increasing popularity, the theoretical understanding of PFNs remains largely unexplored, especially regarding the role of the transformer in the pre-training stage. 
An increasing body of work suggests that ICL can be interpreted as implicitly performing gradient-based optimization. For a linear regression task,  \citet{akyurek2023what,von2023transformers,bai2023transformers} demonstrates that transformers and their recurrences are expressive enough to implement gradient descent updates in their forward pass; see also \citet{fu2024transformers,vladymyrov2024linear}. 
Furthermore, \citet{ahn2023transformers,zhang2024trained,mahankali2024one} shows that the global optimum of a single layer transformer weights in the linear regression task implements a single iteration of preconditioned gradient descent. For PFNs, a natural research question concerns understanding how PFNs carry out in-context learning of PPD, and the mechanisms and conditions under which this capability arises.

Taking a gradient-based optimization view of ICL, we are particularly interested in a principled understanding of how attention depth affects the approximation quality of PPDs. Recent state-of-the-art PFN models suggest that increasing network depth can substantially improve performance. For example, TabPFN-2.5 and TabPFN-3 \citep{grinsztajn2025tabpfn,grinsztajn2026tabpfn3technicalreport} report markedly improved accuracy by increasing attention depth to 18 or 24 layers, a 1.5- or 2-fold increase over TabPFNv2 \citep{hollmann2025accurate} with 12 layers. Another important aspect of PFN that affects the approximation quality of PPDs is the discretization resolution used for PPDs over continuous domains, a practice that is widespread in existing PFN implementations \citep{muller2022transformers,hollmann2025accurate} but is seldom studied in the literature.

Another interesting aspect of PFNs is their ability to generalize beyond the pre-training sample size. For instance, even though TabPFNv2 is pretrained on synthetic data only up to a sample size of 2048, it remains highly competitive for context size $n$ up to 10,000 \citep{hollmann2025accurate}. To reach even larger, recent works have adopted approaches such as fine-tuning \citep{feuer2024tunetables}, boosting \citep{wang2025prior}, and two-stage architectures \citep{qu2025tabicl}. Adopting a gradient-based optimization view of ICL, we seek to provide a better understanding of when this generalization ability arises and how it can be further strengthened. \citet{nagler2023statistical} attempts to explain this capability asymptotically, based on sensitivity to individual training samples and localization for a fixed architecture; our focus is instead on the relationship between attention depth, normalization, and generalization ability.

\textbf{Contributions.}
Focusing on GP regression problems, we establish theoretical support for PFNs' capability to approximate the PPD and identify the mechanisms driving the generalization across different context size $n$, referred to as \emph{$n$-generalization}, whereby models retain reliable point predictions and uncertainty quantification beyond the pretraining sample size range.
\begin{enumerate}
    \item We provide an explicit transformer construction that approximates the PPD in-context up to a prescribed tolerance: self-attention implements an iterative solver for the predictive mean and variance, and a shallow multilayer perceptron (MLP) maps these moments to a binned distribution. With binning and tail errors controlled, the principal remaining error arises in the attention block (Theorem~\ref{thm:PPD convergence}), which inherits an exponential convergence rate in the number of layers $L$ (Theorem~\ref{thm:TFdoPPD}). This extends earlier ICL works focused on point estimation to distributional approximation assessment.
    \item We use our construction to characterize both the mechanisms that enable and the factors that limit $n$-generalization in PFNs under a controlled setting. Using the spectral properties of the attention score matrix, we show that attention normalization is essential for achieving generalization (Theorem~\ref{thm:lambda1}, Figure~\ref{fig:exp2-1}). We then demonstrate that extending generalization to substantially larger $n$ is intrinsically more difficult (Theorem~\ref{thm:cond number}), and explain how the increased attention depth could help mitigate this challenge (Theorem~\ref{thm:L growth}).
    \item Using our transformer pretrained in PFN fashion, we empirically validate our theory: PPD approximation error decreases with attention depth and bin resolution, in line with Theorem~\ref{thm:PPD convergence} (Figure~\ref{fig:exp1}). We further show that attention normalization is crucial for $n$-generalization: unnormalized models fail outside the pretraining range, while normalized models remain stable (Figure~\ref{fig:exp2-1}). Finally, we confirm that generalization performance improves with both depth and pretraining range, and that solver errors grow with evaluation sample size, consistent with Theorem~\ref{thm:L growth} (Figure~\ref{fig:exp2-2}, \ref{supp-fig:exp2-3}).
\end{enumerate}

\section{Background and Problem Setup}

\textbf{PFNs and in-context learning of PPD.}
We begin by introducing the necessary background and notation for in-context learning of posterior predictive distributions (PPD) and prior-data fitted networks (PFN). Let $p(y\mid x, \theta)$ denote a conditional probability model of $y\in\mathcal{Y}\subseteq\mathbb{R}$ parametrized by $\theta$ with a prior $p(\theta)$, and $x\in\mathbb{R}^d$ is a covariate. Given i.i.d. data $\mathcal{D}_n = \{(x_i,y_i)\}_{i=1}^n$, the PPD of $y$ at a query $x$ is  
\begin{equation}
\label{eq:ppd1}
    p_n(y\mid x, \mathcal{D}_n) = \int p(y\mid x,\theta)p(\theta\mid \mathcal{D}_n)d\theta
\end{equation}
where $p(\theta\mid \mathcal{D}_n)$ is a posterior of $\theta$. The integral in \eqref{eq:ppd1} is often analytically intractable, and PPD is typically computed through Monte Carlo integration using samples from the posterior $p(\theta\mid \mathcal{D}_n)$, such as those from MCMC. 

Without appealing to expression \eqref{eq:ppd1}, prior-data fitted networks (PFNs) \citep{muller2022transformers} take a radically different approach by directly approximating PPD through in-context learning. Assuming a joint prior predictive model $(x_i,y_i)\stackrel{\mathrm{iid}}{\sim}p(x,y)$ for all $i$, PFN aims to learn a mapping $(\mcl{D}_n, x)\ \mapsto\ q_\vartheta(\cdot\mid x, \mcl{D}_n)$ that accepts a dataset $\mcl{D}_n$ and a query $x$ as input and outputs a probability distribution $q_\vartheta$ that directly approximates PPD \eqref{eq:ppd1}. The mapping, parameterized by $\vartheta$, corresponds to a forward pass of a neural net, and $\vartheta$ is optimized such that
\begin{align*}
\vartheta &= \opn{argmin}\mrm{E}_{\mcl{D}_{n}\cup\{x,y\}}[-\log q_\vartheta(y \mid x, \mcl{D}_n)]
\end{align*}
where expectation is taken over the assumed joint prior predictive model $(x_i,y_i)\stackrel{\mathrm{iid}}{\sim}p(x,y)$. This is equivalent to minimizing the expected KL divergence between PPD and $q_\vartheta$; see \citet{muller2022transformers} for details. In practice, $\vartheta$ is optimized such that the resulting mapping can approximate PPD for a broad range of context sizes $n$. To achieve this, PFN is pre-trained on ensembles of synthetic data with varying sample sizes; see also \citet{nagler2023statistical}.

\textbf{PFN outputs and population loss minimizer.}
We focus on PFN based on transformer consisting of $L$ self-attention blocks and an MLP head with parameter $\vartheta$ that yields a vector of finite length $C$ (``logits''), denoted as $\bm{\ell} = (\ell_1,\ell_2,\dots,\ell_C)$, where its softmax $\opn{sm}(\bm\ell)_c := \exp(\ell_c)/\sum_{c'=1}^C\exp(\ell_{c'})$ and $c=1,\dots,C$ represent the PPD output $q_\vartheta$. 
Specifically, we fix a partition $\Gamma=\{a=\gamma_1<\gamma_2<\cdots<\gamma_{C+1}=b\}$ of an interval $(a,b]\subset \mathcal{Y}$, such that marginally $\mathbb{P}(y\not\in(a,b])=\varepsilon$ for a small $\varepsilon>0$. Then, the logits $(\ell_1,\dots,\ell_C)$ induce a piecewise-constant probability density function of $y$ for $y\in(a,b]$:
\begin{align}\label{eqn:TF piece ppd}
    q_\vartheta(y\mid x,\mathcal{D}_n) := 
    \sum_{c=1}^C
    \frac{\mbf 1_{y\in(\gamma_{c}, \gamma_{c+1}]}}{\Delta_c }\frac{\exp(\ell_c)}{\sum_{c'=1}^C\exp(\ell_{c'})}
\end{align}
where $\Delta_c:=\gamma_{c+1}-\gamma_{c}$. 
This setting coincides with a state-of-the-art practical implementation of PFNs that yields a piece-wise constant output distribution of $y$ \citep{hollmann2025accurate}. 

The network parameter $\vartheta$, whose architecture and details will be described in the following section, is pre-trained using simulated datasets from the assumed joint prior predictive model $(x,y)\stackrel{\mathrm{iid}}{\sim}p(x,y)$. It minimizes the (truncated) negative log likelihood (NLL) loss $\mathrm E_{\mathcal{D}_n\cup\{x\}} [\mathrm E_{y\sim p_{(a,b]}(\cdot\mid x, \mathcal{D}_n)}\left\{-\log q_\vartheta(y\mid x, \mathcal{D}_n)\mid x, \mathcal{D}_n\right\}]$, and the population minimizer $q^*$ is the truncated and discretized PPD (see Lemma~\ref{supp-thm:popmini}):
\begin{align}\label{eqn:true piece ppd}
\!\!   q^\ast(y\mid x,\! \mathcal{D}_n)\! &:= \!
    \sum_{c=1}^C\!
    \frac{\mbf 1_{y\in(\gamma_{c}, \gamma_{c+1}]}\!\int_{\gamma_{c}}^{\gamma_{c+1}} \!p_n(t\mid x,\! \mathcal{D}_n)dt}{\Delta_c (1-\varepsilon)}.
\end{align}
Assuming $p_n$ is Lipschitz smooth on $(a,b]$, one can show that $\mrm{E}_{\mathcal{D}_n\cup \{x\}}[\opn{TV}(p_n,q^\ast)]=O(|\Gamma|)+\varepsilon$, where $\opn{TV}$ is the total variation distance and $|\Gamma|$ is the maximum bin width of $\Gamma$; see Lemma~\ref{supp-lem:discretization error} for details.

\textbf{Transformer architecture.}
We consider a transformer architecture consisting of $L$ self-attention blocks followed by an MLP head.
Let $Z^{(0)}\in \mbb{R}^{(d+1)\times (n+1)}$ be an input token matrix where each column is a token:
$$
[Z^{(0)}]_{:,j}=[x_j^\top,y_j]^\top\;(j\leq n),\quad [Z^{(0)}]_{:,n+1} = [x_{n+1}^\top, 0]^\top,
$$
so the $(n+1)$st token corresponds to the query point with a masked label.
Denoting $\opn{Attn}$ a masked self-attention (see Section~\ref{sec:attention-moments}), we define an update for $Z^{(l+1)}\in\mathbb{R}^{(d+1)\times (n+1)}, l=0,\ldots,L-1$ and the output logits $\bm\ell$ as 
\begin{align}
\begin{split}
    Z^{(l+1)}& =\; Z^{(l)} + S^{(l)}Z^{(l)} \\
                  & + \opn{Attn}(Z^{(l)};V^{(l)},K^{(l)},Q^{(l)}),
\end{split}
\label{eqn:tabPFN attn} \\
\bm\ell_\vartheta(Z^{(0)})
&:= W_2\opn{act}(W_1[Z^{(L)}]_{:,n+1}+h_1)+h_2,
\label{eqn:tabPFN mlp}
\end{align}
where $V^{(l)},K^{(l)},Q^{(l)},S^{(l)}\in\mathbb R^{(d+1)\times(d+1)}$ are the per-layer parameters (attention projections and linear skip connections);
$W_1\in\mathbb R^{C'\times(d+1)}$, $W_2\in\mathbb R^{C\times C'}$, $h_1\in\mathbb R^{C'}$, and $h_2\in\mathbb R^{C}$ are the MLP-head parameters, and $\opn{act}(\cdot)$ is an activation function.
The update \eqref{eqn:tabPFN attn} comprises a residual connection, a self-attention map, and an additional learnable skip term.
We write $\vartheta$ for the collection of all parameters.
This abstraction is sufficient for our constructive proofs; additional components used in practice (e.g., token embeddings, multi-head attention, and layer normalization) can be viewed as architectural refinements that increase expressivity without altering the core mechanism.

It has been shown that the attention blocks \eqref{eqn:tabPFN attn} are expressive enough to implement an iterative solver computing the predictive mean of $p_n$ \citep{von2023transformers, ahn2023transformers, cheng2024transformers}.
However, approximating an entire predictive distribution is more demanding: it requires matching not only the mean but also higher-order characteristics, such as variance.
In the following sections, we address this gap via a constructive proof.

\textbf{Gaussian process (GP) regression.}
We consider the GP regression problem \citep{rasmussen2006gaussian} as a canonical example since the tractability of the PPD arising from the GP regression problem allows us to carefully examine the accuracy of the PFN output $q_\vartheta$. 
Let $\phi:\mathbb{R}^{d}\to\mathbb{R}$ be an unknown function, and $y_1,\dots,y_n$ be a noisy realization of $\phi$ based on a probabilistic model
\begin{align}\label{eqn:GPreg}
    y_i = \phi(x_i) + \epsilon_i,\quad 
    \epsilon_i \stackrel{\mathrm{iid}}{\sim} \mathcal{N}(0,\sigma^2).
\end{align}
Also, let $Y:=(y_1,\ldots,y_n)^\top$ be the training labels, $X:=(x_1^\top,\cdots, x_n^\top)^\top$ be the feature matrix, and let $Z:=(x,\mcl{D}_n)$ denote the query-context pair. 
Assuming fixed error variance $\sigma^2$, along with a mean zero GP prior on $\phi$ with a covariance kernel $\kappa:\mathbb{R}^d\times \mathbb{R}^d \to \mathbb{R}$, the PPD $p_n(y\mid x, \mathcal{D}_n)$ is also a normal distribution $\mathcal{N}\left(y;\mu(Z),\tau(Z)\right)$ where
\begin{align*}
    \mu(Z) &= k_x^\top (G+\sigma^2 I_n)^{-1}Y,\\
    \tau(Z) &= \kappa(x,x)+\sigma^2 - k_x^\top (G+\sigma^2 I_n)^{-1}k_x
\end{align*}
where $k_x := \big(\kappa(x_1,x),\ldots,\kappa(x_n,x)\big)^\top$ and $G\in\mathbb{R}^{n\times n}$ with $[G]_{i,j} = \kappa(x_i,x_j)$ is a Gram matrix. 
Thus, both $\mu(Z)$ and the variance reduction term $k_x^\top (G+\sigma^2 I_n)^{-1}k_x$ reduce to evaluating
$k_x^\top (G+\sigma^2 I_n)^{-1}v$ for $v\in\{Y,k_x\}$.

\section{Attention Computes Moments of PPD}
\label{sec:attention-moments}
We exhibit an explicit weight configuration under which attention implements the iterative recursions for the posterior predictive mean $\mu(Z)$ and variance $\tau(Z)$. 

\textbf{KRR and Richardson iteration.} Our starting point is to view 
$\mu(Z)$ and the variance reduction term in $\tau(Z)$ as arising from kernel ridge regression (KRR) problems.
Let $\mcl H$ be the reproducing kernel Hilbert space (RKHS) of $\kappa$ with norm $\|\cdot \|_{\mcl H}$.
For $v\in\mbb{R}^n$, define $u^\ast \in \mcl{H}$ as
\[
    u^\ast = \arg\min_{u\in\mcl H} \sum_{i=1}^n (v_i-u(x_i))^2+\sigma^2\|u\|_{\mcl H}^2 .
\]
By the representer theorem \citep{kimeldorf1970correspondence}, $u^\ast(x)=k_x^\top (G+\sigma^2 I_n)^{-1}v$, and therefore 
$u^\ast_X:=(u^\ast(x_1),\ldots,u^\ast(x_n))^\top$ is the solution of $(G+\sigma^2 I_n)u^\ast_X = Gv$.
A Richardson iteration \citep{richardson1911ix} for this system can be written componentwise as, for $j=1,\dots,n$,
\begin{align}\label{eqn:recursion}
    \begin{split}
        u^{(l+1)}(x_j)
        &=(1-\eta^{(l)}\sigma^2)u^{(l)}(x_j)\\
        &+\eta^{(l)}\sum_{i=1}^n \kappa(x_i,x_j)\bigl(v_i-u^{(l)}(x_i)\bigr),
    \end{split}
\end{align}
initialized at $u^{(0)}(\cdot)\equiv 0$ and extended to a query $x$ by the same update, with $x_j$ replaced by $x$.

Let $\lambda_1(G)$ and $\lambda_n(G)$ denote the maximum and minimum eigenvalues of $G$.
If $0<\eta^{(l)}<2/(\lambda_1(G)+\sigma^2)$ for all $l$ and $\sum_{l=0}^\infty \eta^{(l)}=\infty$, then as $L\to\infty$, $u^{(l)}_X:=(u^{(l)}(x_1),\cdots, u^{(l)}(x_n))^\top \to u_X^\ast$, and $u^{(L)}(x) \to k_x^\top (G+\sigma^2 I_n)^{-1}v$ (see Theorem~\ref{supp-thm:KRR}).
In particular, $u^{(L)}(x)\to \mu(Z)$ when $v=Y$, and $u^{(L)}(x)\to k_x^\top (G+\sigma^2 I_n)^{-1}k_x$ when $v=k_x$.
Therefore, we can compute the moments of PPD by running the recursion \eqref{eqn:recursion} for two choices of the righthand side: $v=Y$ and $v=k_x$.

\textbf{Unnormalized attention.}
These recursions can be realized through attention layers.
Let \(Z=[z_1,\ldots,z_n,z_{n+1}]\in\mathbb{R}^{d'\times(n+1)}\) be a matrix of tokens, where $d'$ is a token dimension.
We define \emph{unnormalized} attention $\opn{Attn}_{M,\kappa}:\mbb{R}^{d'\times (n+1)}\to \mbb{R}^{d'\times (n+1)}$ as
\begin{align}\label{eqn:std attn}
    \opn{Attn}_{M,\kappa}(Z;V,K,Q) := VZh_{M,\kappa}(KZ,QZ)
\end{align}
where \(M\in\{0,1\}^{(n+1)\times(n+1)}\) is a binary mask, and \(h_{M,\kappa}:\mbb{R}^{d'\times(n+1)}\times\mbb{R}^{d'\times(n+1)}\to\mbb{R}^{(n+1)\times(n+1)}\) satisfies $[h_{M,\kappa}(KZ,QZ)]_{ij}=\kappa(Kz_i,Qz_j)M_{ij}$ for a positive semidefinite kernel $\kappa:\mbb{R}^{d'}\times\mbb{R}^{d'}\to\mbb{R}$.
Equivalently, the $j$th output token is
\begin{align*}
    [\opn{Attn}_{M,\kappa}(Z;V,K,Q)]_{:,j}
=\sum_{i=1}^{n+1}\kappa(Kz_i,Qz_j)M_{ij}Vz_i.
\end{align*}
Examples of kernels include the \emph{linear} kernel \(\kappa(x,x')=x^\top x'\) and the \emph{radial basis function (RBF)} kernel \(\kappa(x,x')=\exp(-\|x-x'\|^2/2)\).

\textbf{Attention computes $\mu(Z)$ and $\tau(Z)$ in-context.}
The attention block in \eqref{eqn:tabPFN attn} can be configured to simultaneously carry out both recursions in parallel at each layer.
The decay term ($-\eta^{(l)}\sigma^2$) in \eqref{eqn:recursion} is implemented via the additional learnable connection, while the remaining term is realized by self-attention over the context tokens.
Consequently, the transformer inherits the same convergence rate as the underlying Richardson iteration, which is exponential in $L$.

\begin{theorem}[Attention computes PPD moments]\label{thm:TFdoPPD}
    Define $\opn{TF}_{\vartheta,L}(Z^{(0)}):= W[Z^{(L)}]_{:,n+1}$ as
    \begin{align}
        Z^{(0)} &= 
        \begin{bmatrix}
            X^\top & x_{n+1}\\
            Y^\top & 1\\
            0_{3\times n} & 0_{3}
        \end{bmatrix}
        \in \mbb{R}^{(d+4)\times (n+1)}\nonumber\\
        Z^{(1)} &= Z^{(0)} + \opn{Attn}_{M^{(0)},\kappa}(Z^{(0)};V^{(0)},K^{(0)},Q^{(0)})\label{eqn:1st attn}\\
        \begin{split}
            Z^{(l+1)} &= Z^{(l)} + \opn{Attn}_{M,\kappa}(Z^{(l)};V^{(l)},K^{(l)},Q^{(l)})\\&+S^{(l)}Z^{(l)},\quad l=1,\cdots, L-1,\label{eqn:l attn}
        \end{split}
    \end{align}
    where $M^{(0)}_{i,j}=1_{\{i>n\}}$, $M_{i,j} =1_{\{i\leq n\}}$ are attention masks and $W\in \mbb{R}^{2\times (d+4)}$ is a readout matrix.
    Then there exists $\vartheta$, a set of all parameters (given in \eqref{supp-eqn:weights}), such that 
    \begin{align*}
        \left\|\opn{TF}_{\vartheta,L}-(\mu,\tau)^\top\right\|_\infty
        \lesssim
        \exp({-(1-\rho) L}).
    \end{align*}
    where $\rho:=1-\eta(\lambda_{n}(G)+\sigma^2) \in (0,1)$ is the convergence factor, and $\eta$ is the step size encoded in $V^{(l)}$ and $S^{(l)}$.
\end{theorem}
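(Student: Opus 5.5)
We give an explicit weight construction and then reduce the error bound to the convergence of the Richardson recursion \eqref{eqn:recursion}.

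\textbf{Token layout.} The $d+4$ rows of each token are used as follows:
\begin{itemize}
\item rows $1{:}d$ hold $x_j$ and are never modified, so the attention kernel always sees the original features;
\item row $d+1$ holds the label $y_j$ for $j\le n$ and the constant $1$ for the query;
\item row $d+2$ will hold $k_j:=\kappa(x_j,x_{n+1})$ for $j\le n$;
\item rows $d+3$ and $d+4$ store the running iterates $-u^{(l)}_Y(\cdot)$ and $-u^{(l)}_{k}(\cdot)$ for the two right-hand sides $v=Y$ and $v=k_x$.
\end{itemize}

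\textbf{First layer: loading $k_x$.} Layer \eqref{eqn:1st attn} uses the mask $M^{(0)}$, under which every token attends only to the query token $n+1$. Choose $K^{(0)},Q^{(0)}$ to project onto the $x$-coordinates. Choose $V^{(0)}$ to copy row $d+1$ of the query (the constant $1$) into row $d+2$. Then token $j$ receives $\kappa(x_{n+1},x_j)\cdot 1$ in row $d+2$, so the context tokens carry $k_j$. The query token also receives $\kappa(x_{n+1},x_{n+1})$ there; we record this value, since it is exactly the $\kappa(x,x)$ needed in $\tau$.

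\textbf{Layers $l\ge1$: one Richardson step each.} These layers use the mask $M$, so only context tokens serve as keys. Keep $K,Q$ projecting onto $x$. Set $V^{(l)}$ so that it maps
\[
(y_i,\ k_i,\ -u_Y(x_i),\ -u_k(x_i)) \;\mapsto\; -\eta\,(y_i-u_Y(x_i),\ k_i-u_k(x_i))
\]
into rows $d+3$ and $d+4$. Set $S^{(l)}$ to multiply rows $d+3$ and $d+4$ by $-\eta\sigma^2$. With the residual connection, the update in row $d+3$ becomes
\[
-u^{(l+1)}(x_j) = -(1-\eta\sigma^2)u^{(l)}(x_j) - \eta\sum_{i\le n}\kappa(x_i,x_j)(v_i-u^{(l)}(x_i)),
\]
which is \eqref{eqn:recursion} with constant step size $\eta$. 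The same formula holds at the query column, with $x_j$ replaced by $x_{n+1}$. Two points need a short induction on $l$ (this is the bookkeeping step):
\begin{itemize}
\item rows $1{:}d+2$ are untouched for $l\ge1$, because $V^{(l)}$ and $S^{(l)}$ write only into rows $d+3{:}d+4$;
\item the query's constant $1$ and its value $\kappa(x,x)$ in row $d+2$ do not contaminate the iterates, because the mask $M$ excludes the query as a key.
\end{itemize}

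\textbf{Readout.} Choose $W$ so that the first output is $-[Z^{(L)}]_{d+3,n+1}$, which equals $u^{(L-1)}_Y(x)$. The second output is $[Z^{(L)}]_{d+2,n+1}+\sigma^2+[Z^{(L)}]_{d+4,n+1}$, which equals $\kappa(x,x)+\sigma^2-u^{(L-1)}_k(x)$. The constant $\sigma^2$ has no token to multiply, so we supply it by adding $\sigma^2$ to row $d+2$ of the query in the first layer (a spare column of $V^{(0)}$ times the query's constant entry). Alternatively, we could change the query's initial value slightly.

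\textbf{Error bound.} Let $A=G+\sigma^2I$ and take $\eta\le 1/(\lambda_1(G)+\sigma^2)$. Then $e^{(l)}=u_X^{(l)}-u_X^\ast$ satisfies $e^{(l+1)}=(I-\eta A)e^{(l)}$. The matrix $I-\eta A$ has spectral norm $1-\eta(\lambda_n(G)+\sigma^2)=\rho$, so $\|e^{(l)}\|_2\le\rho^l\|u^\ast_X\|_2$. For the query, subtract the fixed-point identity from the recursion:
\[
u^{(l+1)}(x)-u^\ast(x)=(1-\eta\sigma^2)\bigl(u^{(l)}(x)-u^\ast(x)\bigr)-\eta k_x^\top e^{(l)}.
\]
Unrolling gives the bound
\[
|u^{(L)}(x)-u^\ast(x)|\le \sum_m (1-\eta\sigma^2)^{L-1-m}\,\eta\|k_x\|\,\rho^m+(1-\eta\sigma^2)^L|u^\ast(x)|.
\]
Since $1-\eta\sigma^2\ge\rho$, this sum is controlled by $(1-\eta\sigma^2)^L$ rather than $\rho^L$. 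This is the main obstacle. I would address it by using the closed form $u^{(l)}(x)=k_x^\top\sum_{m<l}\eta(I-\eta A)^m v$, verified by induction on the recursion. The error is then $-k_x^\top(I-\eta A)^l A^{-1}v$, whose magnitude is at most $\|k_x\|\rho^l\|A^{-1}v\|$. Finally, use $\rho^L=\exp(L\log\rho)\le\exp(-(1-\rho)L)$. The implied constant depends on $\|k_x\|$, $\|Y\|$ and $\sigma^{-2}$, and applying the bound for $v=Y$ and $v=k_x$ gives both coordinates in the $\|\cdot\|_\infty$ statement.
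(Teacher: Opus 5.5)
Your construction and error analysis are essentially the paper's. The weight pattern is the same; you store $-u$ where the paper stores $+u$ in rows $d+3,d+4$, which is cosmetic. The error analysis uses the same closed form $u^{(l)}(x)-u^\ast(x)=-k_x^\top(I-\eta A)^{l}A^{-1}v$. The paper obtains it in Theorem~\ref{supp-thm:KRR} by a telescoping identity rather than your induction, and both give the bound $\|k_x\|_2\,\rho^{l}\,\|v\|_2/(\lambda_n(G)+\sigma^2)$.

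The one step that fails as written is the readout of $\sigma^2$, and its premise is wrong. The query column's row $d+1$ is never modified: $V^{(0)}$ writes only row $d+2$, and $V^{(l)},S^{(l)}$ write only rows $d+3,d+4$. So $[Z^{(L)}]_{d+1,n+1}=1$, and you simply set $W_{2,d+1}=\sigma^2$, exactly as the paper does.

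Your proposed workaround would actually break the construction. In the first layer every column $j$ receives $\kappa(x_{n+1},x_j)\,V^{(0)}z_{n+1}$, and the only constant entry of $z_{n+1}$ is in row $d+1$. Raising row $d+2$ of $V^{(0)}z_{n+1}$ to $1+\sigma^2$ would therefore turn each context entry into $(1+\sigma^2)k_j$, corrupting the right-hand side $v=k_x$. It would also make the query's entry $(1+\sigma^2)\kappa(x,x)$, which is not $\kappa(x,x)+\sigma^2$ in general. Changing the query's initial value is not available either, since $Z^{(0)}$ is fixed by the statement.
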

Thus, $\opn{TF}_{\vartheta,L}$ functions as an approximate solver with a finite iteration budget $L$.
This observation has direct implications for context size generalization, which we detail in Section~\ref{sec:generalization}.

\begin{remark}[Extension to hierarchical GPs]
The construction in Theorem~\ref{thm:TFdoPPD} can be potentially extended to fully Bayesian GP regression; see Section~\ref{supp-sec:exp additional} for related experiments. 
Specifically, when the resulting PPD is a finite mixture of componentwise GP predictive distributions, one can enlarge the token dimension and use multi-head attention so that each head runs the Richardson-style recursion from Theorem~\ref{thm:TFdoPPD} for one component. 
This would compute the componentwise predictive moments, while the mixture weights could, in principle, be represented by an additional readout module. 
A full treatment of this hierarchical extension is left for future work.
\end{remark}

\begin{figure*}[h]
    \centering
    \includegraphics[width=0.95\textwidth]{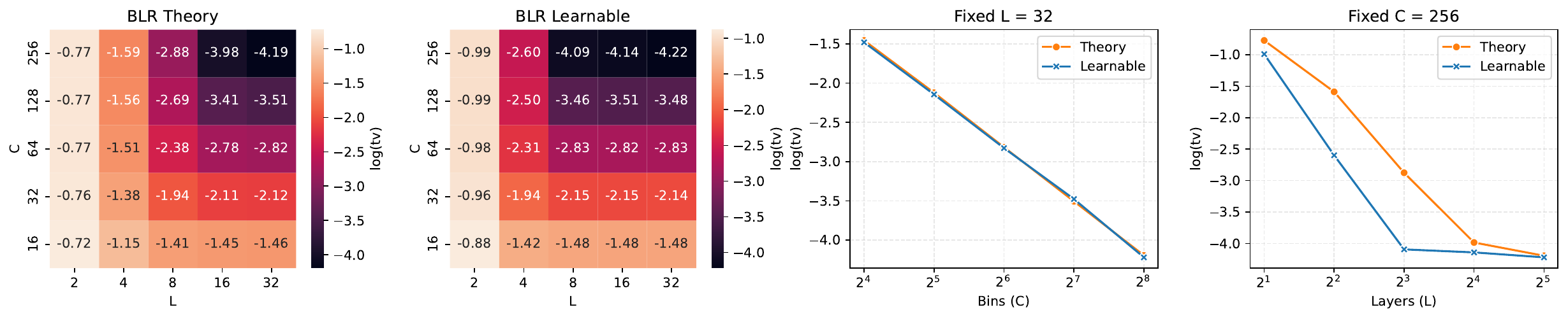}
    \includegraphics[width=0.95\textwidth]{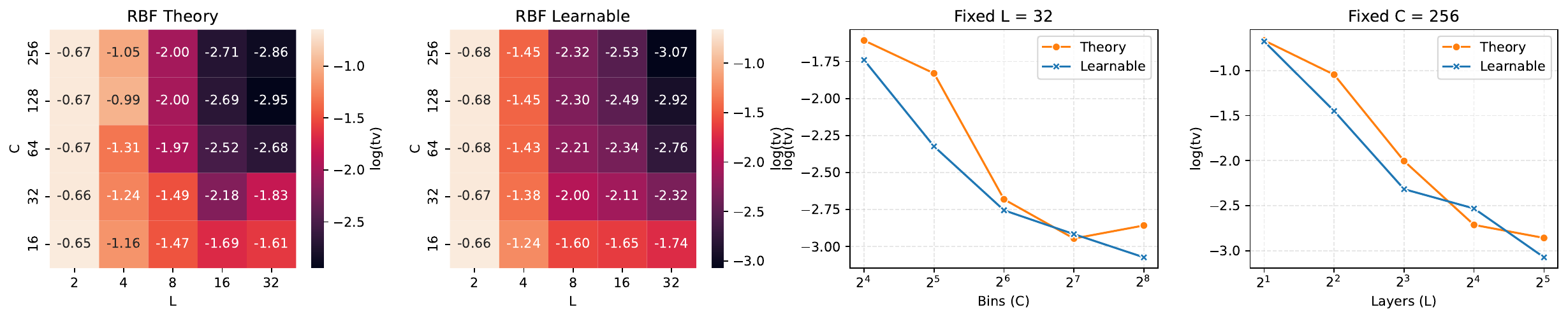}
    \caption{\textbf{Verifying Theorem~\ref{thm:PPD convergence}.} $\log \mrm{E}[\opn{TV}(p_{(a,b]},q_\vartheta)]$ on the evaluation set as a function of depth $L\in\{2,4,8,16,32\}$ and bin count $C\in\{16,32,64,128,256\}$, for Bayesian linear regression (BLR) ($d=5$, $n\in[128,512]$) and GP regression with RBF kernel ($d=2$, $n\in[64,128]$). Left: heatmaps comparing \emph{theory} and \emph{learnable} parameterizations. Right: slices at fixed $L=32$ (varying $C$) and fixed $C=256$ (varying $L$), showing improvement with depth and bin resolution in both tasks.}
    \label{fig:exp1}
\end{figure*}

\section{From Moments to PPD}
The MLP head in \eqref{eqn:tabPFN mlp} can map the moments computed by $\opn{TF}_{\vartheta,L}$ to a density.
In particular, a shallow MLP followed by a softmax is naturally suited to density approximation via discretization.
To make this concrete, consider a one-dimensional exponential family density $f_\theta$ on $\mcl{Y}\subseteq\mbb{R}$ with parameter $\theta\in\Theta\subseteq\mbb{R}^{p}$:
$$
f_\theta(y)=\exp \big(\langle \psi(\theta),T(y)\rangle - A(\theta)\big)h(y),
$$
where $\psi:\Theta\to\mbb{R}^{p'}$ is the natural parameter map, $T:\mcl{Y}\to\mbb{R}^{p'}$ is the sufficient statistic map, $A(\theta)$ is the log normalizing constant, and $h$ is the base density.
For example, for a Gaussian $\mathcal N(\mu,\tau)$, one may take $\psi(\mu,\tau)=\bigl(\mu/\tau,\,-1/(2\tau)\bigr)$ and $T(y)=(y,y^2)$.
We further make $h$ a constant by expanding $\psi$ and $T$ to absorb the non-constant term in $h$ if any; see Section~\ref{supp-sec:thrm4.1} for details.

\textbf{MLP for natural parameter conversion.}
Let $\mcl{K}\subset\Theta$ be compact. 
By the universal approximation theorem
\citep{hornik1991approximation,leshno1993multilayer}, for any $\delta_{\opn{MLP}}>0$, there exists a one-hidden-layer MLP $\tilde\psi:\mcl{K}\to\mbb{R}^{p'}$ of the form
\[
\tilde{\psi}(\theta)=\tilde W_2\,\opn{act}(\tilde W_1\theta+\tilde h_1)+\tilde h_2,
\]
with width $C'$ and a non-polynomial activation $\opn{act}$ (e.g., ReLU), such that $\sup_{\theta\in\mcl{K}}\|\psi(\theta)-\tilde\psi(\theta)\|_2\le \delta_{\opn{MLP}}$.

\textbf{Softmax discretization.}
Let $\Gamma=\{\gamma_c\}_{c=1}^{C+1}$ be an equidistant partition of $(a,b]\subset\mcl{Y}$, with midpoints
$\xi_c:=(\gamma_c+\gamma_{c+1})/2$.
Define a readout matrix $\Xi\in\mbb{R}^{C\times p'}$ by $\Xi_{c,:}=T(\xi_c)^\top$. 
Then $\Xi\tilde\psi(\theta)$ is still an MLP output, corresponding to $W_2 = \Xi \tilde{W}_2$, $W_1=\tilde{W}_1W, h_1 = \tilde{h}_1$, $h_2 = \Xi\tilde{h}_2$ in expression \eqref{eqn:tabPFN mlp} for GP regression, and
$
(\Xi\tilde\psi(\theta))_c = \inp{\tilde\psi(\theta)}{T(\xi_c)}
$
becomes the inner product between the natural parameter and the sufficient statistics evaluated at $\xi_c$.
Moreover, softmax normalization removes the additive constant $A(\theta)$ in the logits.
Thus, with a constant $h$, the piecewise-constant density
\[
\tilde f_\theta(y)
:=\sum_{c=1}^C \frac{\mathbf{1}_{y\in(\gamma_c,\gamma_{c+1}]}}{(b-a)/C}
\opn{sm}\left(\Xi\,\tilde\psi(\theta)\right)_c
\]
constitutes a midpoint discretization that can approximate $f_\theta$ arbitrarily well on $(a,b]$ as $C\to\infty$ (up to the MLP error; see Lemma~\ref{supp-lem:smax quad}).

Putting the pieces together, deeper attention depths ($L$) control the convergence of the moments computation recursion, while larger $C$ controls the discretization error.
The following theorem summarizes the resulting approximation guaranty (up to truncation mass outside $(a,b]$); see Figure~\ref{fig:exp1}.
\begin{theorem}[Transformer PPD approximation]\label{thm:PPD convergence}
Under the regularity conditions stated in Section~\ref{supp-sec:thrm4.1},
there exists $\vartheta$ such that, for any context $Z^{(0)}$, the total variation distance between the true posterior predictive
$p_n(\cdot\mid Z^{(0)})$ and the Transformer output $q_{\vartheta}(\cdot\mid Z^{(0)})$ satisfies
\[
\opn{TV}\left(p_n, q_{\vartheta}\right)
 \lesssim e^{-(1-\rho) L} + \delta_{\opn{MLP}} + \frac{1}{C} + \varepsilon_{\text{tail}}(Z^{(0)}),
\]
where $\rho\in(0,1)$ is the convergence factor of the attention block and
$\varepsilon_{\text{tail}}(Z^{(0)}):=\mathbb P_{y\sim p_n(\cdot\mid Z^{(0)})}\!\big(y\notin(a,b]\big)$.
\end{theorem}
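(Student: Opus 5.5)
The plan is to bound $\opn{TV}(p_n,q_\vartheta)$ by a chain of intermediate densities and handle each link separately. Write $\theta=(\mu,\tau)=(\mu(Z^{(0)}),\tau(Z^{(0)}))$ for the exact moments and $\hat\theta=\opn{TF}_{\vartheta,L}(Z^{(0)})$ for the attention output of Theorem~\ref{thm:TFdoPPD}. Let $p_{(a,b]}$ denote $p_n$ truncated to $(a,b]$ and renormalized. The transformer output is $q_\vartheta=\tilde f_{\hat\theta}$, the softmax-discretized density built from the MLP $\tilde\psi$. The triangle inequality then gives
\[
\opn{TV}(p_n,q_\vartheta)\le \opn{TV}(p_n,p_{(a,b]})+\opn{TV}(p_{(a,b]},\tilde f_{\theta})+\opn{TV}(\tilde f_{\theta},\tilde f_{\hat\theta}).
\]
The first term is a routine computation and is at most $\varepsilon_{\text{tail}}(Z^{(0)})$ up to a constant. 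For the second term I would insert the midpoint-discretized density built from the exact natural parameter $\psi(\theta)$, and control it through Lemma~\ref{supp-lem:smax quad}. That lemma gives an $O(1/C)$ midpoint-rule error when $\log f_\theta$ is Lipschitz on $(a,b]$, plus an $O(\delta_{\opn{MLP}})$ error from replacing $\psi$ by $\tilde\psi$.

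The MLP error transfers through a simple estimate. If two logit vectors differ by at most $\Delta$ in sup norm, their softmax distributions differ in TV by $O(\Delta)$ (for instance via $\|\nabla\log\opn{sm}\|$, or because the ratio of probabilities is bounded by $e^{2\Delta}$). The piecewise-constant densities share the same bins, so their TV equals the TV of the softmax vectors. The logit perturbation is $|\langle \psi(\theta)-\tilde\psi(\theta),T(\xi_c)\rangle|\le \delta_{\opn{MLP}}\sup_{y\in(a,b]}\|T(y)\|_2$, which is bounded because $(a,b]$ is compact. This yields the $\delta_{\opn{MLP}}$ term.

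The third term carries the depth dependence. Its logits differ by $\langle\tilde\psi(\hat\theta)-\tilde\psi(\theta),T(\xi_c)\rangle$. The function $\tilde\psi$ is a one-hidden-layer ReLU network, hence Lipschitz with constant $\|\tilde W_2\|\|\tilde W_1\|$. Combining this with the softmax estimate above and Theorem~\ref{thm:TFdoPPD} gives a bound of order $\|\hat\theta-\theta\|\lesssim e^{-(1-\rho)L}$.

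I expect this step to be the main obstacle. The MLP guarantee holds only on the compact set $\mcl{K}$, so I need $\hat\theta\in\mcl{K}$. The regularity conditions should supply this: $\tau$ is bounded below by $\sigma^2$ and above by $\sup\kappa+\sigma^2$, and $|\mu|$ is bounded on bounded contexts. I would therefore choose $\mcl{K}$ as a closed $r$-neighborhood of the set of true moments. For $L$ large enough that $e^{-(1-\rho)L}<r$ we then have $\hat\theta\in\mcl{K}$; for small $L$ the bound holds trivially because TV is at most $1$. A second subtlety is that the Lipschitz constant of $\tilde\psi$ may grow as $\delta_{\opn{MLP}}\to0$. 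To avoid this, I would fix $\tilde\psi$ first and absorb its Lipschitz constant into the implicit constant of $\lesssim$. Alternatively, one can bound $\|\psi(\hat\theta)-\psi(\theta)\|$ by the smoothness of $\psi$ on $\mcl{K}$ and pay $2\delta_{\opn{MLP}}$ for the two MLP evaluations, which avoids any dependence on the Lipschitz constant of $\tilde\psi$. Summing the three bounds gives the theorem.
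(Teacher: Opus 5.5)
Your proposal is correct and is essentially the paper's argument. The paper chains the logits $\Xi\tilde\psi(\hat\theta)$, $\Xi\tilde\psi(\theta)$, $\Xi\psi(\theta)$ using the Lipschitz constant of $\tilde\psi$ (absorbed into $\lesssim$), Theorem~\ref{thm:TFdoPPD} and the MLP error; it turns the combined logit gap $\Delta$ into $\|p_1-p_3\|_1\le e^{2\Delta}-1$ by the same softmax-ratio bound, and handles truncation and binning with Lemmas~\ref{supp-lem:discretization error} and~\ref{supp-lem:smax quad}. You need the former too, for the bin-averaging step, because the latter only compares the bin-averaged density with its softmax midpoint version.

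Two of your extras need perspective. Your row-wise bound $\sup_{y\in(a,b]}\|T(y)\|_2$ on the logit perturbation improves on the paper, whose $\|\Xi\|_2$ grows like $\sqrt{C}$ and hides a $C$-dependence in the constants. The requirement $\hat\theta\in\mathcal{K}$ is not needed on your main route, since the ReLU network $\tilde\psi$ is Lipschitz on all of $\mathbb{R}^2$; it only matters for your alternative route through the smoothness of $\psi$.
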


Theorem~\ref{thm:PPD convergence} makes explicit how architectural choices relate to approximation accuracy.
In practice, the attention stack is often the main architectural component of interest, whereas the truncation interval, $C$, and the MLP width are typically chosen large enough \citep{muller2022transformers, hollmann2025accurate}. 
In that regime, the attention mechanism is the primary bottleneck, a point developed in the next section. 
Theorem~\ref{thm:PPD convergence} also justifies the practice of interpreting the final MLP layer as binned probabilities in PFNs; the MLP weights are likely closely connected to the sufficient statistics of each bin needed to represent the target density.

\section{Generalizing Beyond Pretrain Sample Sizes}
\label{sec:generalization}
In this section, we provide an explanation based on our construction for what drives context size $n$-generalization---i.e., retaining reliable point prediction and uncertainty quantification beyond the pretraining sample size range---and what ultimately limits it beyond computational cost.
We cast $n$-generalization as requiring an iterative solver that remains stable as the dimension of the underlying system varies with $n$, and we show that attention normalization promotes generalization by preconditioning. 
We then show that, even with preconditioning, generalizing to substantially larger $n$ becomes intrinsically more demanding and explain how deeper attention is essential to mitigate this difficulty.

In practice, PFN models are pretrained on varying $n$ by minimizing the loss over a \emph{pretraining range} $n\in[n_{\min},n_{\max}]$:
\begin{align*}
    \vartheta=\arg\min_\vartheta -\mrm{E}_n\mrm{E}_{y,Z^{(0)}}\log q_\vartheta(y\mid Z^{(0)}),
\end{align*}
where the expectation over $n$ is taken with respect to a distribution on $\{n_{\min},\dots,n_{\max}\}$ used in pretraining, such as discrete uniform. 
Models trained this way often exhibit nontrivial generalization beyond the pretraining range, although for sufficiently large $n$, they are sometimes outperformed by non-ICL supervised methods \citep{hollmann2023tabpfn, muller2025position}.
This $n$-generalization problem is pressing because the pretraining range is constrained by computational cost, yet a single pretrained network is routinely evaluated at test time on context sizes $n$ outside this range.
While several works propose strategies to scale PFNs to larger samples \citep{feuer2024tunetables, wang2025prior, qu2025tabicl}, a mechanistic understanding of which architectural components drive $n$-generalization in PFNs—especially for both point prediction and PPD approximation—remains limited.

\begin{figure}
    \centering
    \includegraphics[width=\linewidth]{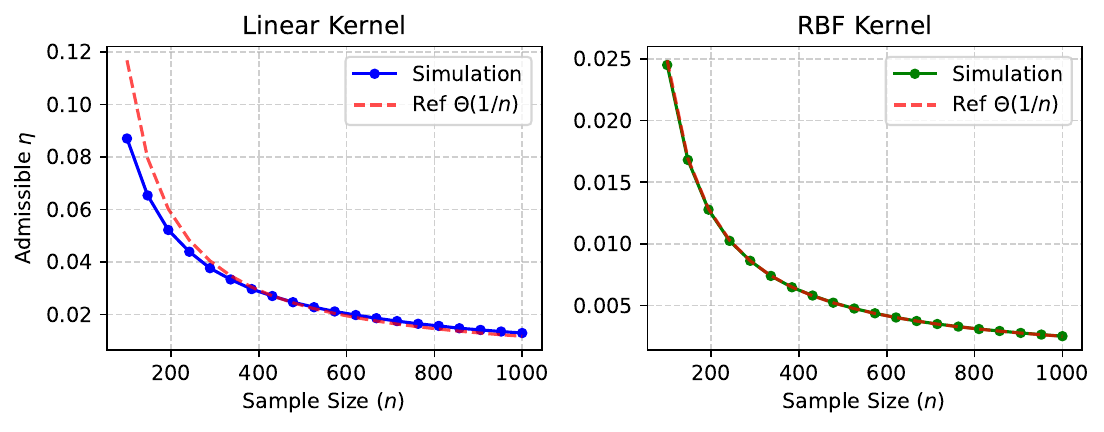}
    \caption{\textbf{Richardson step size varies with $n$ without normalization.} The upper bound of admissible step size of Richardson iteration solving \eqref{eqn:richardson}, computed as \eqref{eqn:convg cond}, plotted against $n\in\{100\times i:i\in[10]\}$ for $d=16$ and $\sigma^2=0.2$. 
    Results are averaged over $100$ independent trials.}
    \label{fig:stepsize}
\end{figure}

\textbf{Step size decreases with $n$.}
Our construction renders these mechanisms explicit because the attention block $\opn{TF}_{\vartheta,L}$ is designed to implement an iterative solver for the PPD moments.
It must realize an iterative solver---with a single shared $\vartheta$---that approximately solves
\begin{align}\label{eqn:richardson}
    (G+\sigma^2 I_n)u = Gv, \quad v\in \{Y,k_x\},
\end{align}
for any $n$ within a fixed iteration $L$.
If the learned parameters exactly match the data-generating parameters, the iteration converges as $L\to\infty$ for every $n$.
However, $L$ is finite, so convergence is not guaranteed uniformly over $n$.
Therefore, $\vartheta$ becomes tuned to the spectrum of $G$ typical for $n\in[n_{\min},n_{\max}]$.

For simplicity, assume a shared step size $\eta^{(l)}= \eta$.
The convergence of the recursion underlying $\opn{TF}_{\vartheta,L}$ requires
\begin{align}\label{eqn:convg cond}
    0 < \eta < 2/\{\lambda_{1}(G) + \sigma^2\}
\end{align}
(see Section~\ref{supp-sec:richardson}).
Under the Gaussian design, for both linear and RBF kernels, the largest eigenvalue of $G$ grows linearly with $n$ as $\lambda_1(G)=\Theta(n)$ with high probability (see Lemma~\ref{supp-lem:lienar kernel Gram}-\ref{supp-lem:rbf kernel Gram}).
The step size theorem then follows, as illustrated in Figure~\ref{fig:stepsize}.
\begin{theorem}\label{thm:lambda1}
    With $x_i \stackrel{\mathrm{iid}}{\sim }\mathcal N(0, I_d/d)$ for fixed $d$,
    linear and RBF $\kappa$ yield a Richardson step size bound of order $1/n$.
\end{theorem}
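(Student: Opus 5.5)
The plan is to reduce the statement to showing that $\lambda_1(G)=\Theta(n)$ with high probability for both kernels. The admissible step size is governed by \eqref{eqn:convg cond}, i.e.\ by $2/\{\lambda_1(G)+\sigma^2\}$. If $c\,n\le\lambda_1(G)\le C\,n$ with high probability, then $2/(\lambda_1(G)+\sigma^2)\asymp 1/n$, since $\sigma^2$ is fixed and negligible against $n$. That is exactly the claimed bound of order $1/n$. The lemmas cited just before the statement (Lemmas~\ref{supp-lem:lienar kernel Gram}--\ref{supp-lem:rbf kernel Gram}) provide this, but I would verify both directions directly as follows.

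\textbf{Linear kernel.} Here $G=XX^\top$, so $\lambda_1(G)=\lambda_1(X^\top X)=\|\sum_i x_ix_i^\top\|_{\mathrm{op}}$. Since $\mathrm E[x_ix_i^\top]=I_d/d$, a standard sub-Gaussian covariance concentration bound (matrix Bernstein, or $\epsilon$-net) gives
\[
\Bigl\|\tfrac1n X^\top X - I_d/d\Bigr\|_{\mathrm{op}}\lesssim \sqrt{d/n}+d/n+\sqrt{\log(1/\delta)/n}
\]
with probability $1-\delta$. For fixed $d$ and $n$ large, this yields $\lambda_1(G)\in[n/(2d),\,2n/d]$.

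\textbf{RBF kernel.} For the lower bound I would use the Rayleigh quotient with $\mathbf 1/\sqrt n$:
\[
\lambda_1(G)\ge \tfrac1n\mathbf 1^\top G\mathbf 1=\tfrac1n\sum_{i,j}e^{-\|x_i-x_j\|^2/2}.
\]
This is $n$ times a U-statistic (plus the diagonal contribution $1$) whose mean is $\mathrm E\,e^{-\|x-x'\|^2/2}=(1+2/d)^{-d/2}>0$. Hoeffding's inequality for U-statistics with a bounded kernel then gives $\lambda_1(G)\ge c_d\, n$ with high probability. For the upper bound, the entries satisfy $0< G_{ij}\le 1$, so Gershgorin gives $\lambda_1(G)\le\max_i\sum_j G_{ij}\le n$ deterministically.

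\textbf{Main obstacle.} The only genuinely probabilistic steps are the two concentration arguments. The most delicate is the linear-kernel lower bound, which needs the smallest eigenvalue of the sample covariance to stay bounded away from zero. The top eigenvalue alone suffices, however, and already follows from $\lambda_1\ge \tfrac1d\operatorname{tr}(X^\top X)=\tfrac1d\sum_i\|x_i\|^2$, which concentrates around $n/d$ by the law of large numbers for $\chi^2$ variables. I would therefore use this trace bound to avoid matrix concentration for the lower bound. Matrix concentration would be kept only for the upper bound, or replaced there by $\lambda_1\le \operatorname{tr}=\sum_i\|x_i\|^2\approx n$.

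With both bounds in hand, $\lambda_1(G)=\Theta(n)$, and plugging into \eqref{eqn:convg cond} finishes the proof.
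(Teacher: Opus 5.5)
Your proof is correct. Its skeleton matches the paper's: reduce to $\lambda_1(G)=\Theta(n)$ and plug this into \eqref{eqn:convg cond}.

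For the RBF kernel you follow the paper almost exactly. You take the Rayleigh quotient at $\mathbf 1$, which reduces the lower bound to a U-statistic with mean $\mrm{E}\,\kappa(x,x')>0$. The only difference is that you control it with Hoeffding's inequality, giving a nonasymptotic high-probability bound, where the paper uses the strong law for U-statistics and gets an almost-sure one. Your Gershgorin upper bound is equivalent to the paper's $\lambda_1(G)\le\operatorname{tr}(G)=n$ from Lemma~\ref{supp-lem:rbf kernel Gram ridge}.

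The real difference is the linear kernel. The paper writes $X=Z/\sqrt d$ and combines Gaussian concentration of the $1$-Lipschitz maps $Z\mapsto s_j(Z)$ with the Davidson--Szarek bounds $\sqrt n-\sqrt d\le\mrm{E}\,s_d(Z)\le\mrm{E}\,s_1(Z)\le\sqrt n+\sqrt d$. This shows that all $d$ nonzero eigenvalues equal $(1+O_{\mbb P}(n^{-1/2}))\,n/d$. Your sandwich $\tfrac1d\sum_i\|x_i\|^2\le\lambda_1(G)\le\sum_i\|x_i\|^2$, with a law of large numbers, only places $\lambda_1(G)$ between about $n/d$ and $n$. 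It loses a factor $d$, gives no rate, and says nothing about the other nonzero eigenvalues. For fixed $d$, however, that is all the theorem asks. Your route also needs no random-matrix input and no Gaussianity beyond a finite second moment of the design. Via the strong law, it gives an almost-sure statement matching the RBF case. The paper's sharper form is what you would need to pin down the exact step-size scale $2d/n$ or the conditioning of the nonzero part of $G$.

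Your matrix-Bernstein display is loosely scaled: the deviation should carry the factor $\|I_d/d\|_{\mathrm{op}}=1/d$. Since you discard it in favor of the trace bound, this does not affect the proof.
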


A step size $\hat\eta$ tuned to the range $n\in[n_{\min},n_{\max}]$ must be small enough to satisfy \eqref{eqn:convg cond} at $n_{\max}$, yet large enough to yield fast convergence within the finite depth $L$.
Therefore, at inference, two failure modes could emerge:
for $n' \ll n_{\min}$, $\hat\eta$ becomes overly conservative, and the recursion \emph{under-converges} within $L$ steps. 
For $n' \gg n_{\max}$, $\hat\eta$ violates \eqref{eqn:convg cond}, and the recursion may \emph{diverge}; see Figure~\ref{fig:exp2-1} (left).

\begin{figure}[t]
    \centering
    \includegraphics[width=0.45\textwidth]{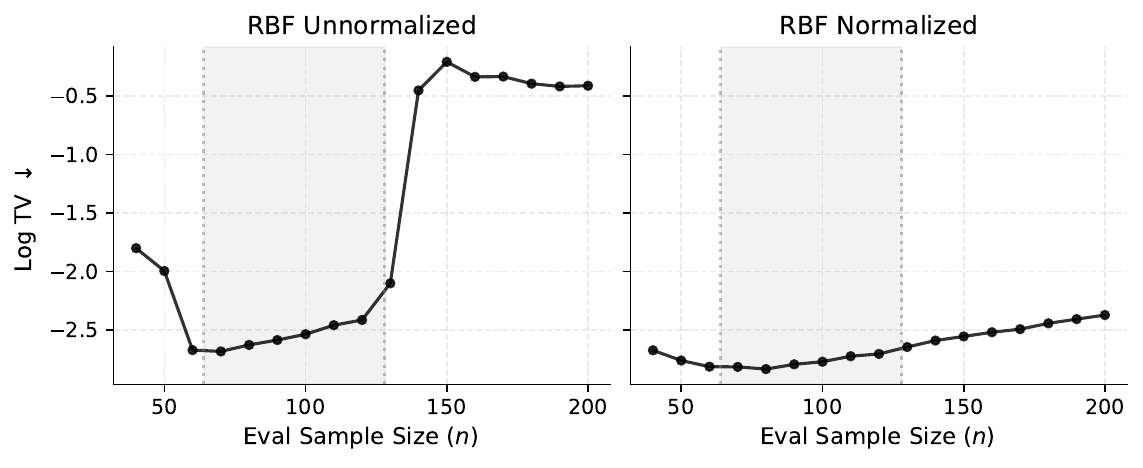}
    \caption{\textbf{Normalized attention enables sample size generalization (RBF, $d=8$).} 
    $\log \mrm{E}[\opn{TV}(p_{(a,b]},q_\vartheta)]$ versus evaluation sample size $n'\in\{40,50,\ldots,200\}$ for \emph{learnable} unnormalized $\opn{TF}_{\vartheta,L}$ (left) and normalized $\opn{TF}^{\opn{pr}}_{\vartheta,L}$ (right), pretrained on $n\in[64,128]$. 
    Normalization yields stable performance across $n'$.}
    \label{fig:exp2-1}
\end{figure}

\textbf{Preconditioning ensures $n$-robustness.}
A standard remedy is \emph{Jacobi (diagonal) preconditioning}, which controls the spectrum of the iteration matrix uniformly over $n$ \citep{cutajar2016preconditioning}. 
Specifically, left-multiplying \eqref{eqn:richardson} by the inverse row-sum matrix $D^{-1}$ yields
\begin{align}\label{eqn:richardson prec}
    D^{-1}(G+\sigma^2 I_n)u = D^{-1}Gv, \quad D=\opn{diag}(G1_n),
\end{align}
with the same fixed points as \eqref{eqn:richardson}.
As $D^{-1}G$ is a row stochastic matrix, the maximum eigenvalue lies in $\lambda_1\bigl(D^{-1}(G+\sigma^2I_n)\bigr)\in [1,1+\sigma^2]$ (see Lemma~\ref{supp-lem:rbf kernel Gram ridge}).
Consequently, a sufficient step size condition for convergence can be chosen independent of $n$. 
For the RBF kernel, a conservative range is $0 < \eta < 2/(1+\sigma^2)$.

\textbf{Normalized attention implements preconditioning.}
Jacobi preconditioning can be implemented with a minor modification of $\opn{TF}_{\vartheta,L}$.
The Richardson iteration for \eqref{eqn:richardson prec} can be written coordinate-wise as
\begin{align}\label{eqn:recursion prec}
    \begin{split}
        u^{(l+1)}_{\opn{pr}}(x)
        &=
        \left(1-\frac{\eta}{s_x}\sigma^2 \right) u^{(l)}_{\opn{pr}}(x)
        \\&+\frac{\eta}{s_x}\sum_{i=1}^n \kappa(x_i,x)\bigl(v_i-u^{(l)}_{\opn{pr}}(x_i)\bigr),
    \end{split}
\end{align}
where $s_x=\sum_{i=1}^n \kappa(x_i,x)$ is the kernel aggregate for $x$.
The preconditioner $s_x$ is readily obtained from \emph{normalized} attention $\opn{Attn}_{M,\kappa}^{\opn{na}}:\mbb{R}^{d'\times (n+1)}\to \mbb{R}^{d'\times (n+1)}$, where the (masked) attention weights are normalized by their sum:
\begin{align*}
    [\opn{Attn}^{\opn{na}}_{M,\kappa}(Z;V,K,Q)]_{:,j}
    &=\sum_{i=1}^{n+1}
    \frac{\kappa(Kz_i,Qz_j)M_{ij}}{s_j}Vz_i
\end{align*}
with $s_j = \sum_{i'=1}^{n+1}\kappa(Kz_{i'},Qz_j)M_{{i'}j}$.
For strictly positive kernels where this normalization yields nonnegative weights (e.g., RBF), we define $\opn{TF}^{\opn{pr}}_{\theta,L}$ by replacing each head $\opn{Attn}_{M,\kappa}$ in \eqref{eqn:l attn} with $\opn{Attn}^{\opn{na}}_{M,\kappa}$ and applying tokenwise scaling $1/s_{x_j}$ to each token in the skip connection (the additive update implementing the $-\eta\sigma^2$ term), where $s_{x_j}$ is the attention normalizer for token $j$ (including the query token $x$).
Under the same derivation as in the proof of Theorem~\ref{thm:TFdoPPD}, $\opn{TF}_{\vartheta,L}^{\opn{pr}}$ implements \eqref{eqn:recursion prec}.
This change is a lightweight local reweighting using attention output, rather than an additional learned module that materially departs from standard transformer components.

\begin{figure*}[t]
    \centering
    \includegraphics[width=0.88\textwidth]{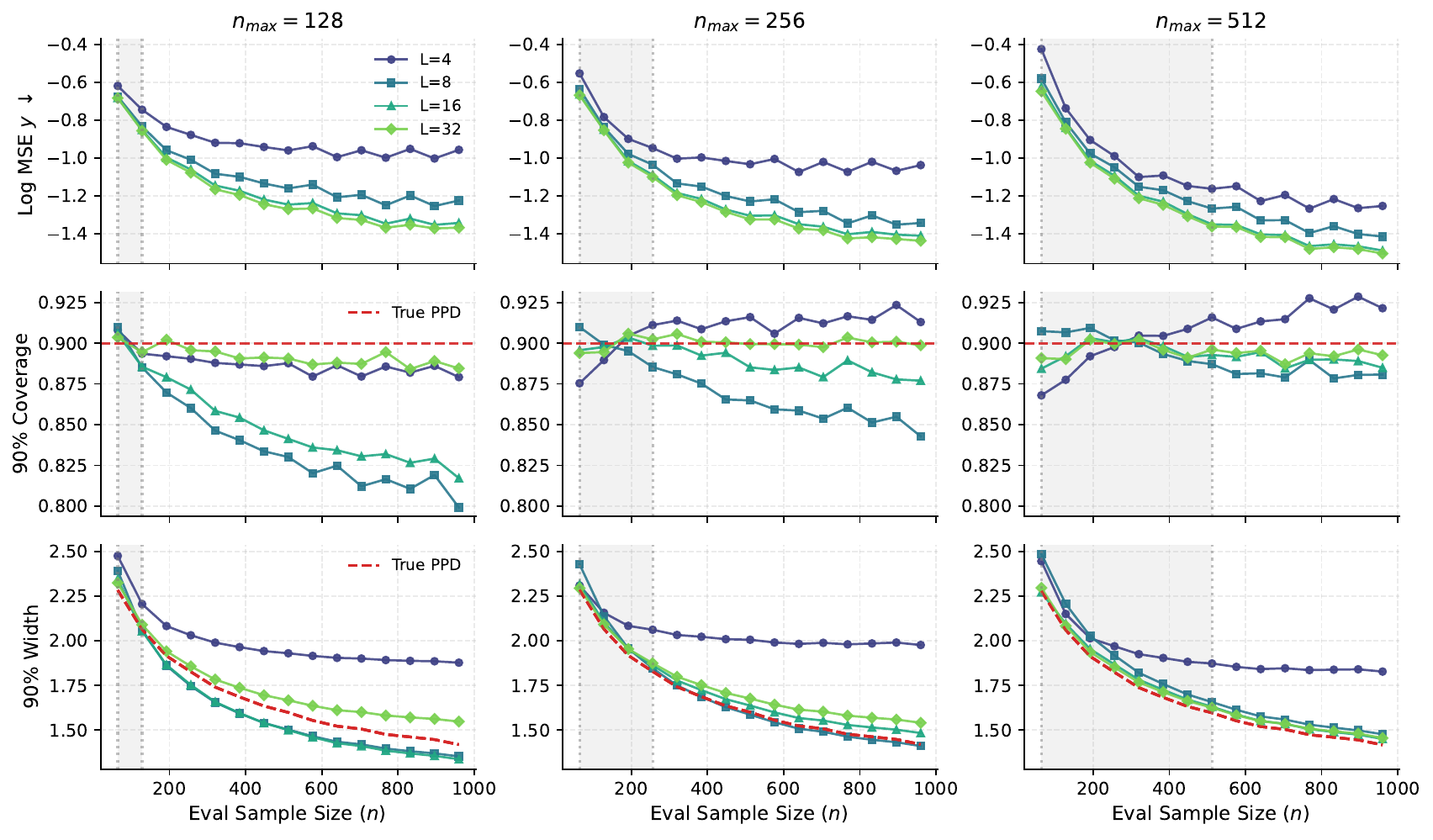}
    \caption{\textbf{Depth and pretraining range improve generalization quality (RBF, $d=16$).}
    Prediction MSE, $90\%$ interval coverage, and $90\%$ interval width versus evaluation sample size for \emph{learnable} normalized models with $C=256$, depths $L\in\{4,8,16,32\}$, and pretraining ranges $n\in[64,n_{\max}]$ with $n_{\max}\in\{128,256,512\}$. Red dashed curves denote the corresponding true PPD interval width/nominal coverage.}
    \label{fig:exp2-2}
\end{figure*}

\textbf{Larger $n$ demands more depth.}
Regardless of Jacobi preconditioning, achieving a fixed accuracy with a finite-depth iterative solver becomes harder as $n$ grows.
Under the optimal constant step size, the relative error of the Richardson iteration decays geometrically with the \emph{convergence factor}
\begin{align*}
    \rho^\ast = 1-2/\{\opn{cond}(G+\sigma^2I_n)+1\} \in (0,1),
\end{align*}
where $\opn{cond}(\cdot)$ denotes the condition number (see Section~\ref{supp-sec:richardson}).
A larger $\rho^\ast$ close to $1$ indicates slower error decay, requiring more iterations for a prescribed tolerance.

The $\rho^\ast$ captures the core difficulty of generalizing in $n$.
For the RBF kernel on $\mbb{R}^d$, the eigenvalues of the population integral operator decay geometrically \citep[Section~4.3.1.]{rasmussen2006gaussian}.
Using standard connections between the integral operator and the empirical Gram (e.g., \citet{burt19rates}), one can show that $G$ becomes increasingly ill-conditioned with $n$ (see Lemma~\ref{supp-lem:rbf kernel Gram}).
While a ridge term $\sigma^2I_n$ lifts small eigenvalues, it does not stop growth with $n$ (Lemma~\ref{supp-lem:rbf kernel Gram ridge}); Jacobi preconditioning similarly bounds the spectrum but leaves the order unchanged.
\begin{theorem}\label{thm:cond number}
    With $x_i \stackrel{\mathrm{iid}}{\sim }\mathcal N(0, I_d/d)$ for fixed $d$ and RBF $\kappa$,
    $\opn{cond}\bigl(D^{-1}(G+\sigma^2 I_n)\bigr) \stackrel{a.s.}{=} \Theta(n)$ for large $n$.
\end{theorem}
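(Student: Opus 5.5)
Since $D^{-1}(G+\sigma^2 I_n)$ is not symmetric, I will work with the similar matrix $A:=D^{-1/2}(G+\sigma^2 I_n)D^{-1/2}$. It has the same eigenvalues and is symmetric positive definite, so its condition number is the ratio $\lambda_1(A)/\lambda_n(A)$. (If $\opn{cond}$ is read as the ratio of singular values of the nonsymmetric matrix, I will bound the discrepancy by $\opn{cond}(D)^{1/2}$, which I will show is $O(1)$.)

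The first step is to control $D$. For the RBF kernel with $x_i\sim\mathcal N(0,I_d/d)$, each row sum $D_{jj}=\sum_i \kappa(x_i,x_j)$ satisfies $1\le D_{jj}\le n$. I will show that almost surely, for large $n$,
$$c_1 n\le \min_j D_{jj}\le \max_j D_{jj}\le n.$$
The difficulty is uniformity over $j$, because Gaussian tails let some $x_j$ drift far out. To handle this, I will use that $\max_j\|x_j\|^2=O(\log n/d)$ a.s. Then, by a strong law applied to $\frac1n\sum_i\kappa(x_i,x)$ uniformly over $x$ in a ball, via a Glivenko–Cantelli argument for the Lipschitz class $\{\kappa(\cdot,x)\}$, I get $D_{jj}/n\approx \mathrm E\,\kappa(x',x_j)$. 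This last quantity is $\gtrsim \exp(-\|x_j\|^2/2)$ up to constants depending on $d$. With $\|x_j\|^2=O(\log n)$ this gives only $D_{jj}\gtrsim n^{1-c}$.

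This is the step I expect to be the main obstacle. I will try first to dispense with uniformity by bracketing directly through Rayleigh quotients. Rather than bounding $D$ entrywise, I will compare $A$ to $(G+\sigma^2I)/n$ only through the eigenvalue bounds $\lambda_1(A)\in[1,1+\sigma^2]$ and a lower bound on $\lambda_n(A)$.

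For the upper bound on the condition number: Lemma~\ref{supp-lem:rbf kernel Gram ridge} gives $\lambda_1(A)\le 1+\sigma^2$. For any unit $w$,
$$w^\top A w\ \ge\ \sigma^2 w^\top D^{-1}w\ \ge\ \sigma^2/\max_j D_{jj}\ \ge\ \sigma^2/n.$$
Hence $\opn{cond}\le (1+\sigma^2)n/\sigma^2=O(n)$ deterministically, with no uniformity issue at all.

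For the lower bound: $\lambda_1(A)\ge1$, since $A$ has the Rayleigh vector $D^{1/2}1_n$, which gives quotient $\ge 1$. It remains to show $\lambda_n(A)=O(1/n)$. I will choose a test vector $w=D^{1/2}u$ with $u$ concentrated on the small-eigenvalue directions of $G$. Then
$$\frac{w^\top A w}{w^\top w}=\frac{u^\top(G+\sigma^2I)u}{u^\top D u}.$$
By Lemma~\ref{supp-lem:rbf kernel Gram}, $\lambda_n(G)\to 0$ (geometric eigen-decay in the Burt et al.\ comparison). So I take $u$ to be the bottom eigenvector of $G$, giving numerator $\le \lambda_n(G)+\sigma^2=O(1)$. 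For the denominator I need $u^\top D u\gtrsim n$, i.e.\ $u$ must put constant mass on indices with $D_{jj}\ge c n$. Here I only need a typical-point bound: a.s.\ a positive fraction of points lie in a fixed ball, where $D_{jj}\ge cn$ by the strong law.

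To secure this, instead of the exact bottom eigenvector I will restrict to the subspace of vectors supported on those $\Theta(n)$ central indices. The principal submatrix $G_S$ is again an RBF Gram matrix on $\Theta(n)$ points, so Lemma~\ref{supp-lem:rbf kernel Gram} applied to it gives a small eigenvalue. Its eigenvector $u$ then satisfies $u^\top D u\ge cn$. This yields $\lambda_n(A)\le C/n$, hence $\opn{cond}\ge cn$, and combining the two bounds gives $\Theta(n)$ almost surely.
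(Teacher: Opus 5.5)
Your main argument is correct and follows the paper's skeleton. You use the same similarity to $A=D^{-1/2}(G+\sigma^2I_n)D^{-1/2}$, the same bounds $\lambda_1(A)\in[1,1+\sigma^2]$ and $\lambda_n(A)\ge\sigma^2/s_{\max}\ge\sigma^2/n$, and a Rayleigh-quotient upper bound on $\lambda_n(A)$.

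The only real difference is the test vector in that last step, and the paper's choice is much simpler. In your notation it takes $u=e_{i^\ast}$ with $i^\ast=\arg\max_i D_{ii}$. The quotient is then $(G_{i^\ast i^\ast}+\sigma^2)/s_{\max}=(1+\sigma^2)/s_{\max}$, so all that is needed is $s_{\max}\gtrsim n$. That follows from the strong law at one fixed index, or from $s_{\max}\ge 1_n^\top G1_n/n$ and the U-statistic limit in Lemma~\ref{supp-lem:rbf kernel Gram}. The uniformity obstacle you anticipate therefore never arises. You need neither $\Theta(n)$ central indices, nor a uniform law of large numbers over a ball, nor any spectral input about $G_S$.

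Your appeal to Lemma~\ref{supp-lem:rbf kernel Gram} for $G_S$ is not literally valid. The indices in $S$ are selected using the data, so those points are not i.i.d.\ $\mathcal N(0,I_d/d)$, and $|S|$ is random. This is harmless, because $\lambda_{\min}(G_S)\le\opn{tr}(G_S)/|S|=1$, or simply a coordinate vector, already bounds the numerator by $1+\sigma^2$. What your subspace version could add, via Courant--Fischer on the span of the bottom half of the eigenvectors of $G_S$, is that a positive fraction of the eigenvalues of $A$ are $O(1/n)$. The single-vector argument does not show this, but it is not needed for the condition number.

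You should drop the parenthetical claim that $\opn{cond}(D)=O(1)$, because it is false. The most outlying sample point has $\|x_j\|^2\approx 2\log n/d$. Its row sum is therefore of order $n^{d/(d+1)}$ up to logarithmic factors, while $s_{\max}\asymp n$. So $\opn{cond}(D)$ grows like a power of $n$; this is exactly the non-uniformity you yourself noticed. Moreover, the naive similarity bound for the singular-value condition number loses a factor $\opn{cond}(D)$, not its square root. Your proposal therefore does not handle the singular-value reading of $\opn{cond}$. This does not affect the theorem as the paper intends it: Section~\ref{supp-sec:richardson} takes the condition number of $D^{-1}(G+\sigma^2 I_n)$ to be the eigenvalue ratio $\lambda_1/\lambda_n$ via the same similarity.
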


Thus, the convergence factor $\rho^\ast \to 1$ as $n$ grows, posing two key implications. 
First, the attention block---a solver with a single weight---must be trained on a sufficiently wide range (large $n_{\max}$) so that the solver remains stable over a broader range of spectra.
Second, because the convergence slows as $n$ increases, a deeper $L$ is essential to supply enough iteration budget to drive the solver error below a fixed tolerance. 
For the RBF kernel, its ill‑conditioning makes the required depth grow essentially linearly in $n$.
\begin{theorem}\label{thm:L growth}
    Suppose the weight $\vartheta$ in $\opn{TF}^{\opn{pr}}_{\vartheta,L}$ encodes the optimal constant step size for the Richardson solving \eqref{eqn:richardson prec}.
    For RBF $\kappa$, for a context $Z^{(0)}$ of size $n$, to achieve $\big\|\opn{TF}^{\opn{pr}}_{\vartheta,L}(Z^{(0)})-(\mu(Z^{(0)}),\tau(Z^{(0)}))^\top\big\|_\infty<\epsilon$ for $\epsilon>0$, it suffices that $L \gtrsim n\log(1/\epsilon)$ almost surely for large $n$.
\end{theorem}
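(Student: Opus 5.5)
The plan is to reduce the statement to the standard geometric error bound for Richardson iteration and then insert the condition-number growth from Theorem~\ref{thm:cond number}. By the same derivation as in Theorem~\ref{thm:TFdoPPD}, $\opn{TF}^{\opn{pr}}_{\vartheta,L}$ runs $L-1$ steps of \eqref{eqn:recursion prec} on the two systems $v\in\{Y,k_x\}$, with a first layer that only writes the initial tokens. So the readout error is bounded by the iteration error on the context coordinates, propagated to the query coordinate.

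\emph{Step 1: contraction factor.} Write $A=D^{-1}(G+\sigma^2I_n)$. It is similar to the symmetric positive definite matrix $D^{-1/2}(G+\sigma^2 I_n)D^{-1/2}$, so it has real positive eigenvalues and is diagonalizable with eigenbasis matrix $P=D^{-1/2}U$. With the optimal constant step $\eta^\ast=2/(\lambda_1(A)+\lambda_n(A))$, the error $e^{(l)}=u^{(l)}_X-u^\ast_X$ satisfies $e^{(l)}=(I-\eta^\ast A)^l e^{(0)}$. Hence
\[
\|e^{(l)}\|_2\le \kappa(P)\,(\rho^\ast)^l\|e^{(0)}\|_2,\qquad \rho^\ast=1-\frac{2}{\opn{cond}(A)+1},\qquad \kappa(P)=\sqrt{\max_i s_i/\min_i s_i}.
\]
For the RBF kernel, $s_i\le n$. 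For the lower bound, $s_i\ge 1$ from the diagonal term only; a Gaussian concentration argument gives the better bound $s_i\gtrsim n$ a.s.\ for large $n$. This is the same input used for Theorem~\ref{thm:cond number}. I would use $\kappa(P)=O(1)$, or at worst $O(\sqrt n)$; only a polynomial factor matters, and the $\sqrt n$ version costs at most a $\log n$ additive term in $L$.

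\emph{Step 2: query coordinate.} The query update in \eqref{eqn:recursion prec} is an affine function of $u^{(l)}_X$ with weights $\eta^\ast\kappa(x_i,x)/s_x$, which sum to at most $\eta^\ast$. It also carries a self-contraction $1-\eta^\ast\sigma^2/s_x\in[0,1)$. Unrolling the recursion gives
\[
|u^{(L)}(x)-u^\ast(x)|\le \eta^\ast\sum_{l<L}\|e^{(l)}\|_\infty \lesssim \frac{\kappa(P)\,\|e^{(0)}\|_2}{1-\rho^\ast}\,(\rho^\ast)^{L}.
\]
Here I use that the query's fixed point coincides with $u^\ast(x)$, since the preconditioner does not change fixed points. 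I then bound the initial error: $\|e^{(0)}\|_2=\|u^\ast_X\|_2\le\|v\|_2$, which is $O(\sqrt n)$ for $v=k_x$ and a.s.\ $O(\sqrt{n\log n})$ for $v=Y$. Together with $1/(1-\rho^\ast)=O(\opn{cond}(A))$, the prefactor is polynomial in $n$.

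\emph{Step 3: solve for $L$.} By Theorem~\ref{thm:cond number}, $\opn{cond}(A)=\Theta(n)$ a.s., so $\log(1/\rho^\ast)\ge 2/(\opn{cond}(A)+1)\gtrsim 1/n$. To make the error below $\epsilon$ for both the mean and the variance (the variance error equals the $v=k_x$ error), it suffices that
\[
L\gtrsim n\bigl(\log(1/\epsilon)+O(\log n)\bigr).
\]
The main obstacle is this $\log n$ term coming from the polynomial prefactors, since the statement claims $L\gtrsim n\log(1/\epsilon)$ with no extra term. I would first try to absorb it by reading $\|\cdot\|_\infty$-accuracy relative to the natural scale of the problem, i.e.\ as a relative error as in the paper's definition of $\rho^\ast$, which removes $\|v\|$. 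Failing that, I would treat $\epsilon$ as small relative to $1/n$, so that $\log(1/\epsilon)\gtrsim\log n$ and the term is absorbed into the implicit constant. The other delicate point is establishing the a.s.\ lower bound $s_i\gtrsim n$ uniformly over $i$, which keeps $\kappa(P)$ bounded. I would get it from a union bound with Hoeffding's inequality over the $n$ points, using that the RBF kernel is bounded below on a high-probability ball of radius $O(\sqrt{\log n})$.
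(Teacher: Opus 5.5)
Your route is the paper's: contraction of the preconditioned iteration at $\rho^\ast=1-2/(c_n+1)$, with $c_n=\Theta(n)$ from Theorem~\ref{thm:cond number}, and then solving $\rho^L\le\epsilon$ via $\log(1/\rho^\ast)\ge 2/(c_n+1)$. The paper's proof sets $\|\mu(Z^{(0)})\|_2=1$ without loss of generality and writes $\|u^{(L)}-u^\ast\|_2\le\rho^L$, which is your relative-error reading. It ignores the non-normality of $D^{-1}(G+\sigma^2I_n)$ and asserts that the query recursion shares the training recursion's convergence factor. So your prefactor bookkeeping and your $\log n$ caveat are legitimate. Note that only the reading $\epsilon\le n^{-c}$ removes all prefactors; relative error alone leaves your $\kappa(P)$ and query factors.

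The real problem is Step 2, which is wrong as written. Write $a_x:=1-\eta^\ast\sigma^2/s_x$, $e^{(l)}_X:=u^{(l)}_X-u^\ast_X$ and $e^{(l)}(x):=u^{(l)}(x)-u^\ast(x)$. Unrolling gives
\[
e^{(L)}(x)=a_x^L\,e^{(0)}(x)-\frac{\eta^\ast}{s_x}\sum_{l=0}^{L-1}a_x^{L-1-l}\,k_x^\top e^{(l)}_X .
\]
You dropped the first term. In the second, bounding $a_x^{L-1-l}\le 1$ leaves $\sum_{l<L}(\rho^\ast)^l\le 1/(1-\rho^\ast)$, which does not decay in $L$. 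The form $(\rho^\ast)^L/(1-\rho^\ast)$ is a tail sum and is valid only when $a_x=1$.

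The $a_x$-mode also does not cancel as in the closed form of Theorem~\ref{supp-thm:KRR}. Because of the division by $s_x$, $u^{(l)}(x)$ is no longer of the form $k_x^\top\alpha^{(l)}$. For example, with $n=1$ one has $\eta^\ast=1/(1+\sigma^2)$, $\rho^\ast=0$ and $e^{(l)}_X=0$ for $l\ge 1$. Yet $e^{(l)}(x)=a_x^{l-1}v_1(1-\kappa(x_1,x))/(1+\sigma^2)$.

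The repair is easy. With $r:=\max(a_x,\rho^\ast)$ and $|k_x^\top e^{(l)}_X|\le s_x\|e^{(l)}_X\|_\infty$,
\[
|e^{(L)}(x)|\le a_x^L|u^\ast(x)|+\eta^\ast\kappa(P)\|e^{(0)}_X\|_2\,L\,r^{L-1}.
\]
Moreover $1-a_x=\eta^\ast\sigma^2/s_x\ge\sigma^2/\{(1+\sigma^2)n\}$, since $s_x\le n$ and $\eta^\ast\ge 1/\lambda_1(D^{-1}(G+\sigma^2I_n))\ge 1/(1+\sigma^2)$. So $\log(1/r)\gtrsim 1/n$ as well. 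Step 3 then goes through with an extra $\log L=O(\log n+\log\log(1/\epsilon))$, still giving $L\gtrsim n\{\log(1/\epsilon)+O(\log n)\}$. You also need $s_x\ge\eta^\ast\sigma^2$ for $a_x\in[0,1)$. This holds a.s.\ for large $n$ for a fixed query, but fails for queries far from all context points.

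Separately, your claimed uniform bound $s_i\gtrsim n$ is false. The largest-norm sample has $\|x_i\|^2\approx 2\log n/d$, so $\mrm{E}_x\kappa(x_i,x)$ is of order $n^{-1/(d+1)}$. Hence $\min_i s_i$ is only of order $n^{d/(d+1)}$, up to logarithmic factors. On a ball of radius $O(\sqrt{\log n})$ the RBF kernel is bounded below only by $n^{-O(1)}$, so Hoeffding plus a union bound cannot deliver the claim. Drop it and use your fallback $\kappa(P)\le\sqrt n$ from $1\le s_i\le n$, which is all the argument needs.
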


\begin{remark}[Scalability of practical PFNs]
    Theorem~\ref{thm:L growth} gives a sufficient condition for a deliberately constrained construction with fixed geometry and a single step size. 
    By contrast, the \emph{learnable} transformer used in Section~\ref{sec:sims} is more flexible, although it preserves key structural features of the construction: the $Q,K$ maps remain diagonal, and the sparsity pattern is inherited from the Richardson-style iteration. 
    As shown in Section~\ref{sec:sims}, this additional flexibility allows generalization to improve not only with depth $L$ but also with a wider pretraining sample size range, which exposes the model to a broader range of task eigenspectra. 
    Thus, we view Theorem~\ref{thm:L growth} as clarifying the role of depth in a simplified setting rather than claiming that its scaling law governs all practical PFNs.
\end{remark}

\section{Numerical Studies}
\label{sec:sims}
We validate our theoretical claims on two regression tasks: Bayesian linear regression (BLR; linear kernel) and nonlinear RBF regression (RBF kernel). 
For the unnormalized transformer $\opn{TF}_{\vartheta,L}$ in Theorem~\ref{thm:TFdoPPD}, we consider two parameterization modes: \emph{theory}, where learnable weights are constrained as \eqref{supp-eqn:weights} and the nonzero entries of $K^{(l)}=Q^{(l)}$ are learned per layer, and \emph{learnable}, where all nonzero parameters are learned independently. 
For the normalized transformer $\opn{TF}^{\opn{pr}}_{\vartheta,L}$, we consider only the \emph{learnable} mode.

We focus on moderate dimensions, consistent with common GP practice: as $d$ grows, substantially more samples are typically needed for hyperparameter estimation, making training more costly and less stable. 
In GP-based surrogate modeling, applications are therefore often limited to $10$-$20$ dimensions unless additional structure is imposed \citep{frazier2018tutorial, moriconi2020high}.

\textbf{Pretraining.}
For each pretraining instance, we draw a sample size $n \sim \opn{Unif}[n_{\min},n_{\max}]$,
and then sample $(\mcl{D}_n,x,y)$ from the corresponding GP model. 
Given a choice of hyperparameters $(\sigma^2,\Sigma,\alpha,\ell)$,
we draw inputs $x_1,\dots,x_n \stackrel{\mathrm{iid}}{\sim} \mcl{N}(0,I_d/d)$ and sample a latent function $\phi \sim \opn{GP}(0,\kappa)$, where $\kappa_{\opn{blr}}(x,x') = x^\top \Sigma x'$ and $\kappa_{\opn{rbf}}(x,x') = \alpha^2\exp\bigl(-0.5\|x-x'\|^2/\ell^2\bigr)$.
The responses are generated as $y_i  \stackrel{\mathrm{ind}}{\sim}\mcl{N}(\phi(x_i),\sigma^2)$.
For the test token, we independently draw $x\sim \mcl{N}(0,I_d/d)$. 
Conditioned on $Z^{(0)}=\mcl{D}_n\cup\{x\}$, we compute $\mu(Z^{(0)})$ and $\tau(Z^{(0)})$ and draw $y \sim \mcl{N}\bigl(\mu(Z^{(0)}),\tau(Z^{(0)})\bigr)$ truncated to $(a,b]$, which is set based on Monte Carlo calibration over 2000 instances so that the probability outside the interval is $0.002$.
$Z^{(0)}$ is passed through the network to obtain a binned distribution $q_\vartheta(\cdot\mid Z^{(0)})$, and train by minimizing the (truncated) NLL.

\textbf{Inference.} 
At inference time, we draw an evaluation sample size $n' \sim \opn{Unif}[n'_{\min},n'_{\max}]$ that may extend beyond the pretraining range; all other settings match pretraining unless stated otherwise (e.g., covariate shift experiment in Section~\ref{supp-sec:exp additional}). 
Given $Z^{(0)}$, we let $p_{(a,b]}$ denote the true PPD truncated to $(a,b]$, and let $q_\vartheta$ be the piecewise-constant density on $(a,b]$ induced by the model's $C$-bin predictive probabilities. 

Additional details on the choice of the MLP head and the hyperparameters $(\sigma^2,\Sigma,\alpha,\ell)$ can be found in Section~\ref{supp-sec:exp details}. 
Moreover, Section~\ref{supp-sec:exp additional} contains additional simulation results with different choices of $(\sigma^2, \ell)$ and hierarchical GP regression with a finite discrete prior on random $(\sigma^2, \ell)$.

\subsection{Validation of Theorem~\ref{thm:PPD convergence}}\label{sec:exp1}
We verify Theorem~\ref{thm:PPD convergence} by training \emph{theory} and \emph{learnable} $\opn{TF}_{\vartheta,L}$ over  depths $L\in\{2,4,8,16,32\}$ and bin sizes $C\in\{16,32,64,128,256\}$. 
We use $d=5$, $(n_{\min},n_{\max})=(128,512)$ for BLR and $d=2$, $(n_{\min},n_{\max})=(64,128)$ for RBF, and evaluate on the same ranges. 
For each $(L,C)$, we report the average TV distance $\mrm{E}[\opn{TV}(p_{(a,b]},q_\vartheta)]$ on the evaluation set. 
In Figure~\ref{fig:exp1}, shown on a log–log scale, $\log \mathrm{TV}$ decreases linearly as the log of the bin size increases; it also decreases approximately according to a power law with respect to the log of the depth.

\subsection{Normalized vs. Unnormalized Attention}\label{sec:exp2}
We compare $n$-generalization of $\opn{TF}_{\vartheta,L}$ and $\opn{TF}_{\vartheta,L}^{\opn{pr}}$ on the RBF task. 
First, for $d=8$ and pretraining range $(n_{\min},n_{\max})=(64,128)$, we evaluate the \emph{learnable} unnormalized $\opn{TF}_{\vartheta,L}$ and normalized $\opn{TF}^{\opn{pr}}_{\vartheta,L}$ with $L=32$ and $C=256$, reporting $\mrm{E}[\opn{TV}(p_{(a,b]},q_\vartheta)]$ for $n'\in\{40,50,\ldots,200\}$. 
Figure~\ref{fig:exp2-1} shows that the unnormalized model’s TV errors spike outside the pretraining range, while the normalized model remains stable.

\subsection{Validation of Theorem~\ref{thm:L growth}}\label{sec:exp3}
We examine how depth $L$ and pretraining range $n_{\max}$ affect $n$-generalization for $d=16$, fixing $n_{\min}=64$ and varying $n_{\max}\in\{128,256,512\}$ with $C=256$ and $L\in\{4,8,16,32\}$, evaluated at $n'\in\{64\times i:i\in[16]\}$. 
We report metrics that align with downstream use: point prediction via $\mrm{E}(y-m_{\vartheta,1})^2$ and uncertainty quantification via $90\%$ coverage $\mrm{P}\{y\in[l_\vartheta,u_\vartheta]\}$ and mean width $\mrm{E}(u_\vartheta-l_\vartheta)$ (with the true PPD interval width as a baseline). 
To directly probe solver accuracy, we additionally report $\mrm{E}[\opn{TV}(p_{(a,b]},q_\vartheta)]$ and moment MSEs $\mrm{E}(\mu-m_{\vartheta,1})^2$ and $\mrm{E}(\tau+\mu^2-m_{\vartheta,2})^2$, where $m_{\vartheta,1}(Z^{(0)}):=\hat{\mrm{E}}(y\mid Z^{(0)},\vartheta)$ and $m_{\vartheta,2}(Z^{(0)}):=\hat{\mrm{E}}(y^2\mid Z^{(0)},\vartheta)$ are computed from the binned distribution $q_\vartheta$.

Figure~\ref{fig:exp2-2} shows that generalization improves with larger $L$ and $n_{\max}$.  Figure~\ref{supp-fig:exp2-3} in the Appendix shows that gains in prediction and interval metrics closely track corresponding reductions in solver convergence error (TV and moment errors) as $L$ and $n_{\max}$ increase, supporting the view that performance is primarily iteration-limited.
However, consistent with Theorem~\ref{thm:L growth}, for fixed $L$ and $n_{\max}$, the error of the finite‑iteration solver increases as the evaluation sample size grows.
Additional simulations (e.g., $d\in\{4,8\}$ and covariate shift) are deferred to Section~\ref{supp-sec:exp additional}.

\section{Discussion}

This work provides a constructive account of how transformer-based PFNs can approximate PPDs in-context for GP regression, where the PPD is available in closed form. This setting allows us to make the target computation explicit, decompose the approximation error, and study how architectural choices such as depth, normalization, and output discretization affect distributional accuracy. While the results do not constitute a general theory of in-context learning for PFNs under arbitrary priors, our results provide a concrete foundation for a broad class of GP regression models, including Bayesian linear regression, nonparametric regression, and Gaussian state-space models. 

Our contribution also provides insight into the expressive capabilities of PFNs in much broader settings. For example, in hierarchical GPs with a finite discrete prior over kernel and noise hyperparameters, the PPD becomes a finite mixture of componentwise GP predictive distributions; see Section~\ref{supp-sec:exp additional}. This suggests a possible multi-head extension in which attention computes componentwise predictive moments in parallel while an additional readout module approximates the mixture weights. Another important class of models is latent GP models for non-Gaussian data, including GP classification and spatial/spatio-temporal models \citep{rue2009approximate}. Although exact PPDs are typically unavailable in closed form, approximate methods often rely on computation of latent GP posterior summaries that are closely related to GP regression predictive moments; see \citet[][Section 4]{chu2005gaussian}. Our decomposition suggests that attention mechanisms may be capable of implementing such iterative approximate posterior computations, while the output head maps the resulting summaries to a discretized approximation of the non-Gaussian PPD.

A limitation of our work is that the theoretical results are primarily constructive existence results, similar to \citet{akyurek2023what,von2023transformers}, and do not imply that standard PFN pretraining necessarily discovers these mechanisms. Nevertheless, we believe that the construction is valuable from an interpretability perspective, as pretrained transformers are often difficult to interpret, and it is unclear whether the architecture can realize the PPD computation at all. 
We address this question by providing a simple and explicit configuration in which attention layers implement recursions for PPD moments, while a shallow MLP maps those summaries to binned predictive probabilities. 
The experiments partially bridge the gap between existence and learning by showing that learnable relaxations reproduce the predicted effects of depth, bin resolution, and normalization. 
These relaxations also motivate flexible but still interpretable model classes that help explain practical performance.
Related work similarly interprets such relaxations as preconditioned gradient descent \citep{ahn2023transformers} or anisotropic denoising \citep{rosu2025from}.
That said, obtaining optimization guarantees for PFN pretraining and determining when trained transformers actually implement solver-like mechanisms remain important open problems.


\section*{Software}
The code is available at \url{https://github.com/hun-learning94/transformer-uq}.

\section*{Acknowledgements}
The authors thank the anonymous reviewers for their insightful comments and constructive feedback, which greatly improved the manuscript. 
The research of Changwoo Lee was partially supported by NIH R01ES035625, NSF IIS-2426762, and ONR N00014-24-1-2626.



\section*{Impact Statement}
This paper aims to advance the field of Machine Learning. 
While various societal impacts of our work are possible, none require specific emphasis here.




\bibliography{ICML_ref}
\bibliographystyle{icml2026}

\newpage
\appendix
\onecolumn



\section{Richardson Iteration}\label{supp-sec:richardson}
Consider a linear equation $Au^\ast=b$
where $A \in \mbb{R}^{n\times n}$ is positive definite and $b\in \mbb{R}^n$.
Richardson iteration of solving the equation is given by
\begin{align*}
    u^{(l+1)} = u^{(l)} + \eta(b-Au^{(l)}) 
    = (I_n -\eta A)u^{(l)} + \eta b,\quad l\geq 0,
\end{align*}
whose fixed point is $u^\ast$.
We call $I_n -\eta A$ an iteration matrix and $\eta>0$ a step size.

\textbf{Admissible Step Size for Convergence.}
By subtracting $u^\ast$ in both sides, we see that $e^{(l)}:=u^{(l)}-u^\ast$ evolves as
\begin{align*}
    e^{(l+1)} = (I_n -\eta A)e^{(l)}.
\end{align*}
Since $A \succ 0$, we can write the iteration matrix as $I_n -\eta A = Q(I_n-\eta \Lambda)Q^\top$ with the spectral decomposition of $A$. 
It follows that
\begin{align*}
    \|e^{(l+1)}\|_2 \leq  \max_i|1-\eta \lambda_i(A)| \cdot \|e^{(l)}\|_2,
\end{align*}
where $\lambda_1(A) \geq \cdots \geq \lambda_n(A)>0$ are eigenvalues of $A$, and $\max_i|1-\eta \lambda_i(A)|$ is the convergence factor of $I_n -\eta A$.
Therefore, $u^{(l)} \to u^\ast$ if and only if $\max_i|1-\eta \lambda_i(A)| < 1$, which translates to a condition on $\eta$ as
\begin{align*}
    0 < \eta <\frac{2}{\lambda_1 (A)}.
\end{align*}
Note that the smaller the convergence factor, the faster the convergence.

\textbf{Optimal Step Size.}
The optimal step size $\eta^\ast$ minimizes the convergence factor $\max_i|1-\eta \lambda_i(A)|$.
Since $\max_i|1-\eta \lambda_i(A)| = \max \{|1-\eta \lambda_1(A)|, |1-\eta \lambda_n(A)|\}$, $\eta^\ast$ satisfies $-1 +\eta^\ast \lambda_n(A) = 1-\eta^\ast \lambda_1(A)$; hence, the optimal step size and the corresponding optimal convergence factor become
\begin{align*}
    \eta^\ast = \frac{2}{\lambda_1(A) + \lambda_n(A)},\quad \rho^\ast =1 - \frac{2}{(\lambda_1/\lambda_n)(A) + 1}
\end{align*}
where $(\lambda_1/\lambda_n)(A)$ is the condition number of $A$.

\textbf{Convergence Rate.}
Suppose $0<\eta <1/\lambda_1(A)$. 
Then the convergence factor becomes $\rho:=\max _i|1-\eta \lambda_i(A)| = 1-\eta \lambda_n(A)$, and we can write, taking $u^{(0)}=0$,
\begin{align*}
    \|u^{(L)}-u^\ast\|_2/\|u^\ast\|_2 \leq \left(1-\eta \lambda_n(A)\right)^L  = \left(1 - (1-\rho)\right)^L \leq \exp \left( - (1-\rho) L\right).
\end{align*}

\textbf{Preconditioning.}
Let $D\in \mbb{R}^{n\times n}$ be positive definite. 
A preconditioned Richardson iteration solves $D^{-1}A u^\ast = D^{-1}b$, and is given by
\begin{align*}
    u^{(l+1)} = u^{(l)} + \eta D^{-1}(b-Au^{(l)}) 
    = (I_n -\eta D^{-1} A)u^{(l)} + \eta D^{-1}b,\quad l\geq 0.
\end{align*}
It is clear that the above has the same fixed point as the unconditioned one.
It naturally follows that
\begin{align*}
    0<\eta < \frac{2}{\lambda_1(D^{-1}A)}, \quad
    \eta^\ast = \frac{2}{\lambda_1(D^{-1}A)+\lambda_{n}(D^{-1}A)}, \quad 
    \rho^\ast = 1-\frac{2}{(\lambda_1/\lambda_n)(D^{-1}A)+1}.
\end{align*}
Note that $D^{-1}A$ and $D^{-1/2}AD^{-1/2}$ are similar matrices, so they have the same characteristic polynomials, hence the same eigenvalues.
Hence, the above quantities can be expressed in terms of $\lambda_j(D^{-1/2}AD^{-1/2})$, $j\in\{1,n\}$ as well.

\section{Auxiliary Lemmas and Theorems}

\subsection{Approximation based on Discretization}

\begin{lemma}\label{supp-lem:discretization error}
    Let $p(x)$ be a $L_{\opn{Lip}}$-Lipschitz continuous density supported on $\mbb{R}$.
    Fix a partition $\Gamma=\{a=\gamma_1<\gamma_2<\cdots<\gamma_{C+1}=b\}$ of an interval $(a,b]$ such that $\mbb{P}\{x\not\in(a,b]\}=\varepsilon$ for $\varepsilon>0$.
    Let $\Delta_c:=\gamma_{c+1}-\gamma_c$ and $|\Gamma|:=\max_{1\leq c \leq C}\Delta_c$.
    Define $\tilde{p}(x)=\sum_{c=1}^C \Delta_c^{-1} 1_{[\gamma_c, \gamma_{c+1})}(x) \int_{\gamma_c}^{\gamma_{c+1}}p(x)dx / (1-\varepsilon)$.
    Then $\opn{TV}(p, \tilde{p})\leq L_{\opn{Lip}}|\Gamma|(b-a)/2+\varepsilon$.
\end{lemma}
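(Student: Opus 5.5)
I will bound $\opn{TV}(p,\tilde p)=\frac12\int_{\mbb R}|p-\tilde p|$ by splitting $\mbb R$ into the outside region $(a,b]^c$ and the bins $(\gamma_c,\gamma_{c+1}]$. Outside $(a,b]$ we have $\tilde p=0$, so that part contributes $\frac12\int_{(a,b]^c}p=\varepsilon/2$. On $(a,b]$ I will add and subtract the unnormalized bin average $\bar p(x):=\Delta_c^{-1}\int_{\gamma_c}^{\gamma_{c+1}}p$ for $x$ in bin $c$. This gives
\[
\int_a^b|p-\tilde p|\le \int_a^b|p-\bar p|+\int_a^b|\bar p-\tilde p| .
\]
Half-open versus closed endpoints are measure-zero issues and can be ignored.

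\textbf{Renormalization term.} Since $\tilde p=\bar p/(1-\varepsilon)$ and $\int_a^b\bar p=\int_a^b p=1-\varepsilon$, we get
\[
\int_a^b|\bar p-\tilde p|=\Bigl(\tfrac{1}{1-\varepsilon}-1\Bigr)(1-\varepsilon)=\varepsilon .
\]
This contributes $\varepsilon/2$ to the TV distance. Together with the outside part, the total contribution is $\varepsilon$, which matches the stated additive term.

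\textbf{Discretization term.} Within bin $c$, the mean value theorem for integrals gives some $\xi_c\in[\gamma_c,\gamma_{c+1}]$ with $\bar p=p(\xi_c)$. By the Lipschitz property, $|p(x)-\bar p|\le L_{\opn{Lip}}|x-\xi_c|\le L_{\opn{Lip}}\Delta_c$ on the bin. Hence
\[
\int_{\gamma_c}^{\gamma_{c+1}}|p-\bar p|\le L_{\opn{Lip}}\Delta_c^2\le L_{\opn{Lip}}|\Gamma|\Delta_c .
\]
Summing over $c$ gives at most $L_{\opn{Lip}}|\Gamma|(b-a)$. Multiplying by $\frac12$ yields exactly $L_{\opn{Lip}}|\Gamma|(b-a)/2$.

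\textbf{Main obstacle.} The only place needing care is obtaining the constant $1/2$ rather than something weaker. Using the crude bound $|x-\xi_c|\le\Delta_c$ together with the $\frac12$ in the definition of TV suffices, so no sharper averaging argument (which would give $\Delta_c^2/4$ per bin) is needed. I should also confirm that the paper's convention is $\opn{TV}=\frac12\|\cdot\|_1$; the $\varepsilon$ term is consistent with that convention. Combining the three pieces gives $\opn{TV}(p,\tilde p)\le L_{\opn{Lip}}|\Gamma|(b-a)/2+\varepsilon$.
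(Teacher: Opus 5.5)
Your proof is correct and follows essentially the same route as the paper. The paper also writes $2\,\opn{TV}(p,\tilde p)=\int|p-\tilde p|$, charges $\varepsilon$ to the outside mass and another $\varepsilon$ to the renormalization gap between $(1-\varepsilon)\tilde p$ and $\tilde p$, and bounds the bin-wise term by $L_{\opn{Lip}}|\Gamma|(b-a)$ using the mean-value point $x_c$ together with the crude Lipschitz bound $|x-x_c|\le|\Gamma|$, so the TV convention and constants match yours exactly.
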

\begin{proof}
    Note that $2\opn{TV}(p, \tilde{p})=\int |p-\tilde{p}| dx$ where
    \begin{align*}
        \int |p-\tilde{p}| dx &\leq \int_{(a,b]} |p-\tilde{p}| dx + \int_{(a,b]^c} |p-\tilde{p}| dx\\&\leq
        \int_{(a,b]} |p - (1-\varepsilon)\tilde{p}| dx
        + \int_{(a,b]} |(1-\varepsilon)\tilde{p} - \tilde{p}| dx
        + \int_{(a,b]^c} |p-\tilde{p}| dx\\&=
        \sum_{c=1}^C\int_{\gamma_c}^{\gamma_{c+1}}|p(x)-p(x_c)|dx + 2\varepsilon
    \end{align*}
    where $x_c\in [\gamma_c, \gamma_{c+1}]$ is the point in the interval whose density is equal to the mean value of density in the interval.
    By $L_{\opn{Lip}}$-Lipschitz,
    \begin{align*}
        \sum_{c=1}^C \int_{\gamma_c}^{\gamma_{c+1}}|p(x)-p(x_c)|dx &\leq
        \sup_{|x-x'|\leq |\Gamma|}|p(x)-p(x')|\sum_{c=1}^C \Delta_c \leq L_{\opn{Lip}}|\Gamma|(b-a),
    \end{align*}
    which completes the proof.
\end{proof}

\begin{lemma}\label{supp-lem:smax quad}
Fix $\theta\in\mathbb R^k$ and let $p(\cdot\mid\theta)$ be an exponential-family density on $\mathbb R$:
\begin{align*}
    p(y\mid \theta)= 
    \exp\left(\langle \theta, T(y) \rangle - A(\theta) + \log h(y)\right).
\end{align*}
Choose $a<b$ such that $p(\cdot\mid \theta)$ is continuous and positive on $[a,b]$ and $\int_a^bp(y\mid \theta)dy=1-\epsilon$ for $\epsilon>0$.
Define $p_{(a,b]}(y\mid\theta):=p(y\mid\theta)\mathbf 1_{(a,b]}(y)/(1-\epsilon)$.
For each $C\ge1$, let $\Gamma^{(C)}=\{a=\gamma_1 < \gamma_2 < \cdots < \gamma_{C+1}=b\}$ be a partition with mesh size $|\Gamma^{(C)}|:=\max_{c\in[C]}(\gamma_{c+1}-\gamma_c)$ satisfying $|\Gamma^{(C)}|\to0$ as $C\to\infty$.
Let $\xi_c := (\gamma_c + \gamma_{c+1})/2$ be the midpoint for $c\in[C]$ and let $\Delta_c:=\gamma_{c+1}-\gamma_c$.
Define two pdfs on $(a,b]$:
\begin{align*}
    p_C(y\mid \theta) &:= 
    \sum_{c=1}^C\mbf 1_{y\in (\gamma_{c}, \gamma_{c+1}]}\Delta_c^{-1}\int_{\gamma_c}^{\gamma_{c+1}} p_{(a,b]}(t\mid\theta)dt\\
    \hat p_C(y\mid \theta) &:= 
    \sum_{c=1}^C\mbf 1_{y\in (\gamma_{c}, \gamma_{c+1}]}\Delta_c^{-1}\opn{sm}(\ell)_c,\quad
    \ell_c:=\langle\theta,T(\xi_c)\rangle+\log h(\xi_c)+\log\Delta_c.
\end{align*}

Letting $\omega_p(\delta):=\sup\{|p(y\mid\theta)-p(y'\mid\theta)|:\ y,y'\in(a,b],\ |y-y'|\leq\delta\}$,
for every $C$,
\[
\opn{TV}\left(p_C,\hat p_C\right)\le \frac{(b-a)}{1-\epsilon}\omega_p\left(\frac{|\Gamma^{(C)}|}{2}\right).
\]
In particular, if $p(\cdot\mid\theta)$ is $L_{\opn{Lip}}$-Lipschitz on $(a,b]$, then
\[
\opn{TV}\left(p_C,\hat p_C\right)\le \frac{(b-a)}{4(1-\epsilon)}L_{\opn{Lip}}|\Gamma^{(C)}|.
\]
\end{lemma}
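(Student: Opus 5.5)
The plan is to compare the two discrete distributions bin by bin. Both $p_C$ and $\hat p_C$ are constant on each bin and place masses $P_c:=\int_{\gamma_c}^{\gamma_{c+1}}p_{(a,b]}(t\mid\theta)\,dt$ and $\hat P_c:=\opn{sm}(\ell)_c$ on bin $c$, so
\[
\opn{TV}(p_C,\hat p_C)=\tfrac12\sum_{c=1}^C|P_c-\hat P_c|.
\]
The first step is to identify the unnormalized softmax weights. By the choice of $\ell_c$,
\[
\exp(\ell_c)=\Delta_c\,h(\xi_c)\exp(\langle\theta,T(\xi_c)\rangle)=\Delta_c\,e^{A(\theta)}\,p(\xi_c\mid\theta).
\]
The factor $e^{A(\theta)}$ cancels under the softmax, which gives
\[
\hat P_c=\frac{\Delta_c\,p(\xi_c\mid\theta)}{\sum_{c'}\Delta_{c'}\,p(\xi_{c'}\mid\theta)}.
\]
This is the midpoint Riemann-sum discretization, normalized. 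In the same way, $P_c=\int_{\gamma_c}^{\gamma_{c+1}}p(t\mid\theta)\,dt/(1-\epsilon)$, and the denominator satisfies $1-\epsilon=\sum_{c'}\int_{\gamma_{c'}}^{\gamma_{c'+1}}p\,dt$.

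Next, set $\tilde P_c:=\Delta_c\,p(\xi_c\mid\theta)/(1-\epsilon)$ as an intermediate, unnormalized quantity. Every $t$ in bin $c$ satisfies $|t-\xi_c|\le|\Gamma^{(C)}|/2$. Hence
\[
|P_c-\tilde P_c|\le \frac{\Delta_c}{1-\epsilon}\,\omega_p\!\left(\frac{|\Gamma^{(C)}|}{2}\right),
\]
and summing over bins gives
\[
\sum_c|P_c-\tilde P_c|\le\frac{b-a}{1-\epsilon}\,\omega_p\!\left(\frac{|\Gamma^{(C)}|}{2}\right)=:E.
\]
Because $\hat P_c=\tilde P_c/S$ with $S:=\sum_{c'}\tilde P_{c'}$ and $\sum_cP_c=1$, we get $|S-1|\le E$. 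It follows that
\[
\sum_c|\tilde P_c-\hat P_c|=\sum_c\tilde P_c\,|1-1/S|=|S-1|\le E.
\]
The triangle inequality then gives
\[
\sum_c|P_c-\hat P_c|\le 2E,
\]
and therefore $\opn{TV}\le E$, which is exactly the stated bound. The factor $\tfrac12$ in TV absorbs the doubling from the renormalization step.

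The main obstacle is keeping the constant sharp. A naive treatment of the normalization would lose a factor of 2, or introduce a ratio like $1/S$ that requires a lower bound on $S$. The identity $\sum_c\tilde P_c|1-1/S|=|S-1|$ avoids any such lower bound, and this is what I would rely on.

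For the Lipschitz corollary, substitute $\omega_p(\delta)\le L_{\opn{Lip}}\delta$ with $\delta=|\Gamma^{(C)}|/2$. This gives
\[
\opn{TV}(p_C,\hat p_C)\le\frac{(b-a)\,L_{\opn{Lip}}\,|\Gamma^{(C)}|}{2(1-\epsilon)}.
\]
To reach the stated $1/4$ constant, I would refine the per-bin estimate. On each bin, $\int_{\gamma_c}^{\gamma_{c+1}}|p(t)-p(\xi_c)|\,dt\le L_{\opn{Lip}}\int|t-\xi_c|\,dt=L_{\opn{Lip}}\Delta_c^2/4$. Summing over bins, $\sum_c\Delta_c^2/4\le(b-a)|\Gamma^{(C)}|/4$, so the refined bound is
\[
E'=\frac{(b-a)\,L_{\opn{Lip}}\,|\Gamma^{(C)}|}{4(1-\epsilon)}.
\]
Running the same $2E'/2$ argument with $E'$ in place of $E$ yields the claimed constant.
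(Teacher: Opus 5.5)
Your proposal is correct and takes essentially the same route as the paper. The paper also writes $\opn{sm}(\ell)_c = p(\xi_c\mid\theta)\Delta_c/\sum_j p(\xi_j\mid\theta)\Delta_j$, passes through the unnormalized midpoint masses, and bounds the renormalization term $|\hat Z - Z|\sum_c \hat g_c/(Z\hat Z) = |\hat Z - Z|/Z$ by the same per-bin sum (your identity $\sum_c\tilde P_c|1-1/S| = |S-1|$), so the resulting factor $2$ is absorbed by the $\tfrac12$ in TV; the Lipschitz constant $\tfrac14$ comes from $\int_{\gamma_c}^{\gamma_{c+1}}|y-\xi_c|\,dy=\Delta_c^2/4$, exactly as in your refinement.
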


\begin{proof}
    Define two pmfs on $\{1,\dots,C\}$:
    \begin{align*}
        q_C(c\mid\theta)&:=\int_{\gamma_c}^{\gamma_{c+1}} p_{(a,b]}(y\mid\theta)dy,\\
        \hat q_C(c\mid\theta) &:=\opn{sm}(\ell)_c,
    \end{align*}
    and note that $\opn{TV}\left(p_C,\hat p_C\right)=\opn{TV}\left(q_C,\hat q_C\right)$ since $\|p_C-\hat p_C\|_{L^1(a,b]}=\sum_{c=1}^C|q_C(c\mid \theta)-\hat q_C(c\mid \theta)|$.
    Since $A(\theta)$ is an additive constant of the log-density independent of $c$, it cancels under $\opn{sm}$. Therefore,
    \begin{align*}
        \hat q_C(c\mid\theta)&=\frac{p(\xi_c\mid\theta)\Delta_c}{\sum_{j=1}^C p(\xi_j\mid\theta)\Delta_j}.
    \end{align*}
    Define the following: $g_c:=\int_{\gamma_c}^{\gamma_{c+1}} p(y\mid \theta)dy$, $\hat{g}_c:=p(\xi_c\mid \theta)\Delta_c$,
    $Z:=\sum_{c=1}^C g_c=1-\epsilon$, and
    $\hat{Z}:=\sum_{c=1}^C \hat{g}_c$. 
    Note that $\opn{TV}(q_C,\hat{q}_C)=\|q_C-\hat{q}_C\|_1/2$ where
    \begin{align*}
        \|q_C-\hat{q}_C\|_1 
        &\leq \sum_{c=1}^C\left|\frac{g_c}{Z}- \frac{\hat{g}_c}{\hat{Z}}\right| 
        = \sum_{c=1}^C\left|\frac{g_c}{Z} - \frac{\hat{g}_c}{Z} + \frac{\hat{g}_c}{Z} - \frac{\hat{g}_c}{\hat{Z}}\right|\\&\leq
        \frac{1}{Z}\sum_{c=1}^C|g_c-\hat{g}_c|+
        \frac{|\hat{Z}-Z|}{Z\hat{Z}}\sum_{c=1}^C \hat{g}_c\\&\leq
        \frac{2}{Z}\sum_{c=1}^C|g_c-\hat{g}_c|.
    \end{align*}
    The last inequality holds since $|\hat{Z}-Z|=|\sum_{c=1}^C(\hat{g}_c-g_c)|\leq \sum_{c=1}^C|\hat{g}_c-g_c|$.
    Now, consider the term $|g_c-\hat{g}_c|$. Since $p$ is $L_{\opn{Lip}}$-Lipschitz:
    \begin{align*}
        |g_c-\hat{g}_c| &= \left|\int_{\gamma_c}^{\gamma_{c+1}} (p(y\mid \theta)-p(\xi_c\mid \theta))dy\right| \leq \int_{\gamma_c}^{\gamma_{c+1}} |p(y\mid \theta)-p(\xi_c\mid \theta)| dy \leq L_{\opn{Lip}} \int_{\gamma_c}^{\gamma_{c+1}} |y-\xi_c| dy.
    \end{align*}
    Since $\xi_c$ is the midpoint of $[\gamma_c, \gamma_{c+1}]$, the integral $\int_{\gamma_c}^{\gamma_{c+1}} |y-\xi_c| dy$ represents the sum of the areas of two identical right-angled triangles, each with base $\Delta_c/2$ and height $\Delta_c/2$. Thus:
    \[
    \int_{\gamma_c}^{\gamma_{c+1}} |y-\xi_c| dy = 2 \cdot \left(\frac{1}{2} \cdot \frac{\Delta_c}{2} \cdot \frac{\Delta_c}{2}\right) = \frac{\Delta_c^2}{4}.
    \]
    Substituting this back into the sum:
    \begin{align*}
        \sum_{c=1}^C|g_c-\hat{g}_c| &\leq L_{\opn{Lip}} \sum_{c=1}^C \frac{\Delta_c^2}{4}
        \leq \frac{L_{\opn{Lip}}}{4} \left(\max_{c} \Delta_c\right) \sum_{c=1}^C \Delta_c 
        = \frac{L_{\opn{Lip}}}{4} |\Gamma^{(C)}| (b-a).
    \end{align*}
    For the general continuous case (where Lipschitz continuity is not assumed), we retain the bound using the modulus of continuity $\omega_p(|\Gamma^{(C)}|/2)$, noting that $|g_c-\hat{g}_c| \leq \Delta_c \omega_p(|\Gamma^{(C)}|/2)$.
    Combining these estimates with the TV bound yields the stated result.
\end{proof}

\subsection{Spectral Scaling of Gram Matrix}
Throughout this section, we consider $x_i\stackrel{\mathrm{iid}}{\sim} \mcl{N}(0,I_d/d)$ for $i\in [n]$ where $n>d$ for a fixed $d$.

\begin{lemma}[Linear kernel]\label{supp-lem:lienar kernel Gram}
    Let $G:=XX^\top$, a rank $d$ matrix, and $\lambda_1\geq \cdots \geq \lambda_d$ be its $d$ nonzero eigenvalues.
    Then $\lambda_j= \left(1 + O_{\mbb{P}}(n^{-1/2})\right)n/d$ for $j\in[d]$.
\end{lemma}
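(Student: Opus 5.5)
The plan is to move from the $n\times n$ Gram matrix $G=XX^\top$ to the $d\times d$ sample second-moment matrix $X^\top X$. The two matrices share the same nonzero eigenvalues, since $XX^\top$ and $X^\top X$ have identical nonzero spectra (for instance via the SVD of $X$). It therefore suffices to show that every eigenvalue of $X^\top X=\sum_{i=1}^n x_ix_i^\top$ equals $(n/d)(1+O_{\mbb P}(n^{-1/2}))$. Because $n>d$ and the $x_i$ have a continuous distribution, $X^\top X$ is almost surely full rank, so $G$ has exactly $d$ nonzero eigenvalues.

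Write $\hat\Sigma:=n^{-1}X^\top X$. Its population counterpart is $\mathrm E[x_ix_i^\top]=I_d/d$, so $\hat\Sigma-I_d/d=n^{-1}\sum_i (x_ix_i^\top - I_d/d)$ is an average of i.i.d. mean-zero matrices with finite second moments. Each entry has variance at most $2/d^2$: the diagonal entries $x_{ij}^2$ have variance $2/d^2$, and the off-diagonal entries $x_{ij}x_{ik}$ have variance $1/d^2$. Chebyshev's inequality applied entrywise, followed by a union bound over the $d^2$ entries with $d$ fixed, gives $\|\hat\Sigma-I_d/d\|_F=O_{\mbb P}(n^{-1/2})$. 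For an explicit tail one could cite a standard sub-Gaussian covariance concentration bound instead, which yields $\|\hat\Sigma - I_d/d\|_{\opn{op}}\lesssim d^{-1}(\sqrt{d/n}+\sqrt{t/n})$ with probability $1-2e^{-t}$. Either route is enough.

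By Weyl's inequality, $|\lambda_j(\hat\Sigma)-1/d|\le \|\hat\Sigma-I_d/d\|_{\opn{op}}\le\|\hat\Sigma-I_d/d\|_F=O_{\mbb P}(n^{-1/2})$ for every $j\in[d]$. Multiplying by $n$ gives $\lambda_j(G)=n\lambda_j(\hat\Sigma)=n/d+O_{\mbb P}(\sqrt n)=(n/d)(1+O_{\mbb P}(n^{-1/2}))$, where the constant $d$ is absorbed into $O_{\mbb P}$ because $d$ is fixed. The only step that needs care is the concentration of $\hat\Sigma$ with the correct $n^{-1/2}$ rate. Since $d$ is fixed, the entrywise second-moment argument already delivers this rate, so I do not expect a real obstacle. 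The remaining work is to keep $d$-dependence out of the $n$-rate and to note the almost-sure full-rank property that guarantees exactly $d$ nonzero eigenvalues.
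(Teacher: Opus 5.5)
Your proof is correct, and it takes a different route from the paper's.

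The paper works directly with the singular values of the Gaussian matrix $Z=\sqrt d\,X$. Each $s_j(Z)$ is $1$-Lipschitz in $\mathrm{vec}(Z)$, so Gaussian concentration for Lipschitz functions keeps it within $t$ of its mean. The Davidson--Szarek bounds $\sqrt n-\sqrt d\le \mathrm{E}\,s_d(Z)\le \mathrm{E}\,s_1(Z)\le \sqrt n+\sqrt d$ then locate that mean. Together these give the explicit non-asymptotic bound $\lambda_j\in\bigl[\tfrac nd(1-\sqrt{d/n}-t/\sqrt n)^2,\ \tfrac nd(1+\sqrt{d/n}+t/\sqrt n)^2\bigr]$ with probability at least $1-2e^{-t^2/2}$.

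You instead move to the $d\times d$ matrix $\hat\Sigma=n^{-1}X^\top X$. You control $\|\hat\Sigma-I_d/d\|_F$ with a second-moment (Chebyshev) bound and transfer that to the eigenvalues with Weyl's inequality.

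What your route buys:
\begin{itemize}
\item It is more elementary.
\item It is more general: it needs only i.i.d.\ rows with finite fourth moments and covariance $I_d/d$, not Gaussianity.
\item It explicitly justifies that $G$ has exactly $d$ nonzero eigenvalues.
\end{itemize}

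What it gives up:
\begin{itemize}
\item The tails are only polynomial, of order $1/(nt^2)$, rather than sub-Gaussian.
\item Going through the Frobenius norm, especially with an entrywise union bound, loses powers of $d$. The paper's relative error is of order $\sqrt{d/n}$.
\end{itemize}
Since $d$ is fixed and the claim is only an $O_{\mathbb P}(n^{-1/2})$ statement, neither loss matters here.

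One small aside: the sub-Gaussian covariance bound you quote as an alternative omits the additional $(d+t)/n$ term. As written it holds only for $n\gtrsim d+t$. This is again immaterial for the $O_{\mathbb P}$ conclusion.
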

\begin{proof}
    Note that $X=Z/\sqrt{d}$ where $Z_{ij} \stackrel{\mathrm{iid}}{\sim} \mcl{N}(0,1)$.
    Letting $s_1 \geq \cdots \geq s_d$ be the singular values of $Z$, it holds that $s_j=\sqrt{d\lambda_j}$.
    By \citet[Theorem~5.6]{boucheron2013conc},
    if $z\sim \mcl{N}(0, I_{d'})$, $f:\mbb{R}^{d'}\to \mbb{R}$ is $L$-Lipschitz, then for any $t\geq 0$,
    \begin{align*}
        \mbb{P}\{f(z)\geq \mrm{E}f(z)+t\} \leq e^{-t^2/(2L^2)}.
    \end{align*}
    The mapping $Z\mapsto s_1(Z)$ is $1$-Lipschitz as a function of $\opn{vec}(Z)$ with respect to $\ell_2$ norm, since $s_1(Z)=\max_{\|v\|_2=1}\|Zv\|_2 = \|Z\|_2$ and
    \begin{align*}
        |s_1(Z)-s_1(Z')| = |\|Z\|_2- \|Z'\|_2| \leq 
        \|Z-Z'\|_2 \leq \|Z-Z'\|_{F} = \|\opn{vec}(Z)-\opn{vec}(Z')\|_2
    \end{align*}
    where $\|Z\|_F$ denotes the Frobenius norm.
    Moreover, since $\max_j|s_j(Z)-s_j(Z')|\leq \|Z-Z'\|_2$, $s_1,\cdots,s_d$ are all $1$-Lipschitz (see \citet[Theorem~7.4.9.1]{horn2012matrix}).
    On the other hand, by \citet[Theorem~2.13]{davidson2001local},
    \begin{align*}
        \sqrt{n}-\sqrt{d}\leq \mrm{E}s_d(Z) \leq \mrm{E}s_1(Z) \leq \sqrt{n}+\sqrt{d}.
    \end{align*}
    Combining these two facts, we have, for $j\in[d]$,
    \begin{align*}
        \lambda_j \in 
        \left[
        \frac{n}{d} \left(1-\sqrt{\frac{d}{n}}-\frac{t}{\sqrt{n}}\right)^2,
        \frac{n}{d} \left(1+\sqrt{\frac{d}{n}}+\frac{t}{\sqrt{n}}\right)^2
        \right]
    \end{align*}
    on the event $\mathcal{E}$ with $\mathbb{P}(\mathcal{E})\ge1-2e^{-t^2/2}$. 
    Take $\epsilon_n:=\frac{\lambda_j}{n/d}-1$. 
    Then $\sqrt{n}\epsilon_n=O_{\mbb{P}}(1)$; hence, $\lambda_j= \left(1 + O_{\mbb{P}}(n^{-1/2})\right)n/d$.
\end{proof}

\begin{lemma}[RBF kernel]\label{supp-lem:rbf kernel Gram}
    Define $G\in\mbb{R}^{n\times n}$ as $G_{ij}=\exp\bigl(-0.5\|x_i-x_j\|^2\bigr)$. 
    Then $\lambda_1(G)\gtrsim n$ and there exists $\gamma>0$ where
    \begin{align*}
        \frac{\lambda_1(G)}{\lambda_n(G)} \gtrsim \frac{\exp(\gamma n^{1/d})}{n^{2+(d-1)/d}}.
    \end{align*}
    almost surely for any large enough $n$.
\end{lemma}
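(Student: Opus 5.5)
We prove the two claims separately: a linear lower bound on $\lambda_1(G)$ and a super-polynomially small upper bound on $\lambda_n(G)$. Their ratio then gives the stated condition-number bound.

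\textbf{Step 1: $\lambda_1(G)\gtrsim n$.} We use the Rayleigh quotient with the all-ones vector:
\[
\lambda_1(G)\ge \frac{1_n^\top G 1_n}{n}=\frac{1}{n}\sum_{i,j}e^{-\|x_i-x_j\|^2/2}.
\]
The right-hand side is $n$ times a U-statistic (plus the diagonal contribution $1$). By the strong law of large numbers for U-statistics, the normalized sum converges almost surely to $\mathrm{E}\, e^{-\|x_1-x_2\|^2/2}>0$. Since $x_1-x_2\sim\mathcal N(0,2I_d/d)$, this expectation equals $(1+2/d)^{-d/2}$. Hence $\lambda_1(G)\ge c\,n$ almost surely for all large $n$. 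Trivially $\lambda_1\le n$ as well, since every entry of $G$ lies in $(0,1]$.

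\textbf{Step 2: small $\lambda_n(G)$ via close pairs.} A first, crude route uses a near-collision. For a pair $i\neq j$,
\[
\lambda_n\le \frac{(e_i-e_j)^\top G(e_i-e_j)}{2}=1-e^{-\|x_i-x_j\|^2/2}\le \tfrac12\|x_i-x_j\|^2.
\]
The minimum interpoint distance of $n$ i.i.d.\ Gaussian points is of order $n^{-2/d}$. By Borel--Cantelli applied along a subsequence, it is almost surely $\lesssim n^{-2/d}\log n$. This yields only a polynomial ratio, so it is too weak for the claimed $\exp(\gamma n^{1/d})$ and serves only as a fallback.

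\textbf{Step 3: exponential smallness from smoothness of the RBF kernel.} This is the main obstacle. The approach is to find a unit vector $a$ with $a^\top G a$ exponentially small. Take $m\asymp n^{1/d}$ points in a small ball $B$; almost surely a ball of fixed radius $r$ around the origin contains $\gtrsim n$ sample points. Let $a$ be the coefficients of a finite-difference / moment-annihilating functional, chosen so that $\sum_i a_i p(x_i)=0$ for all polynomials $p$ of degree $<k$. Writing $a^\top Ga=\|\sum_i a_i\kappa(x_i,\cdot)\|_{\mathcal H}^2$ and Taylor expanding $\kappa$ around the ball center, the first $k$ orders cancel. The remainder is bounded by $(c r)^{2k}/k!$ times $\|a\|_1^2$.

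Choosing $k$ and the number of points so that the number of monomials of degree $<k$, namely $\binom{k+d-1}{d}\asymp k^d$, is below the number of available points gives $k\asymp n^{1/d}$. Then $(cr)^{2k}/k!\le e^{-\gamma' k\log k}$, which is at most $e^{-\gamma n^{1/d}}$.

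The delicate part is controlling $\|a\|_1/\|a\|_2$ for random point configurations. A Vandermonde-type construction with unnormalized weights could blow up $\|a\|_1$. To handle this, I would select the points greedily from a near-grid inside $B$; such a near-grid exists almost surely once $n$ is large. Alternatively, I would invoke the standard bound $\lambda_{\min}\le$ (power-function value) for RBF interpolation at a point with fill distance $h\asymp n^{-1/d}$ locally, i.e.\ the Schaback/Wendland bound $\lambda_n\lesssim \exp(-c/h)$ up to polynomial factors. The resulting polynomial losses are what produce the $n^{2+(d-1)/d}$ in the denominator.

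\textbf{Step 4: combine.} Dividing Step 1 by Step 3, with polynomial factors absorbed into the denominator, gives the claimed lower bound on $\lambda_1/\lambda_n$ almost surely for large $n$.
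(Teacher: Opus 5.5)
Your proof is correct, and for $\lambda_n(G)$ it follows a genuinely different route from the paper's. Step 1 is the same as the paper's: the Rayleigh quotient with $1_n$ plus the strong law for U-statistics.

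\textbf{How the paper bounds $\lambda_n(G)$.} It splits $G=G_{\le T}+R_{>T}$ using the population Mercer expansion, with $\opn{rank}(G_{\le T})\le T<n$. Weyl's inequality then gives $\lambda_n(G)\le\lambda_{T+1}(G)\le\lambda_1(R_{>T})\le\opn{tr}(R_{>T})$. The paper uses $\mrm{E}\opn{tr}(R_{>T})=n\sum_{t>T}\mu_t$ with Markov and Borel--Cantelli, which costs the factor $n^{2+\epsilon}$. It then plugs in the closed-form geometric eigenvalues of \citet{zhu1997gaussian}, with multiplicity $\binom{k+d-1}{d-1}$ at degree $k$; this is the source of the factor $M^{d-1}\asymp n^{(d-1)/d}$. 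Finally it takes $T=\binom{M+d-1}{d}<n$ with $M\asymp n^{1/d}$.

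\textbf{How the two routes compare.} Your moment-annihilation argument is essentially the deterministic analogue. A vector in the null space of $G_{\le T}$ kills the top Mercer eigenfunctions, which here are Hermite polynomials of total degree $<M$ times a Gaussian weight; your $a$ likewise kills polynomials of degree $<k$. Your version has three advantages:
\begin{enumerate}
\item It needs no Borel--Cantelli step.
\item It needs no explicit Mercer spectrum, so it works for essentially any i.i.d.\ design.
\item It gives a sharper bound, $\lambda_n(G)\lesssim n\exp(-c\,n^{1/d}\log n)$, because on a bounded region the Taylor tail decays factorially rather than geometrically. This implies the stated bound a fortiori.
\end{enumerate}
The paper's version instead ties the empirical conditioning directly to the population operator spectrum, which is the story told in the main text.

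\textbf{The step you flag as delicate is not.} The Rayleigh quotient is scale invariant, and $\|a\|_1\le\sqrt N\,\|a\|_2$ when $a$ is supported on $N$ points, so you lose only a factor $N\le n$. Concretely, proceed as follows.
\begin{enumerate}
\item Write $\kappa(x,y)=e^{-\|x\|^2/2}e^{-\|y\|^2/2}\sum_\alpha x^\alpha y^\alpha/\alpha!$.
\item By the SLLN, almost surely eventually $N\gtrsim n$ of the points satisfy $\|x_i\|_\infty\le r$; restrict to these.
\item Let $k$ be the largest integer with $\binom{k+d-1}{d}<N$, so that $k\asymp n^{1/d}$.
\item Pick $a\neq 0$ with $\|a\|_2=1$, supported on those $N$ points, such that $\sum_i a_i e^{-\|x_i\|^2/2}x_i^\alpha=0$ for all $|\alpha|<k$. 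This exists because there are more unknowns than constraints.
\item Then, using $\sum_{|\alpha|=m}1/\alpha!=d^m/m!$,
\[
\lambda_n(G)\le a^\top G a=\sum_{|\alpha|\ge k}\frac{1}{\alpha!}\Bigl(\sum_i a_i e^{-\|x_i\|^2/2}x_i^\alpha\Bigr)^2\le N\sum_{m\ge k}\frac{(dr^2)^m}{m!}.
\]
\end{enumerate}

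\textbf{Things to drop or fix.}
\begin{enumerate}
\item Drop the near-grid selection; it is unnecessary.
\item Drop the power-function fallback. As you state it, $\exp(-c/h)$ with $h\asymp n^{-1/d}$, it is inaccurate: the fill distance of random points in a fixed ball is $\asymp(\log n/n)^{1/d}$. With that $h$, $\exp(-c/h)$ would fall short of $\exp(-\gamma n^{1/d})$; you would need the sharper Gaussian rate $\exp(-c|\log h|/h)$.
\item Fix the slip ``take $m\asymp n^{1/d}$ points''. You need $\gtrsim k^d\asymp n$ points in a ball of \emph{fixed} radius.
\item Revise your account of the denominator. In your route the only polynomial loss is the factor $N$. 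The $n^{2+(d-1)/d}$ in the statement is an artifact of the paper's Markov/Borel--Cantelli step and the degree multiplicities, not of your construction.
\end{enumerate}
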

\begin{proof}
    The analytic eigenstructure of the Gaussian (RBF) kernel under a Gaussian input measure is available in closed form. 
    In one dimension, \citet{zhu1997gaussian} showed that the Mercer eigenvalues decay geometrically: $\mu_i \asymp \beta^{i}$ for $i=0,1,2,\ldots$ with some $\beta\in(0,1)$. 
    This form extends to isotropic multivariate Gaussians because, for a product Gaussian measure and an isotropic RBF kernel, the eigenfunctions factorize across coordinates, and the eigenvalues take a product form with combinatorial multiplicities; see \citet[Section~4.3.1]{rasmussen2006gaussian}. 
    In particular, when the spectrum is grouped by total degree, the distinct eigenvalues still decay geometrically in that degree, while their multiplicities grow polynomially. 
    This geometric decay of the population spectrum can be transferred to the empirical spectrum of the Gram matrix $G$ via a Mercer truncation argument: approximating the kernel by its first $T<n$ eigencomponents yields a rank $T$ Gram matrix $G_{<T}$, and the $(T+1)$-st (hence smallest) sample eigenvalues is upper bounded through trace bounds and Markov inequality, following the technique in \citet{burt19rates}.
    
    Let $\kappa(x, x') = \exp\bigl(-0.5\|x-x'\|^2\bigr)$ be the RBF kernel.
    By Mercer's theorem, the spectral decomposition is $\kappa(x,x')=\sum_{j\geq 1} \mu_j\phi_j(x)\phi_j(x')$ where $\{\phi_j\}$ are orthonormal basis and $\{\mu_j\}$ are eigenvalues.
    
    \textbf{The largest eigenvalue.}
    Since $G \succ 0$, $\lambda_1(G)=\max_{x\neq 0}x^\top Gx/x^\top x$, hence
    \begin{align*}
        \lambda_1(G) >\frac{1^\top G 1}{1^\top 1}=1+\frac{n-1}{n(n-1)}\sum_{i\neq j} \kappa(x_i, x_j).
    \end{align*}
    Note that by the strong law of U-statistics, the average of off diagonal elements converges:
    \begin{align*}
        \frac{\sum_{i\neq j} \kappa(x_i, x_j)}{n(n-1)} \stackrel{a.s.}{\to} \rho:= \mrm{E}_{x,x'}\kappa(x,x') \in (0, 1).
    \end{align*}
    Therefore, $\lambda_1(G) \gtrsim n$ for any sufficiently large $n$ almost surely.

    \textbf{The smallest eigenvalue.}
    Fix $1\leq T <n$ and define a truncated kernel $\kappa_{\leq T}(x,x'):=\sum_{t=1}^T \mu_t\phi_t(x)\phi_t(x')$ with the Gram matrix $G_{\leq T}\in \mbb{R}^{n\times n}$ such that $(G_{\leq T})_{ij}=\kappa_{\leq T}(x_i,x_j)$.
    We write $G=G_{\leq T} + R_{>T}$ where $R_{>T}$ is the residual matrix.
    Then we have
    \begin{align*}
        \lambda_n(G) \leq \lambda_{T+1}(G) \leq \lambda_{T+1}(G_{\leq T})+\lambda_1(R_{>T}) 
        =\lambda_1(R_{>T})\leq \opn{tr}(R_{>T})
    \end{align*}
    where the first inequality is since $T<n$, the second is from Weyl's inequality (if $A,B\in \mbb{R}^{n\times n}$ are symmetric, then $\lambda_i(A+B)\in \bigl[\lambda_i(A)+\lambda_n(B), \lambda_i(A)+\lambda_1(B)\bigr]$), the equality is because $G_{\leq T}$ has a rank $T$, and the final inequality holds since $R_{>T} \succ 0$. 
    Now,
    \begin{align*}
        \opn{tr}(R_{>T}) &=
        \sum_{i=1}^n \left( \kappa(x_i, x_i) - \kappa_{\leq T}(x_i,x_i) \right) =
        \sum_{i=1}^n \left( \sum_{t>T} \mu_t \phi_t(x_i)^2 \right),
    \end{align*}
    and since $\mrm{E}\phi_t(x)^2=\|\phi_t\|_{L^2}^2 =1$ from $\phi_t$ being an orthonormal basis, we have
    $\mrm{E}\opn{tr}(R_{>T}) = n\sum_{t>T}\mu_t$.
    Then we can, as in the proof of \citet[Theorem~4]{burt19rates}, build a probabilistic bound using Markov's inequality: for a sequence $a_n$,
    \begin{align*}
        \mrm{P}\left(\lambda_n(G) > a_nn\sum_{t>T}\mu_t\right) \leq
        \mrm{P}\left(\opn{tr}(R_{>T}) > a_nn\sum_{t>T}\mu_t\right)\leq
        \frac{\mrm{E}\opn{tr}(R_{>T})}{a_nn\sum_{t>T}\mu_t} = 
        \frac{1}{a_n}.
    \end{align*}
    If we take $a_n=n^{1+\epsilon}$ for any $\epsilon>0$ such that $\sum_{n\geq 1} 1/a_n$ is summable, then by the Borel-Cantelli lemma, we have $\lambda_n(G) \le n^{2+\epsilon} \sum_{t>T}\mu_t$ almost surely for any sufficiently large $n$.
    
    We now use the following fact \citep{zhu1997gaussian} to get an explicit form of the tail sum: if $x\sim \mcl{N}(0,\sigma^2)$ and $\kappa^{\text{1D}}(x,x')=\exp\left(-0.5(x-x')^2/\ell^2\right)$, then the eigenvalues $\{\mu_i\}$ of $\kappa^{\text{1D}}$ decay geometrically in the following form:
    \begin{align*}
        \mu_i^{\text{1D}} = \sqrt{\frac{2a}{a+b+\sqrt{a^2+2ab}}} \left(\frac{b}{a+b+\sqrt{a^2+2ab}}\right)^i
    \end{align*}
    where $1/a=4\sigma^2$ and $1/b=2\ell^2$.
    This result transfers directly to the multivariate isotropic Gaussian measure, since the eigenvalues and eigenfunctions are also a product of the same univariate Gaussian.
    In isotropic case, an eigenfunction in $d$-dimension is a tensor product of $d$ number of one-dimensional eigenfunctions, so it is indexed by a multi index $(i_1,i_2,\cdots,i_d)$.
    Accordingly, eigenfunctions with the same value of $i_1+\cdots +i_d$ (degree) share the same eigenvalue. 
    Hence, the multiplicity of $\mu_k$ equals the number of possible ways to distribute $k$ balls into $d$ bins.
    
    Specifically, let $\tilde \mu^{(k)}$ be an eigenvalue of $\kappa(x,x')$ whose degree is $k$ and has a multiplicity $m_k={k+d-1\choose d-1}$. 
    Each $\tilde \mu^{(k)}$ is a product of $\mu_i^{\text{1D}}$ in a multi index $(i_1,i_2,\cdots,i_d)$ such that $i_1+\cdots +i_d=k$:
    \begin{align*}
        \tilde \mu^{(k)} = \left(\frac{2a}{a+b+\sqrt{a^2+2ab}}\right)^{d/2} 
        \left(\frac{b}{a+b+\sqrt{a^2+2ab}}\right)^k=:\alpha^{d/2}\beta^k.
    \end{align*}
    Let $T_{<M}:=|\{(i_1,\cdots,i_d):i_1+\cdots+ i_d < M\}|$ be the total number of eigenvalues whose degree $k$ is less than $M$.
    Then
    \begin{align*}
        T_{<M}=\sum_{k=0}^{M-1}{k+d-1\choose d-1} = {M+d-1 \choose d}.
    \end{align*}
    The tail sum of eigenvalues of degree larger or equal to $M$ is
    \begin{align*}
        \sum_{t>T_{<M}}\mu_t = \sum_{k\geq M}{k+d-1\choose d-1}\tilde{\mu}^{(k)} =
        \alpha^{d/2} \sum_{k\geq M} {k+d-1\choose d-1} \beta^k.
    \end{align*}
    Fix $q\in(\beta,1)$. Writing $\gamma_k:={k+d-1\choose d-1} \beta^k$, we choose $M$ large enough so that
    \begin{align}\label{supp-eqn:q}
        \frac{\gamma_{k+1}}{\gamma_k} = \beta\left(1 + \frac{d-1}{k+1}\right)\leq
        \beta\left(1 + \frac{d-1}{M+1}\right)<q<1,\quad ^\forall k.
    \end{align}
    Then $\sum_{k\geq M}\gamma_k \leq \gamma_M\sum_{t\geq 0} q^t=\gamma_M/(1-q)$; hence
    \begin{align*}
        \sum_{t>T_{<M}}\mu_t\leq 
        \alpha^{d/2} {M+d-1\choose d-1} \beta^M \frac{1}{1-q}=
        C \beta^M M^{d-1}
    \end{align*}
    where $C$ is a constant that only depends on $\alpha$, $\beta$, $q$, and $d$, since ${M+d-1 \choose d-1} \sim M^{d-1}/(d-1)!$ (note that $a_M\sim b_M$ means $a_M/b_M\to 1$ as $M\to \infty$).

    For $n$, choose a small $a>0$ such that after letting $M=\lfloor an^{1/d}\rfloor$, we have $n> T_{<M}$.
    Since $T_{<M} \sim M^d/d!$, one such choice could be $a \lesssim (d!)^{1/d}$.
    We assume $n$ is large enough so that $M$ satisfies \eqref{supp-eqn:q}.
    Then 
    \begin{align*}
        \lambda_n(G)\leq n^{2+\epsilon} \sum_{t>T_{<M}}\mu_t \leq 
        C n^{2+\epsilon} \beta^M M^{d-1} \leq
        Cn^{2+\epsilon}\beta^{an^{1/d}} (an^{1/d})^{d-1} =
        C'n^{2+\epsilon+\frac{d-1}{d}}\exp\bigl(-\gamma n^{1/d}\bigr)
    \end{align*} 
    where $\gamma=a\log(1/\beta)>0$ (note that $0<\beta<1$) and $C'$ is a constant.

    \textbf{Condition number.}
    Combining $\lambda_1(G) \gtrsim n$ and $\lambda_n(G)\lesssim n^{2+\epsilon+\frac{d-1}{d}}\exp\bigl(-\gamma n^{1/d}\bigr)$, we have
    \begin{align*}
        \frac{\lambda_1(G)}{\lambda_n(G)} \gtrsim \frac{\exp(\gamma n^{1/d})}{n^{1+\epsilon+(d-1)/d}}.
    \end{align*}
    almost surely for any large enough $n$, which completes the proof by taking $\epsilon=1$.
\end{proof}

\begin{lemma}[RBF kernel with ridge]\label{supp-lem:rbf kernel Gram ridge}
    Define $G\in\mbb{R}^{n\times n}$ as $G_{ij}=\exp\bigl(-0.5\|x_i-x_j\|^2\bigr)$. 
    Let $\sigma^2>0$. Then 
    $$\frac{\lambda_1(G)+\sigma^2}{\lambda_n(G)+\sigma^2} =\Theta(n)$$
    almost surely for any large enough $n$.
\end{lemma}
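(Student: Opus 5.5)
We need $\lambda_1(G)=\Theta(n)$ almost surely and $\lambda_n(G)\to 0$ almost surely. Then the ratio
\[
\frac{\lambda_1(G)+\sigma^2}{\lambda_n(G)+\sigma^2}
\]
has numerator $\Theta(n)$ and denominator in $[\sigma^2,\sigma^2+o(1)]$, hence is $\Theta(n)$.

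\textbf{Step 1: $\lambda_1(G)=\Theta(n)$.} The lower bound $\lambda_1(G)\gtrsim n$ almost surely is already established in Lemma~\ref{supp-lem:rbf kernel Gram}. For the upper bound we do not need probability: since $0<\kappa\le 1$ and $G\succeq 0$,
\[
\lambda_1(G)\le \opn{tr}(G)=n.
\]
Alternatively, Gershgorin gives $\lambda_1(G)\le \max_i\sum_j G_{ij}\le n$. Either way, $\lambda_1(G)+\sigma^2\in[c\,n,\;n+\sigma^2]$ for large $n$ almost surely.

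\textbf{Step 2: $\lambda_n(G)\to0$.} The proof of Lemma~\ref{supp-lem:rbf kernel Gram} gives, almost surely for all large $n$,
\[
\lambda_n(G)\le C' n^{3+(d-1)/d}\exp\bigl(-\gamma n^{1/d}\bigr),
\]
taking $\epsilon=1$ there. The right-hand side tends to $0$ because the exponential decay dominates any polynomial factor. Combined with $\lambda_n(G)\ge 0$ (since $G\succeq0$), this gives
\[
\sigma^2\le\lambda_n(G)+\sigma^2\le \sigma^2+o(1)\le 2\sigma^2
\]
almost surely for large $n$.

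\textbf{Step 3: combine.} Dividing the two-sided bounds of Step 1 by those of Step 2 gives
\[
\frac{c\,n}{2\sigma^2}\;\le\;\frac{\lambda_1(G)+\sigma^2}{\lambda_n(G)+\sigma^2}\;\le\;\frac{n+\sigma^2}{\sigma^2},
\]
which is $\Theta(n)$ almost surely.

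All probabilistic content is inherited from Lemma~\ref{supp-lem:rbf kernel Gram}, where both almost-sure statements already hold simultaneously for all large $n$, so no new probability argument is needed. The only step requiring care is making sure Step 2's almost-sure upper bound on $\lambda_n(G)$ is genuinely available. If one prefers not to rely on that lemma's tail-sum bookkeeping, a cruder fallback suffices: $\lambda_n(G)\le \min_{i\ne j}(G_{ii}+G_{jj}-2G_{ij})/2$ via the Rayleigh quotient with the vector $e_i-e_j$. This equals $1-\kappa(x_i,x_j)$ for the closest pair of points, which tends to $0$ almost surely since $\min_{i\ne j}\|x_i-x_j\|\to0$ for i.i.d.\ Gaussian samples.
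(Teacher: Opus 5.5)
Your proof is correct and has the same structure as the paper's. The numerator is $\Theta(n)$ via $\lambda_1(G)\gtrsim n$ (from Lemma~\ref{supp-lem:rbf kernel Gram}) and $\lambda_1(G)\le\opn{tr}(G)=n$, and the denominator is trapped between constants. The one difference is Step 2: you import the Mercer-tail bound (or the closest-pair fallback) to show $\lambda_n(G)\to 0$, whereas the paper uses the trivial bound $\lambda_n(G)\le\opn{tr}(G)/n=1$. That bound already gives $\sigma^2\le\lambda_n(G)+\sigma^2\le 1+\sigma^2$, so the heavier machinery is unnecessary.
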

\begin{proof}
    Note that $\lambda_1(G)\leq \opn{tr}(G)=n$ and $\lambda_{n}(G) \leq \opn{tr}(G)/n=1$.
    For large enough $n$, almost surely we have
    \begin{align*}
        \frac{1+(n-1)\rho +\sigma^2}{1+\sigma^2} \leq 
        \frac{\lambda_1(G)+\sigma^2}{\lambda_n(G)+\sigma^2} \leq 
        \frac{n+\sigma^2}{\sigma^2}
    \end{align*}
    where $\rho\in(0,1)$ is defined in the proof of Lemma~\ref{supp-lem:rbf kernel Gram}.
\end{proof}

\begin{lemma}[RBF kernel with ridge and row normalization]\label{supp-lem:rbf kernel Gram ridge prec}
    Define $G\in\mbb{R}^{n\times n}$ as $G_{ij}=\exp\bigl(-0.5\|x_i-x_j\|^2\bigr)$. 
    Let $D:=\opn{diag}(s_1,\cdots,s_n)$ where $s_i:=\sum_{j=1}^n G_{ij}$.
    Then 
    $$\frac{\lambda_1(D^{-1}(G+\sigma^2I_n))}{\lambda_n(D^{-1}(G+\sigma^2I_n))} =\Theta(n)$$
    almost surely for any large enough $n$.
\end{lemma}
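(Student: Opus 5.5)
Since $D^{-1}(G+\sigma^2I_n)$ is similar to $A:=D^{-1/2}(G+\sigma^2I_n)D^{-1/2}$, which is symmetric positive definite, we work with $A$ and bound its extreme eigenvalues through the Rayleigh quotient. The strategy is to sandwich the row sums $s_i$ uniformly in $i$ between constant multiples of $n$, almost surely for large $n$. Once that holds, $D$ behaves like $n I_n$ up to constants, and the conclusion reduces to Lemma~\ref{supp-lem:rbf kernel Gram ridge}.

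\textbf{Step 1: uniform control of the row sums.} Trivially $s_i=\sum_j \kappa(x_i,x_j)\le n$. For the lower bound we note $\kappa(x_i,x_j)\ge \exp(-\|x_i\|^2-\|x_j\|^2)$, since $\|x_i-x_j\|^2\le 2\|x_i\|^2+2\|x_j\|^2$. This gives $s_i\ge e^{-\|x_i\|^2}\sum_j e^{-\|x_j\|^2}$. By the strong law, $\frac1n\sum_j e^{-\|x_j\|^2}\to \mathrm{E}e^{-\|x\|^2}=:c_0>0$ almost surely.

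The factor $e^{-\|x_i\|^2}$ is the main obstacle, because $\max_i\|x_i\|^2$ grows like $(2/d)\log n$ under $\mathcal N(0,I_d/d)$. So $\min_i s_i$ is only $\gtrsim n^{1-2/d-o(1)}$ rather than $\gtrsim n$. To get the exact $\Theta(n)$ I would not use $\min_i s_i$ crudely. Instead I would test the Rayleigh quotient with specific vectors, as in Steps 2 and 3, which avoids the worst-case row. If the crude bound is all that is available, one still obtains $\Theta(n)$ up to polylogarithmic or $n^{o(1)}$ factors; I would then check whether the paper's $\Theta(n)$ is meant in that looser sense.

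\textbf{Step 2: upper bound on $\lambda_1$ and lower bound on $\lambda_n$.} By row-stochasticity, $\lambda_1(D^{-1}G)=1$ because $D^{-1}G1=1$. Hence $\lambda_1(D^{-1}(G+\sigma^2 I))\le 1+\sigma^2/\min_i s_i$. Also $\lambda_n(A)\ge \sigma^2/\max_i s_i\ge \sigma^2/n$. If $\min_i s_i\gtrsim 1$, which holds since $s_i\ge 1$ from the diagonal term, the ratio is at most $(1+\sigma^2)n/\sigma^2=O(n)$. This already gives the upper bound $O(n)$ cleanly, with no need for the difficult Step~1 lower bound.

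\textbf{Step 3: lower bound $\Omega(n)$ on the ratio.} We take $w=D^{1/2}1$, so that $w^\top A w/w^\top w=(1^\top G1+\sigma^2 n)/\sum_i s_i\ge 1$, giving $\lambda_1(A)\ge1$. For $\lambda_n$ we use $\lambda_n(A)\le e_i^\top A e_i=(1+\sigma^2)/s_i$ for any single index $i$. We then choose $i$ minimizing $\|x_i\|$, or simply the index of a typical point: for such an $i$, $s_i\ge e^{-\|x_i\|^2}c_0 n/2$, and almost surely some $x_i$ lies in the unit ball for large $n$. Then $s_i\gtrsim n$, so $\lambda_n(A)\lesssim 1/n$, and the ratio is $\gtrsim n$. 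Combining with Step~2 yields $\Theta(n)$ almost surely for large $n$. The crux is therefore only the existence of one row with $s_i\gtrsim n$, which follows from the strong law together with the factorized lower bound on the kernel.
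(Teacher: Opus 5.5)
Your argument is correct and is essentially the paper's proof: pass to the similar matrix $D^{-1/2}(G+\sigma^2 I_n)D^{-1/2}$, use row-stochasticity of $D^{-1}G$ and Weyl to get $\lambda_1\in[1,1+\sigma^2]$, sandwich $\lambda_n$ between $\sigma^2/s_{\max}$ and $(1+\sigma^2)/s_i$ via the diagonal Rayleigh quotient, and finish by showing with the strong law that one row sum is of order $n$. Your Step~1 worry about $\min_i s_i$ is unnecessary, since only $s_{\min}\ge 1$ and $s_{\max}=\Theta(n)$ are used, exactly as in the paper; your factorized bound $\kappa(x_i,x_j)\ge e^{-\|x_i\|^2}e^{-\|x_j\|^2}$ merely replaces the paper's conditional law of large numbers for $\kappa(x_i,x_j)$ given $x_i$, with the small bonus of a deterministic constant.
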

\begin{proof}
    We write $A\sim B$ to denote that $A$ and $B$ are similar matrices, sharing the same eigenvalues.
    Then $D^{-1}(G+\sigma^2I_n) \sim D^{-1/2}(G+\sigma^2I_n)D^{-1/2}=:\tilde{A}+\sigma^2D^{-1}$.
    Since $D^{-1}G$ is a row stochastic matrix, we have $\lambda_1(\tilde{A})=\lambda_1(D^{-1}G)=1$.
    Also, by Weyl's inequality, $\lambda_1(\tilde{A}+\sigma^2D^{-1})\leq 1+ \sigma^2/s_{\min} \leq 1+\sigma^2$ since $s_i\geq 1$ for any $i\in [n]$.
    Therefore,
    \begin{align*}
        \lambda_1(\tilde{A}+\sigma^2D^{-1}) \in [1, 1+\sigma^2].
    \end{align*}
    On the other hand, $\lambda_n(\tilde{A}+\sigma^2D^{-1})\geq \sigma^2/s_{\max}$, and for a basis vector $e_i$, $\lambda_n(\tilde{A}+\sigma^2D^{-1})\leq \frac{e_i^\top (\tilde{A}+\sigma^2D^{-1})e_i}{e_i^\top e_i} \leq \frac{1+\sigma^2}{s_i}$. 
    Since this holds for any $i\in [n]$, we have
    \begin{align*}
        \lambda_n(\tilde{A}+\sigma^2D^{-1})\in [\sigma^2/s_{\max}, (1+\sigma^2)/s_{\max}].
    \end{align*}
    Combined,
    \begin{align*}
        \frac{\lambda_1(D^{-1}(G+\sigma^2I_n))}{\lambda_n(D^{-1}(G+\sigma^2I_n))} \in \left[
        \frac{s_{\max}}{\sigma^2+1}, \frac{s_{\max}(\sigma^2+1)}{\sigma^2}
        \right].
    \end{align*}
    Trivially, $s_{\max} \leq n$ since all elements are in $[0,1]$.
    For any $i\in [n]$, consider
    \begin{align*}
        \frac{s_i}{n}= \frac{1}{n}+ \frac{n-1}{n}\frac{\sum_{j\neq i}\kappa(x_i,x_j)}{n-1}.
    \end{align*}
    Conditioned on $x_i$, $\kappa(x_i,x_j)$ are iid with mean $\mrm{E}_x\kappa(x_i,x)\in (0,1)$.
    Therefore, conditionally on $x_i$, by the law of large number, 
    \begin{align*}
        \mrm{P}\left(\frac{\sum_{j\neq i}\kappa(x_i,x_j)}{n-1}\to \mrm{E}_x\kappa(x_i,x) \mid x_i\right)=1.
    \end{align*}
    By taking expectation with respect to $x_i$, the convergence holds unconditionally.
    Note that $\mrm{E}_x\kappa(x_i,x)$ and $\mrm{E}_{x,x'}\kappa(x,x')$ exist as a Gaussian integral.
    Therefore, for large enough $n$, writing $\rho:=\mrm{E}_{x,x'}\kappa(x,x')\in (0,1)$,
    \begin{align*}
        s_{\max} \geq s_i = 1 + (n-1)\rho
    \end{align*}
    almost surely.
    Combined, $s_{\max}=\Theta(n)$ almost surely, which completes the proof.
\end{proof}

\subsection{Population Optimizer of NLL}
\begin{theorem}\label{supp-thm:popmini}
For each $Z^{(0)}$, define the truncated conditional density
\begin{align*}
    p_{(a,b]}(y\mid Z^{(0)}) 
    := p(y\mid Z^{(0)})\mathbf 1_{\{y\in(a,b]\}} \Big/ 
    \int_a^b p(t\mid Z^{(0)})dt,
\end{align*}
assuming $\int_a^b p(t\mid Z^{(0)})dt>0$ almost surely.
Consider the class of piecewise-constant conditional densities supported on $(a,b]$:
\begin{align*}
    \mathcal Q 
    := \left\{ q(\cdot\mid Z^{(0)}):\ 
    q(y\mid Z^{(0)})=\sum_{c=1}^C \mathbf 1_{\{y\in[\gamma_c,\gamma_{c+1})\}}\Delta_c^{-1}q_c(Z^{(0)}),\
    q_c(Z^{(0)})\ge 0,\
    \sum_{c=1}^C q_c(Z^{(0)})=1 \right\}.
\end{align*}
Define the truncated population negative log-likelihood
\begin{align*}
    \mathcal L_{\mathrm{tr}}(q)
    := \mathbb E_{Z^{(0)}} \mathbb E_{y\sim p_{(a,b]}(\cdot\mid Z^{(0)})}\left[-\log q(y\mid Z^{(0)})\right].
\end{align*}
Then the (almost surely unique) minimizer of $\mathcal L_{\mathrm{tr}}(q)$ over $\mathcal Q$ is
\begin{align*}
    q^\ast(y\mid Z^{(0)}) 
    = \sum_{c=1}^C \mathbf 1_{\{y\in[\gamma_c,\gamma_{c+1})\}}\Delta_c^{-1} q_c^\ast(Z^{(0)}),
    \quad q_c^\ast(Z^{(0)}) := 
    \int_{\gamma_c}^{\gamma_{c+1}} p_{(a,b]}(y\mid Z^{(0)})dy
\end{align*}
Equivalently, $q^\ast(\cdot\mid Z^{(0)})$ is obtained by bin-averaging $p(\cdot\mid Z^{(0)})$ on $(a,b]$ and then normalizing by the total mass $\int_a^b p(t\mid Z^{(0)})dt$, in addition to the $\Delta_c$ factor.
\end{theorem}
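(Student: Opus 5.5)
The plan is to decouple the population loss across contexts and then, for each fixed context, reduce the problem to a finite-dimensional cross-entropy minimization over the simplex. Because $\mathcal Q$ places no constraint tying $q_c(Z^{(0)})$ at one context to its value at another, the outer expectation over $Z^{(0)}$ can be minimized pointwise. Concretely, if for almost every $Z^{(0)}$ the inner conditional risk is minimized by $q^\ast(\cdot\mid Z^{(0)})$, then $\mathcal L_{\mathrm{tr}}(q)\ge \mathcal L_{\mathrm{tr}}(q^\ast)$ for every $q\in\mathcal Q$. One measure-theoretic point needs care: $q^\ast_c(Z^{(0)})$ must be a measurable function of $Z^{(0)}$. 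This holds because it is an integral of the jointly measurable density $p(y\mid Z^{(0)})$, divided by the positive, measurable normalizer $\int_a^b p(t\mid Z^{(0)})dt$.

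Fix $Z^{(0)}$ and write $\pi_c:=\int_{\gamma_c}^{\gamma_{c+1}}p_{(a,b]}(y\mid Z^{(0)})dy$, so that $\sum_c\pi_c=1$.
\begin{itemize}
\item Since $q(\cdot\mid Z^{(0)})$ is constant on each bin and $p_{(a,b]}$ is supported on $(a,b]$, the inner risk becomes
\begin{align*}
\mathbb E_{y\sim p_{(a,b]}}\bigl[-\log q(y\mid Z^{(0)})\bigr]=\sum_{c=1}^C \pi_c\log\Delta_c-\sum_{c=1}^C\pi_c\log q_c(Z^{(0)}).
\end{align*}
The half-open bin conventions ($[\gamma_c,\gamma_{c+1})$ versus $(\gamma_c,\gamma_{c+1}]$) do not matter here, since endpoints are Lebesgue-null.
\item The first term does not depend on $q$. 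Minimizing the second term over the simplex is Gibbs' inequality:
\begin{align*}
-\sum_c\pi_c\log q_c=-\sum_c\pi_c\log\pi_c+\mathrm{KL}(\pi\,\|\,q)\ge -\sum_c\pi_c\log\pi_c,
\end{align*}
with equality if and only if $q_c=\pi_c$ for all $c$ with $\pi_c>0$. Because $\sum_c q_c=1$, equality then also forces $q_c=0$ whenever $\pi_c=0$.
\item Consequently $q_c=\pi_c=q_c^\ast$, which gives both the claimed formula and uniqueness of the coefficient vector for that $Z^{(0)}$.
\end{itemize}

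Uniqueness almost surely follows from a strictness argument. Suppose $q\in\mathcal Q$ has finite loss equal to $\mathcal L_{\mathrm{tr}}(q^\ast)$. Then the nonnegative function $Z^{(0)}\mapsto\mathrm{KL}(\pi(Z^{(0)})\,\|\,q(Z^{(0)}))$ has zero expectation, so it vanishes almost surely, and hence $q=q^\ast$ almost surely.

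Two finiteness issues must be checked.
\begin{itemize}
\item $\mathcal L_{\mathrm{tr}}(q^\ast)$ should be finite. This requires the bin-entropy term and $\sum_c\pi_c\log\Delta_c$ to be integrable. Both hold because $C$ is finite, the entropy is bounded by $\log C$, and the $\Delta_c$ are fixed positive numbers.
\item If $\mathcal L_{\mathrm{tr}}(q)=\infty$, then $q$ is trivially not a minimizer.
\end{itemize}

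The last sentence of the statement is just a rewriting of $p_{(a,b]}$: the truncated density is $p$ restricted to $(a,b]$ and normalized by $\int_a^bp(t\mid Z^{(0)})dt$. The only place I expect to need care is the pointwise-to-population reduction, specifically measurability and the handling of infinite-loss competitors. The core optimization itself is the standard fact that cross-entropy is minimized by the true distribution.
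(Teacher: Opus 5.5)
Your proposal is correct and follows essentially the same route as the paper: condition on $Z^{(0)}$, drop the $q$-independent term $\sum_c \pi_c\log\Delta_c$, and minimize the categorical cross-entropy over the simplex, whose unique minimizer is $q_c=\pi_c$. Your additional checks go beyond the paper's one-line ``taking expectation over $Z^{(0)}$'' step: measurability of $q^\ast_c$, finiteness of $\mathcal L_{\mathrm{tr}}(q^\ast)$, and the KL-based argument for almost-sure uniqueness.
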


\begin{proof}
Condition on $Z^{(0)}$. Any $q(\cdot\mid Z^{(0)})\in\mathcal Q$ takes the form
\(
q(y\mid Z^{(0)})=\Delta_c^{-1} q_c(Z^{(0)})
\)
for $y\in[\gamma_c,\gamma_{c+1})$.
Therefore,
\begin{align*}
    \mathbb E_{y\sim p_{(a,b]}(\cdot\mid Z^{(0)})}\left[-\log q(y\mid Z^{(0)})\right]
    &= \sum_{c=1}^C \int_{\gamma_c}^{\gamma_{c+1}} p_{(a,b]}(y\mid Z^{(0)})\left[-\log\left(\Delta_c^{-1} q_c(Z^{(0)})\right)\right]dy\\
    &= \sum_{c=1}^C \left(-\log q_c(Z^{(0)})+\log \Delta_c\right)\underbrace{\int_{\gamma_c}^{\gamma_{c+1}} p_{(a,b]}(y\mid Z^{(0)})dy}_{=:p_c(Z^{(0)})}.
\end{align*}
The term $\sum_{c=1}^C (\log \Delta_c)p_c(Z^{(0)})$ does not depend on $q$, so minimizing the conditional expected NLL over $\mathcal Q$ is equivalent to minimizing
\(
\sum_{c=1}^C p_c(Z^{(0)})(-\log q_c(Z^{(0)}))
\)
over the simplex $\{q_c\ge 0,\ \sum_c q_c=1\}$.
This is the categorical cross-entropy between $p(Z^{(0)})=(p_c(Z^{(0)}))_{c=1}^C$ and $q(Z^{(0)})=(q_c(Z^{(0)}))_{c=1}^C$, and its unique minimizer is $q_c(Z^{(0)})=p_c(Z^{(0)})$ for all $c$.
Thus,
\(
q_c^\ast(Z^{(0)})=p_c(Z^{(0)})=\int_{\gamma_c}^{\gamma_{c+1}} p_{(a,b]}(y\mid Z^{(0)})dy
\),
which yields the stated $q^\ast$. Taking expectation over $Z^{(0)}$ proves the result.
\end{proof}

\subsection{Kernel Ridge Regression (KRR)}

\begin{theorem}[Standard KRR]\label{supp-thm:KRR}
    Let $\{x_i, y_i\}_{1:n}$ with $x_i\in \mcl{X}$, $y_i\in \mbb{R}$, and let $\kappa:\mcl{X}\times \mcl{X}\to \mbb{R}$ be a positive semidefinite kernel of RKHS $\mcl{H}$ with norm $\|\cdot\|_{\mcl{H}}$. 
    Let $\sigma^2>0$ be the ridge parameter.
    Consider the kernel ridge regression (KRR) problem
    \begin{align*}
        u^\ast \in \arg\min_{u\in\mcl{H}}
        \sum_{i=1}^n \left(y_i-u(x_i)\right)^2 + \sigma^2 \|u\|_{\mcl{H}}^2.
    \end{align*}
    Let $G\in \mbb{R}^{n\times n}$ where $G_{ij}=\kappa(x_i, x_j)$ be the kernel Gram matrix, and $\lambda_1(G)$, $\lambda_n(G)$ be its maximum and minimum eigenvalues.
    Suppose $\eta^{(l)}>0$ satisfy $\sup_{l}\eta^{(l)}<2/\{\lambda_1(G)+\sigma^2\}$, and $\sum_{l=0}^\infty \eta^{(l)}=\infty$.
    Define $u^{(0)}\equiv 0$ and
    \begin{align*}
        u^{(l+1)}(\cdot)=(1-\eta^{(l)}\sigma^2)u^{(l)}(\cdot)
    +\eta^{(l)}\sum_{i=1}^n \kappa(x_i,\cdot)\bigl(y_i-u^{(l)}(x_i)\bigr), \quad l\geq 0.
    \end{align*}
    Then for all $i\in[n]$, $u^{(l)}(x_i)\to u^\ast(x_i)$, and for any fixed $x\in\mcl X$, $u^{(l)}(x)\to u^\ast(x)$. 
    Furthermore, if $\eta^{(l)}=\eta \in \left(0, 1/(\lambda_1(G)+\sigma^2)\right)$ for all $l\geq 0$, then for $L\geq 1$,
    \begin{align*}
    |u^{(L)}(x_i)-u^\ast(x_i)|
        &\leq \exp\left(-(1-\rho)L\right)
        \frac{\lambda_1(G)}{\lambda_n(G)+\sigma^2}\|Y\|_2,\quad i\in[n],\\
        |u^{(L)}(x)-u^\ast(x)|
        &\leq \exp\left(-(1-\rho)L\right)
        \frac{\|k_x\|_2}{\lambda_n(G)+\sigma^2}\|Y\|_2.
    \end{align*}
    where $\rho:=1-\eta(\lambda_{n}(G)+\sigma^2)$ is the convergence factor.
\end{theorem}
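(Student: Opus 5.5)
Since the Richardson update is linear in $u^{(l)}$, I would stack the values at the data points into $u^{(l)}_X\in\mbb{R}^n$ and work with the error vector. The update at the data points reads
\[
u^{(l+1)}_X=(I_n-\eta^{(l)}(G+\sigma^2 I_n))u^{(l)}_X+\eta^{(l)}GY ,
\]
and its fixed point $u^\ast_X$ solves $(G+\sigma^2I_n)u^\ast_X=GY$. By the representer theorem this fixed point equals the KRR solution at the data points. Writing $A:=G+\sigma^2 I_n\succ 0$, the error $e^{(l)}:=u^{(l)}_X-u^\ast_X$ obeys $e^{(l+1)}=(I_n-\eta^{(l)}A)e^{(l)}$. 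All the matrices $I_n-\eta^{(l)}A$ share the eigenbasis of $A$, so along an eigendirection with eigenvalue $\lambda_i(A)$ the error is multiplied by $\prod_l(1-\eta^{(l)}\lambda_i(A))$.

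For the general claim, the assumption $\sup_l\eta^{(l)}<2/\lambda_1(A)$ implies $|1-\eta^{(l)}\lambda_i(A)|\le 1-c\,\eta^{(l)}$ for some $c>0$ and every $i$. I would take $c=\min\{\lambda_n(A),\,2/\bar\eta-\lambda_1(A)\}$, where $\bar\eta=\sup_l\eta^{(l)}$; this case split handles both $\eta^{(l)}\lambda_i\le1$ and $\eta^{(l)}\lambda_i>1$. Then $\sum_l\eta^{(l)}=\infty$ forces $\prod_l(1-c\eta^{(l)})\le\exp(-c\sum_l\eta^{(l)})\to0$, so $u^{(l)}_X\to u^\ast_X$.

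For a query $x$, the same recursion gives
\[
u^{(l+1)}(x)=(1-\eta^{(l)}\sigma^2)u^{(l)}(x)+\eta^{(l)}k_x^\top(Y-u^{(l)}_X),
\]
and $u^\ast(x)=k_x^\top A^{-1}Y$ is its fixed point. The identity behind this is $\sigma^2u^\ast(x)=k_x^\top(Y-u^\ast_X)$, which follows from $Y-u^\ast_X=\sigma^2A^{-1}Y$. Subtracting gives a scalar recursion $\epsilon^{(l+1)}=(1-\eta^{(l)}\sigma^2)\epsilon^{(l)}-\eta^{(l)}k_x^\top e^{(l)}$. I see two ways to finish. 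The cleaner route is to note that, after $u^{(0)}\equiv0$, every iterate has the form $u^{(l)}(\cdot)=k_\cdot^\top\alpha^{(l)}$. The coefficients follow $\alpha^{(l+1)}=(I_n-\eta^{(l)}A)\alpha^{(l)}+\eta^{(l)}Y$, which is Richardson iteration for $A\alpha=Y$. Hence $u^{(l)}(x)-u^\ast(x)=k_x^\top(\alpha^{(l)}-A^{-1}Y)$ and $u^{(l)}_X-u^\ast_X=G(\alpha^{(l)}-A^{-1}Y)$. I would use this coefficient representation throughout, since it gives both statements at once.

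For the constant step $\eta<1/\lambda_1(A)$, every factor $1-\eta\lambda_i(A)$ lies in $(0,1)$ and the largest is $1-\eta\lambda_n(A)=\rho$. Section~\ref{supp-sec:richardson} then gives
\[
\|\alpha^{(L)}-A^{-1}Y\|_2\le\rho^L\|A^{-1}Y\|_2\le e^{-(1-\rho)L}\|Y\|_2/(\lambda_n(G)+\sigma^2).
\]
Cauchy--Schwarz yields the query bound $\|k_x\|_2\,e^{-(1-\rho)L}\|Y\|_2/(\lambda_n(G)+\sigma^2)$. At the data points, $|e_i^\top G(\alpha^{(L)}-A^{-1}Y)|\le\|G\|_2\|\alpha^{(L)}-A^{-1}Y\|_2$, which gives the factor $\lambda_1(G)/(\lambda_n(G)+\sigma^2)$.

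I expect the main obstacle to be the query-point extension. Tracking the scalar recursion directly is messy, so the key step is to observe and verify the coefficient representation $u^{(l)}=k_\cdot^\top\alpha^{(l)}$. This holds because the update only adds multiples of the functions $\kappa(x_i,\cdot)$. Checking it requires confirming that $u^{(l)}(x_i)=(G\alpha^{(l)})_i$ reproduces exactly the recursion in the statement. The divergent-step-sum case for the query also follows from the coefficient convergence, since $k_x$ is fixed.
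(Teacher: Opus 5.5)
Your proof is correct. It reaches the same closed form as the paper, $u^{(L)}(x)-u^\ast(x)=-k_x^\top\bigl(I_n-\eta(G+\sigma^2I_n)\bigr)^L(G+\sigma^2I_n)^{-1}Y$, but by a different route at the query point.

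\textbf{The paper's route.} It works only with function values. It first runs the Richardson error recursion for $u^{(l)}_X$. It then writes the scalar test-error recursion $e^{(l+1)}(x)=(1-\eta\sigma^2)e^{(l)}(x)-\eta k_x^\top e^{(l)}_X$, unrolls it, and collapses the sum with the telescoping identity $\bigl(\sum_{j=0}^{l-1}a^{l-1-j}A^j\bigr)(aI_n-A)=a^lI_n-A^l$, where $a=1-\eta\sigma^2$ and $A=aI_n-\eta G$. For variable step sizes it only argues informally: a contraction driven by a vanishing forcing term converges. Likewise, it simply asserts convergence of the training recursion under $\sup_l\eta^{(l)}<2/\lambda_1(A)$ and $\sum_l\eta^{(l)}=\infty$.

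\textbf{Your route.} You observe that every iterate has the form $k_\cdot^\top\alpha^{(l)}$, with $\alpha^{(l)}$ running Richardson iteration on $(G+\sigma^2I_n)\alpha=Y$. This gives the data-point and query-point statements at once and avoids the telescoping algebra. Combined with your explicit constant $c=\min\{\lambda_n(A),\,2/\bar\eta-\lambda_1(A)\}$, it also makes the variable-step convergence fully rigorous at both the data points and the query. The paper's sketch does not do this.

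\textbf{What the paper's route buys.} It tracks exactly the quantities the transformer stores in its tokens, namely the values $u^{(l)}(x_j)$ and $u^{(l)}(x)$, never coefficients. So it mirrors the construction in Theorem~\ref{thm:TFdoPPD} directly.

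\textbf{One point you take as known.} The paper spends a paragraph proving $u^\ast(x)=k_x^\top(G+\sigma^2I_n)^{-1}Y$ for every $x$ when $G$ is singular, since the representer coefficients are then non-unique. You can close this in one line. The reduced objective $\|Y-G\alpha\|_2^2+\sigma^2\alpha^\top G\alpha$ is convex, and $\alpha=(G+\sigma^2I_n)^{-1}Y$ zeroes its gradient, so that $\alpha$ minimizes over the span of the kernel sections, hence over $\mcl H$ by the representer theorem. Strict convexity of the KRR objective on $\mcl H$ then makes $k_\cdot^\top(G+\sigma^2I_n)^{-1}Y$ the unique minimizer $u^\ast$.
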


\begin{proof}
    By the representer theorem, we know that $u^\ast(\cdot)=\sum_{i=1}^n \alpha_i \kappa(x_i, \cdot)$ for some $\alpha=[\alpha_1,\cdots, \alpha_n]^\top\in \mbb{R}^n$.
    Therefore, we can write $u^\ast_X:=[u^\ast(x_1),\cdots, u^\ast(x_n)]^\top = G\alpha$, and note that $\|u^\ast\|^2_{\mcl{H}}=\inp{\sum_i \alpha_i \kappa(x_i,\cdot)}{\sum_i \alpha_j \kappa(x_j,\cdot)}_{\mcl{H}}=\sum_{ij}\alpha_i\alpha_j \kappa(x_i,x_j)=\alpha^\top G \alpha$.
    Combined, the KRR is equivalent to a quadratic optimization
    \begin{align*}
        \min_{\alpha \in \mbb{R}^n}\|Y-G\alpha\|^2_2+\sigma^2\alpha^\top G \alpha
    \end{align*}
    where $Y:=[y_1,\cdots, y_n]^\top$.
    Solving the gradient $-2G(Y-G\alpha)+2\sigma^2G\alpha=0$, the minimum satisfies $G(G+\sigma^2I_n)\alpha = GY$, and after rearranging, we have 
    \begin{align}\label{supp-eqn:f X ast}
        u^\ast_X = G\alpha=(G+\sigma^2I_n)^{-1}GY = G(G+\sigma^2I_n)^{-1}Y
    \end{align}
    (since if two invertible matrices $A$ and $B$ commute, so do $A$ and $B^{-1}$).
    
    On the other hand, for a test input $x\in \mcl{X}$, its function value is $u^\ast(x) = \sum_i\alpha_i \kappa(x_i,x)=k_x^\top \alpha$, where $(k_x)_i=\kappa(x_i, x)$ for $i\in [n]$.
    We know from \eqref{supp-eqn:f X ast} that $\alpha = (G+\sigma^2 I_n)^{-1}Y + \delta$ where $\delta \in \opn{ker}(G)$; hence $k_x^\top \alpha = k_x^\top (G+\sigma^2 I_n)^{-1}Y + k_x^\top \delta$.
    Let $f_\delta(\cdot):=\sum_{i=1}^n\delta_i \kappa(x_i,\cdot)\in \mcl{H}$ be the function in RKHS defined by $\delta$.
    Since $\delta\in \opn{ker}(G)$, we have $\|f_\delta\|_{\mcl{H}}^2= \delta^\top G\delta=0$, which implies that for any $x\in \mcl{X}$, $f_\delta(x)=\inp{f_\delta(\cdot)}{\kappa(x,\cdot)}_{\mcl{H}}=\sum_{i=1}^n \delta_i\kappa(x_i,x)=k_x^\top \delta=0$.
    Therefore,
    \begin{align}\label{supp-eqn:f ast}
        u^\ast(x) = k_x^\top \alpha = k_x^\top (G+\sigma^2 I_n)^{-1}Y.
    \end{align}
    
    Note that $u^\ast_X$ is a solution to the linear equation $(G+\sigma^2 I_n)u_X^\ast = GY$.
    Define the corresponding Richardson iteration with $\{\eta^{(l)}\}_{l\geq 0}$ as the \emph{training recursion}:
    \begin{align}\label{supp-eqn:grad train}
        u^{(l+1)}_X = (1-\eta^{(l)}\sigma^2)u^{(l)}_X + \eta^{(l)}G(Y-u^{(l)}_X).
    \end{align}
    Equivalently, for each coordinate $j\in [n]$, 
    \begin{align}\label{supp-eqn:grad train j}
        u^{(l+1)}(x_j) = (1-\eta^{(l)}\sigma^2) u^{(l)}(x_j) + \eta^{(l)} \sum_{i=1}^n k(x_i, x_j)(y_i-u^{(l)}(x_i))
    \end{align}
    
    \textbf{Convergence of the training recursion.}
    We repeat the same argument given in Section~\ref{supp-sec:richardson}.
    Rearranging \eqref{supp-eqn:grad train} in terms of the error $u^{(l)}_X-u^\ast_X$, and noting that $(G+\sigma^2 I_n)u^\ast_X = GY$, we have
    \begin{align*}
        u^{(l+1)}_X-u^\ast_X 
        &= \left(I_n - \eta^{(l)}(G+\sigma^2 I_n)\right)(u^{(l)}_X-u^\ast_X)
    \end{align*}
    Using the spectral decomposition $G=Q\Lambda Q^\top$, we have $I_n-\eta^{(l)}(G+\sigma^2I_n) = Q(I_n-\eta^{(l)}(\Lambda+\sigma^2I_n))Q^\top$; hence
    \begin{align*}
        \|u^{(l+1)}_X-u^\ast_X\|_2\leq \left(\max_i \left|1-\eta^{(l)}(\lambda_i(G)+\sigma^2)\right|\right)
        \|u^{(l)}_X-u^\ast_X\|_2.
    \end{align*}
    Therefore, \eqref{supp-eqn:grad train} converges if $0<\eta^{(l)}(\lambda_1(G)+\sigma^2)<2$ for all $l\geq 0$, $\sup_{l}\eta^{(l)}<2/\{\lambda_1(G)+\sigma^2\}$, and $\sum_{l=0}^\infty \eta^{(l)}=\infty$.

    \textbf{Convergence rate of the training recursion.}
    Suppose $\eta^{(l)}=\eta \in \left(0, (\lambda_{1}(G)+\sigma^2)^{-1}\right)$ is a constant.
    Then we have $\max_i\left|1-\eta(\lambda_i(G)+\sigma^2)\right|=1-\eta (\lambda_n(G)+\sigma^2)$, and noting that $u^{(0)}:=0$,
    \begin{align*}
        \|u^{(L)}_X-u^\ast_X\|_2 &\leq (1-\eta \left(\lambda_{n}(G)+\sigma^2)\right)^L\|u^\ast_X\|_2\\
        &\leq \exp\left(-\eta (\lambda_{n}(G)+\sigma^2)L\right) \|G(G+\sigma^2I_n)^{-1}Y\|_2\\
        &\leq \exp\left(-\eta (\lambda_{n}(G)+\sigma^2)L\right)\frac{\lambda_{1}(G)}{\lambda_{n}(G)+\sigma^2}\|Y\|_2.
    \end{align*}

    \textbf{Convergence of the testing recursion.}
    For each $u^{(l)}_X$, define the \emph{testing recursion} $u^{(l)}(x)$ as $u^{(0)}(x):=0$ and for $l\geq 0$,
    \begin{align}\label{supp-eqn:grad test}
        u^{(l+1)}(x) 
        &= (1-\eta^{(l)}\sigma^2)u^{(l)}(x) + \eta^{(l)}\sum_{i=1}^n k(x_i,x)(y_i - u^{(l)}(x_i))\nonumber\\
        &= (1-\eta^{(l)}\sigma^2)u^{(l)}(x) + \eta^{(l)}k_x^\top(Y-u^{(l)}_X).
    \end{align}
    If $0<\eta^{(l)}(\lambda_1(G)+\sigma^2)<2$ and $\sum_{l=0}^\infty \eta^{(l)}=\infty$, we know that $u^{(l)}\to u^\ast$, 
    and we can see that \eqref{supp-eqn:grad test} still holds if we plug $u^\ast_X$ into $u^{(l)}_X$ and $u^\ast(x)$ into $u^{(l+1)}(x)$ and $u^{(l)}(x)$:
    \begin{align}\label{supp-eqn:grad test fixed}
        u^\ast(x) = (1-\eta^{(l)}\sigma^2)u^\ast(x) + \eta^{(l)}k_x^\top(Y-u^\ast_X).
    \end{align}
    Subtracting \eqref{supp-eqn:grad test fixed} from \eqref{supp-eqn:grad test} and writing $e^{(l)}_X:=u^{(l)}_X-u^\ast_X$ and $e^{(l)}(x):=u^{(l)}(x)-u^\ast(x)$, we have
    \begin{align}\label{supp-eqn:grad test error}
        e^{(l+1)}(x) =
        (1-\eta^{(l)}\sigma^2)e^{(l)}(x) - \eta^{(l)}k_x^\top e^{(l)}_X.
    \end{align}
    The first term is a contraction since $|1-\eta^{(l)}\sigma^2|<1$ under our assumption, and the second term vanishes as $u^\ast$ converges. 
    Therefore, $e^{(l)}(x)\to 0$, i.e., $u^{(l)}(x) \to u^\ast(x)$.

    \textbf{Convergence rate of the testing recursion.}
    Since $u^{(0)}_X:=0$ and $u^{(0)}(x):=0$, we can write
    \begin{align*}
        e^{(0)}_X = -u^\ast_X=-G(G+\sigma^2 I_n)^{-1}Y,\quad e^{(0)}(x)=-u^\ast(x)=-k_x^\top (G+\sigma^2 I_n)^{-1}Y.
    \end{align*}
    Suppose $\eta^{(l)}=\eta \in \left(0, (\lambda_1(G)+\sigma^2)^{-1}\right)$.
    Define $a:=1-\eta\sigma^2$, $A:=I-\eta(G+\sigma^2I_n)=aI_n-\eta G$.
    Since the training error proceeds as $e^{(l)}_X = A^l e^{(0)}_X$, accumulating \eqref{supp-eqn:grad test error}, we see that the test recursion error rolls out as
    \begin{align*}
        e^{(l)}(x) &= a^l e^{(0)}(x) -\eta k_x^\top\left(a^{l-1}I_n + a^{l-2}A +\cdots+A^{l-1}\right)e^{(0)}_X\\
        &= \left(-a^l k_x^\top  +k_x^\top\left( \sum_{j=0}^{l-1}a^{l-1-j}A^j \right)\eta G \right)(G+\sigma^2 I_n)^{-1}Y.
    \end{align*}
    Note that $\eta G = aI_n -A$ by the definition, and
    \begin{align*}
        \left( \sum_{j=0}^{l-1}a^{l-1-j}A^j \right)\eta G=
        \left( \sum_{j=0}^{l-1}a^{l-1-j}A^j \right)(aI_n -A)
        =a^lI_n - A^l.
    \end{align*}
    Therefore, $e^{(l)}(x) = -k_x^\top \left(I-\eta(G+\sigma^2I_n)\right)^l(G+\sigma^2 I_n)^{-1}Y$. 
    Taking absolute value, we see that for $L\geq 0$,
    \begin{align*}
        |u^{(L)}(x)-u^\ast(x)|
        &\leq \exp\left(-\eta(\lambda_n(G)+\sigma^2)L\right)\frac{\|k_x\|_2}{\lambda_n(G)+\sigma^2}\|Y\|_2.
    \end{align*}
    which completes the proof.
\end{proof}
\section{Proof of Main Theorems}
Theorem~\ref{thm:lambda1} follows from Lemma~\ref{supp-lem:lienar kernel Gram}-\ref{supp-lem:rbf kernel Gram}, and Theorem~\ref{thm:cond number} is a restatement of Lemma~\ref{supp-lem:rbf kernel Gram ridge prec}.

\newpage 
\subsection{Proof of Theorem~\ref{thm:TFdoPPD}}
\label{supp-pf:TFdoPPD}
\begin{proof}
{\color{black}
  Set token dimension $d'=d+4$. Let $\vartheta$ be (unspecified elements are $0$), for $l=0,\cdots, L-1$, 
    \begin{align}\label{supp-eqn:weights}
        \begin{split}
            K^{(l)} &= Q^{(l)} = \opn{diag}((1_d^\top, 0_4^\top)),\\ 
        V^{(0)}_{d+2,d+1}&=1,\quad V^{(l)}_{d+3,d+1}=V^{(l)}_{d+4,d+2}=\eta^{(l)},\quad 
        V^{(l)}_{d+3,d+3}=V^{(l)}_{d+4,d+4}=-\eta^{(l)},\\
        S^{(l)}_{d+3,d+3}&=S^{(l)}_{d+4,d+4}=-\eta^{(l)}\sigma^2,\\
        W&=\begin{bmatrix}
            0_d^\top & 0 & 0 & 1 & 0\\
            0_d^\top & \sigma^2 & 1 & 0 & -1\\
        \end{bmatrix}\in \mbb{R}^{2\times (d+4)}.
        \end{split}
    \end{align}
    where $\eta^{(l)}>0$ are chosen such that $\sup_{l}\eta^{(l)}<2/\{\lambda_1(G)+\sigma^2\}$ and $\sum_{l=0}^\infty \eta^{(l)}=\infty$.
    
    \textbf{Step 1 (First attention layer output)}.  
    For $l=0,\cdots ,L$, we write $K^{(l)}Z^{(l)}=[k^{(l)}_1,\cdots, k^{(l)}_{n+1}]$, $Q^{(l)}Z^{(l)}=[q^{(l)}_1,\cdots, q^{(l)}_{n+1}]$, and $V^{(l)}Z^{(l)}=[v^{(l)}_1,\cdots, v^{(l)}_{n+1}]$. The first attention layer in \eqref{eqn:1st attn} is expressed as, for $j=1,\dots,n+1$, 
    \begin{align*}
        [\opn{Attn}_{M^{(0)},\kappa}(Z^{(0)};V^{(0)},K^{(0)},Q^{(0)})]_{:,j} 
        &=\sum_{i=1}^{n+1}\kappa(k^{(0)}_i,q^{(0)}_j)M^{(0)}_{ij}v^{(0)}_i\\
        &=\kappa(k^{(0)}_{n+1},q^{(0)}_j)v^{(0)}_{n+1}\\
        &=[0_{d+1}^\top, \kappa(x_{n+1},x_j),0,0]^\top.
    \end{align*}
    where second equality follows from $M_{i,j}^{(0)}=1_{\{i>n\}}$ and third equality follows from $v_{n+1}^{(0)}=(0_{d+1}^\top,1,0,0)^\top$ (due to $V_{d+2,d+1}^{(0)}=1$) and $K^{(0)} = Q^{(0)} = \opn{diag}((1_d^\top, 0_4^\top))$.
    Therefore, 
    \[
    Z^{(1)}=
    \begin{bmatrix}
        X^\top & x_{n+1}\\
        Y^\top & 1\\
        k_x^\top & \kappa(x_{n+1}, x_{n+1})\\
        0_{2\times n} & 0_{2}
    \end{bmatrix}
    \]

    \textbf{Step 2 (Richardson iteration implementation)} 
   
    For $l\geq 1$, \eqref{eqn:l attn} gives us, for $j=1,\dots,n+1$,
    \begin{align*}
        [Z^{(l+1)}]_{:,j}&= [Z^{(l)}]_{:,j} + \sum_{i=1}^{n+1}\kappa(k^{(l)}_i,q^{(l)}_j)M_{ij}v^{(l)}_i + S^{(l)}[Z^{(l)}]_{:,j}\\
        &= 
        [Z^{(l)}]_{:,j} 
        + \eta^{(l)}\sum_{i=1}^{n}\kappa(x_i,x_j)
        \begin{bmatrix}
            0_{d+2} \\
            y_i-[Z^{(l)}]_{d+3,i}\\
            \kappa(x_{n+1},x_i)-[Z^{(l)}]_{d+4,i}
        \end{bmatrix}
        -\eta^{(l)}\sigma^2 
        \begin{bmatrix}
            0_{d+2} \\
            [Z^{(l)}]_{d+3,j}\\
            [Z^{(l)}]_{d+4,j}
        \end{bmatrix}
    \end{align*}
    where recall that $M_{i,j}=1_{\{i\le n\}}$. 
    The last two rows of $Z^{(l)}$, denoted as $[Z^{(l)}]_{d+3,:}$ and $[Z^{(l)}]_{d+4,:}$, correspond to the following recursions: initiating from $f^{(0)}(\cdot)\equiv 0$ and $g^{(0)}(\cdot)\equiv 0$,
    \begin{align}
        f^{(l+1)}(x_j)&=(1-\eta^{(l)}\sigma^2)f^{(l)}(x_j) + \eta^{(l)}\sum_{i=1}^n\kappa(x_i, x_j)\left(y_i-f^{(l)}(x_i)\right)\label{eqn:recur mean}\\
        g^{(l+1)}(x_j)&=(1-\eta^{(l)}\sigma^2)g^{(l)}(x_j) + \eta^{(l)}\sum_{i=1}^n\kappa(x_i, x_j)\left((k_x)_i-g^{(l)}(x_i)\right)\label{eqn:recur var}
    \end{align}
    Assuming $\eta^{(l)}\in \left(0, 2/(\lambda_{1}(G)+\sigma^2)\right)$ for all $l\geq 1$ and $\sum_{l=0}^\infty \eta^{(l)}=\infty$, as $l\to \infty$, by Theorem~\ref{supp-thm:KRR},
    \begin{align*}
    [f^{(l)}(x_1),\cdots, f^{(l)}(x_n)]^\top&\to K(G+\sigma^2 I_n)^{-1}Y,\\
    f^{(l)}(x_{n+1})&\to k_x^\top (G+\sigma^2 I_n)^{-1}Y,\\
    [g^{(l)}(x_1),\cdots, g^{(l)}(x_n)]^\top&\to K(G+\sigma^2 I_n)^{-1}k_x,\\
    g^{(l)}(x_{n+1})&\to k_x^\top (G+\sigma^2 I_n)^{-1}k_x.
    \end{align*}
     
    Therefore, for any column $j\in[n+1]$, the rows $[Z^{(l+1)}]_{d+3:,j}$ and $[Z^{(l+1)}]_{d+4:,j}$ evolve as \eqref{eqn:recur mean} and \eqref{eqn:recur var} respectively, each starting from $0$.
    For large enough $L$, $[Z^{(L)}]_{d+3:,n+1}\approx\mu(Z^{(0)})$, and $\sigma^2+[Z^{(l+1)}]_{d+2:,n+1}-[Z^{(l+1)}]_{d+4:,n+1} \approx \tau(Z^{(0)})$.
    The convergence rates follow from Theorem~\ref{supp-thm:KRR}.
    }
\end{proof}

\subsection{Proof of Theorem~\ref{thm:PPD convergence}}
\label{supp-sec:thrm4.1}

\begin{proof}
    We first state the regularity conditions.
    For GP regression, the posterior predictive distribution is Gaussian with
    $\tau(Z^{(0)})\geq\sigma^2>0$, so these conditions hold on any fixed truncation interval as long as the predictive mean and variance remain in a compact subset.
    We assume, uniformly over the context $Z^{(0)}$,
    \begin{enumerate}
        \item $\left(\mu(Z^{(0)}),\tau(Z^{(0)})\right) \in \mcl{K}$ for some $\mcl{K}\subset \mbb{R}\times (0,\infty)$ and $\psi$ is approximated uniformly on $\mcl{K}$ by MLP $\tilde\psi$ with a non-polynomial activation,
        \item the target PPD belongs to a one-dimensional exponential family $f_\theta(y)$,
        \item $f_\theta(y)$ is Lipschitz continuous on $(a,b]$ and the grid is equidistant.
    \end{enumerate}

    Fix a compact set $\mcl{K}\subset \mbb{R}\times (0,\infty)$ on which the mapping $\psi:\mcl{K}\to \mbb{R}^2$ is defined as $\psi:(\mu,\tau)\mapsto\left(\frac{\mu}{\tau},-\frac{1}{2\tau}\right)$.
    Such $\mcl{K}$ is plausible given that $y \in (a,b]$ and $\tau(Z^{(0)}) \in [\sigma^2, \sigma^2+\kappa(x,x)]$ for any $Z^{(0)}$.
    By the universal approximation theorem \citep{hornik1991approximation,leshno1993multilayer}, for any $\delta_{\opn{MLP}}>0$, there exists a single layer MLP $\tilde\psi:\mcl{K}\to\mbb{R}^{2}$ of the form
    \[
    \tilde{\psi}(\theta)=\tilde W_2\,\opn{act}(\tilde W_1\theta+\tilde h_1)+\tilde h_2,
    \]
    with a non-polynomial activation function $\opn{act}$ (e.g., ReLU), such that $\sup_{\theta\in\mcl{K}}\|\psi(\theta)-\tilde\psi(\theta)\|_2\le \delta_{\opn{MLP}}$.
    For a Lipschitz activation, $\tilde{\psi}$ is also Lipschitz; for any $x, x' \in \mcl{K}$, $\|\tilde{\psi}(x)-\tilde\psi(x')\|_2 \leq L_{\tilde{\psi}} \|x-x'\|_2$.
    
    Define the midpoint $\xi_c := (\gamma_c+\gamma_{c+1})/2$ and $\Xi \in \mbb{R}^{C\times 2}$ as $\Xi_{c,:}=[\xi_c,\xi_c^2]^\top$ for $c\in [C]$.
    Fix $Z^{(0)}$.
    Let $\opn{TF}_{L,\theta}(Z^{(0)})=:\left[\mu_L(Z^{(0)}), \tau_L(Z^{(0)})\right]^\top$.
    We suppress the notation $Z^{(0)}$ and define the logits $\ell_1, \ell_2,\ell_3 \in \mbb{R}^C$ as
    \begin{align*}
        \ell_1&:=\Xi\tilde{\psi}(\mu_L,\tau_L) \quad\text{Transformer logit},\\
        \ell_2&:=\Xi\tilde{\psi}(\mu,\tau) \quad\text{exact moments $\mu$, $\tau$},\\
        \ell_3&:=\Xi\psi(\mu,\tau) \quad\text{exact natural parameter conversion $\psi$}.
    \end{align*}
    Recall that by Theorem~\ref{thm:TFdoPPD}, $\left\|(\mu,\tau)-(\mu_L,\tau_L)\right\|_\infty \lesssim \exp(-(1-\rho) L)$ which implies $\left\|(\mu,\tau)-(\mu_L,\tau_L)\right\|_2 \lesssim \exp(-(1-\rho) L)$. 
    Therefore,
    \begin{align*}
        \|\ell_1-\ell_2\|_2 \leq 
        \|\Xi\|_2 \left\|\tilde{\psi}(\mu,\tau) - \tilde{\psi}(\mu_L,\tau_L)\right\|_2\leq
        \|\Xi\|_2 L_{\tilde{\psi}}\left\|(\mu,\tau)-(\mu_L,\tau_L)\right\|_2 \lesssim \exp(-(1-\rho)L),
    \end{align*}
    where $\|\Xi\|_2$ is the fixed, largest singular value of $\Xi$.
    Moreover, by the definition of $\tilde{\psi}$, it holds that $\|\ell_2 - \ell_3\|_2 \leq \|\Xi\|_2 \delta_{\opn{MLP}}$.
    Combined, we have
    \begin{align*}
        \|\ell_1-\ell_3\|_\infty \leq \|\ell_1-\ell_3\|_2 \lesssim \delta_{\opn{MLP}} + \exp(-(1-\rho)L).
    \end{align*}
    Define probability vectors $p_l:=\opn{sm}(\ell_l)$ for $l\in\{1,3\}$, and write $\Delta$ as an upper bound of the $\ell_\infty$ difference of the logits $\|\ell_1-\ell_3\|_\infty \leq \Delta$.
    This implies that, for any $c\in [C]$,
    \begin{itemize}
        \item $(\ell_1)_c- (\ell_3)_c \in [-\Delta, \Delta]$ 
        \item $e^{(\ell_1)_c} = e^{(\ell_3)_c} e^{(\ell_1)_c - (\ell_3)_c} \in [e^{-\Delta} e^{(\ell_3)_c}, e^{\Delta} e^{(\ell_3)_c}]$
        \item $\sum _c e^{(\ell_1)_c}  \in [e^{-\Delta} \sum _c e^{(\ell_3)_c}, e^{\Delta} \sum _c e^{(\ell_3)_c}]$
    \end{itemize}
    Therefore, $\frac{(p_1)_c}{(p_3)_c} = 
        e^{(\ell_1)_c - (\ell_3)_c}
        \frac{\sum_{c'}e^{(\ell_3)_{c'}}}{\sum_{c'}e^{(\ell_1)_{c'}}} \in
        [e^{-2\Delta}, e^{2\Delta}]$,
    and
    \begin{align*}
        (p_1)_c\leq e^{2\Delta}(p_3)_c &\Rightarrow
        (p_1)_c - (p_3)_c\leq (e^{2\Delta}-1)(p_3)_c,\\
        (p_1)_c\geq e^{-2\Delta}(p_3)_c &\Rightarrow
        (p_1)_c - (p_3)_c\geq (e^{-2\Delta}-1)(p_3)_c.
    \end{align*}
    Since $\Delta>0$ and $(p_3)_c \leq 1$, this implies that, for a small $\Delta$,
    \begin{align*}
        \|p_1-p_3\|_1\leq e^{2\Delta}-1 = O(\Delta)
    \end{align*}
    We construct a piecewise continuous density on $(a,b]$ with probability vectors $p_1$ and $p_3$:
    \begin{align*}
        q_\vartheta(y\mid Z^{(0)}) &:=
        \sum_{c=1}^C
        \frac{\mbf 1_{y\in(\gamma_{c}, \gamma_{c+1}]}}{(b-a)/C}
        \opn{sm}\left( \Xi \tilde{\psi}(\mu_L,\tau_L)\right)_c,\\
        q_3(y\mid Z^{(0)}) &:=
        \sum_{c=1}^C
        \frac{\mbf 1_{y\in(\gamma_{c}, \gamma_{c+1}]}}{(b-a)/C}
        \opn{sm}\left( \Xi \psi(\mu,\tau)\right)_c.
    \end{align*}
    By Lemma~\ref{supp-lem:discretization error}-\ref{supp-lem:smax quad}, $\|p_n-q_3\|_{L^1(a,b]} \lesssim 1/C + \varepsilon_{\text{tail}}(Z^{(0)})$, and since $\|q_\vartheta-q_3\|_{L^1(a,b]}=\|p_1-p_3\|_1$, we conclude that
    \begin{align*}
        \opn{TV}(p_n,q_\vartheta) \lesssim \exp(-(1-\rho)L)+  \delta_{\opn{MLP}} + 1/C +\varepsilon_{\text{tail}}(Z^{(0)}),
    \end{align*}
    which completes the proof.
\end{proof}

Although we focus on GP regression settings where PPD is Gaussian and $h$ is a constant, if the target exponential family distribution contains non-constant $h$ terms, the parameter mapping $\psi$ and data mapping $T$ can be appropriately expanded such that $h$ becomes a constant on its support. For example, if the target exponential density is an inverse Gaussian distribution parameterized by mean $\mu$ and variance $\tau$ so that $f_\theta(y) = {\sqrt {\frac {\mu^3/\tau }{2\pi y^{3}}}}\exp \left(-{\frac {\mu(y-\mu )^{2}}{2y\tau}}\right)1_{\{y>0\}} = \exp\left( \langle(-0.5\mu/\tau, -0.5\mu^3/\tau), (y,1/y)\rangle +\frac{\mu^2}{\tau}+\frac{1}{2}(\log\frac{\mu^3}{\tau}-\log(2\pi)) \right)y^{-3/2}1_{\{y>0\}} $, we parametrize it as $f_\theta(y) = \exp(\langle\psi(\mu, \tau), T(y)\rangle +\frac{\mu^2}{\tau}+\frac{1}{2}(\log\frac{\mu^3}{\tau}-\log(2\pi)))h(y)$ with $T(y)=(y,1/y,\log y)$, $\psi(\mu,\tau) = (-\mu/(2\tau), -\mu^3/(2\tau), -3/2)$, and $h(y) = 1_{\{y>0\}}$.

\subsection{Proof of Theorem~\ref{thm:L growth}}
\begin{proof}
    Under the assumption, $\opn{TF}_{\vartheta,L}^{\opn{pr}}$ implements $L$ iterations of the preconditioned Richardson iteration solving \eqref{eqn:richardson prec}.
    As such, the theorem addresses bounding the finite iteration convergence error of the two testing recursions to obtain the linear readouts $\mu(Z^{(0)})$ and the variance reduction term, which, from the proof of Theorem~\ref{supp-thm:KRR}, has the same convergence factor as the training recursion.
    Since the error is stated in terms of the maximum of the two, it suffices to show for one recursion, say, to obtain $\mu(Z^{(0)})$.
    
    Without loss of generality, assume $\|\mu(Z^{(0)})\|_2=1$.
    Let $c_n:=\opn{cond}(D^{-1}(G+\sigma^2I_n))$.
    The convergence factor is, under the assumption of optimal step size, given by $\rho=1-2/(c_n+2)$ (see Section~\ref{supp-sec:richardson}).
    Let $u^{(L)}:=\opn{TF}_{\vartheta,L}^{\opn{pr}}(Z^{(0)})$ and $u^\ast:=\mu(Z^{(0)})$.
    Following Section~\ref{supp-sec:richardson}, suppose we have $\epsilon>0$ such that
    \begin{align*}
        \|u^{(L)}-u^\ast\|_2 \leq \rho^L\leq\epsilon.
    \end{align*}
    Rearranging and noting that $\rho\in (0,1)$,
    \begin{align*}
        L > \frac{\log \epsilon}{\log \rho} = 
        \frac{\log \epsilon}{\log \left(1-\frac{2}{c_n+1}\right)} \geq
        \frac{\log \epsilon}{-\frac{2}{c_n+1}} = \frac{c_n+1}{2}\log (1/\epsilon)
    \end{align*}
    where we used the fact that $\log(1-x)\leq -x$ around $0$.
    The rest follows from the fact that $c_n=\Theta(n)$ almost surely for large $n$ (see Lemma~\ref{supp-lem:rbf kernel Gram ridge prec}).
\end{proof}

\section{Details of Experiments}\label{supp-sec:exp details}
\textbf{Default hyperparameter choice.}
The effects of lengthscale $\ell$ and variance $\sigma^2$ are both-sided. For $\ell$, a smaller $\ell$ makes the GP rougher and more localized, making predictions harder, especially in higher $d$. However, it makes the associated linear system easier to solve. 
On the other hand, a larger $\ell$ makes the GP smoother but can worsen conditioning due to correlations.
Larger $\sigma$ makes the prediction harder but improves the conditioning via regularization.

Throughout our experiments, we fix the noise variance at $\sigma^2=0.2$.
For the BLR task with linear kernel, we use the diagonal covariance $\Sigma$ is segmented by input dimension $d$:
\[
\Sigma_{k,k} = 
\begin{cases} 
    2.0 & \text{if } 0 \le k < \lfloor \frac{d}{3} \rfloor \\
    1.0 & \text{if } \lfloor \frac{d}{3} \rfloor \le k < \lfloor \frac{2d}{3} \rfloor \\
    0.4 & \text{if } \lfloor \frac{2d}{3} \rfloor \le k < d
\end{cases}
\]
For the RBF task with RBF kernel, we set the amplitude at $\alpha=1$ and bandwidth $\ell=0.8$.
Additionally, for \textit{theory} mode for RBF task, we allow the step sizes of the drift term and the data residual terms (computed by the attention) to be different.

\textbf{MLP head.}
To isolate the effect of the attention layers, we fix the conversion module $\psi(\mu,\tau)$ by hard-coding the analytic map from predicted moments $(\mu,\tau)$ to the natural parameters of a Gaussian density.
In \emph{learnable} mode, we additionally introduce two learned scalar rescaling $s_1$ and $s_2$ applied to the outputs of $\psi(\mu,\tau)$.
We also fix the linear readout matrix $\Xi$ to its theoretical value.

\textbf{Pretraining details.}
We optimize our models using the Adam optimizer with a base learning rate of $\text{LR}_{\text{base}}$ and no weight decay. We employ a cosine learning rate decay schedule with a linear warmup phase. 
The warmup period lasts for the first $5\%$ of the total training steps, after which the learning rate follows a cosine decay to a minimum value of $0.1 \times \text{LR}_{\text{base}}$. 
Gradients are clipped at a global norm of $1.0$ to ensure stability.
The pretraining hyperparameters are summarized in Table~\ref{supp-tab:pretraining hyperparameters}.
All the metrics in the figures are evaluated on $4096$ evaluation samples.
\begin{table}[h]
    \centering
    \begin{tabular}{l l c c}
    \toprule
    \textbf{Task} & \textbf{Figure} & $\text{LR}_{\text{base}}$ & $N$ \\
    \midrule
    BLR (theory), train $\opn{TF}_{\vartheta,L}$        & Fig.~\ref{fig:exp1}   & $2\times 10^{-4}$ & $10^{4}$ \\
    BLR (learnable), train $\opn{TF}_{\vartheta,L}$     & Fig.~\ref{fig:exp1}   & $2\times 10^{-4}$ & $2\times 10^{4}$ \\
    RBF (theory), train $\opn{TF}_{\vartheta,L}$        & Fig.~\ref{fig:exp1}   & $1\times 10^{-3}$ & $5\times 10^{4}$ \\
    RBF (learnable), train $\opn{TF}_{\vartheta,L}$     & Fig.~\ref{fig:exp1}   & $2\times 10^{-4}$ & $10^{5}$ \\
    RBF (learnable), train $\opn{TF}_{\vartheta,L}$ 
                        & Fig.~\ref{fig:exp2-1} & $2\times 10^{-4}$ & $10^{5}$ \\
    RBF (learnable), train $\opn{TF}_{\vartheta,L}^{\opn{pr}}$ 
                        & Fig.~\ref{fig:exp2-1} & $2\times 10^{-4}$ & $10^{5}$ \\
    RBF (learnable), train $\opn{TF}_{\vartheta,L}^{\opn{pr}}$ 
                        & Fig.~\ref{fig:exp2-2}-\ref{supp-fig:exp2 d8-2 uni} & $2\times 10^{-4}$ & $10^{5}$ \\
    \bottomrule
    \end{tabular}
    \vskip 0.25cm
    \caption{Pretraining hyperparameters for BLR and RBF tasks. All runs use batch size $128$ for $N$ iterations.}
    \label{supp-tab:pretraining hyperparameters}
\end{table}

\section{Additional Experimental Results}
\label{supp-sec:exp additional}

\textbf{Sensitivity to lengthscale.}
We conducted additional simulations to assess the sensitivity of our findings to the RBF lengthscale.
We fix $\sigma^2=0.2$ as in Section~\ref{sec:exp3}, set $n_{\max}=256$ and $d=16$, and vary
$\ell\in\{0.4,0.8,1.6\}$, including the main-paper setting $\ell=0.8$ for comparison.
In addition to interval coverage, we report the continuous ranked probability score (CRPS), a proper scoring rule for predictive cumulative distribution functions \citep{gneiting2007strictly}; lower CRPS indicates better predictive distributions, although its absolute scale depends on the scale of the response.
For each row, entries are ordered by attention depth $L\in\{8,16,32\}$.
\begin{table}[h]
    \centering
    \begin{tabular}{c c c c c c}
    \toprule
    $\ell$ & $n'$ & CRPS & $50\%$ & $90\%$ & $95\%$ \\
    \midrule
    $0.4$ & $256$  
    & $.538,\ .547,\ .539$
    & $.504,\ .487,\ .505$
    & $.904,\ .899,\ .904$
    & $.953,\ .946,\ .953$ \\
          & $1024$ 
    & $.540,\ .540,\ .538$
    & $.495,\ .485,\ .499$
    & $.896,\ .887,\ .894$
    & $.947,\ .943,\ .947$ \\
    \midrule
    $0.8$ & $256$  
    & $.318,\ .310,\ .307$
    & $.488,\ .499,\ .509$
    & $.890,\ .901,\ .905$
    & $.942,\ .951,\ .954$ \\
          & $1024$ 
    & $.271,\ .262,\ .258$
    & $.444,\ .478,\ .505$
    & $.848,\ .877,\ .897$
    & $.908,\ .932,\ .950$ \\
    \midrule
    $1.6$ & $256$  
    & $.157,\ .151,\ .149$
    & $.500,\ .498,\ .506$
    & $.901,\ .900,\ .908$
    & $.950,\ .948,\ .952$ \\
          & $1024$ 
    & $.147,\ .137,\ .135$
    & $.481,\ .479,\ .503$
    & $.883,\ .884,\ .908$
    & $.938,\ .943,\ .952$ \\
    \bottomrule
    \end{tabular}
    \vskip 0.2cm
    \caption{
    Sensitivity to RBF lengthscale $\ell$ with $d=16$, $\sigma^2=0.2$, $n_{\max}=256$, and $L\in\{8,16,32\}$.
    Entries in each metric column are ordered by $L=8,16,32$.
    CRPS denotes the continuous ranked probability score; lower is better.
    }
    \label{supp-tab:lengthscale-crps}
\end{table}
In this setting, CRPS decreases as $\ell$ increases, suggesting that smoother GP draws reduce the statistical difficulty of the prediction problem.
Across all lengthscales, increasing depth generally improves CRPS, with the clearest improvement at $\ell=0.8$.
The coverage values remain reasonably close to nominal even at $n'=1024>n_{\max}$, especially for larger depths.

\textbf{Hierarchical GP prior.}
We also evaluate a hierarchical GP setting with random hyperparameters.
Let $\theta \sim \sum_{h=1}^H \pi_h \delta_{\theta_h}$ for $\theta_h=(\ell_h,\sigma_h)$. 
Conditional on $\theta=\theta_h$, $f\sim \opn{GP}(0,\kappa_{\ell_h})$, $y_i=f(x_i)+\epsilon_i$ where $\epsilon_i\stackrel{\mathrm{iid}}{\sim}\mathcal N(0,\sigma_h^2)$.
In this setting, the PPD is tractable as a finite mixture of Gaussian predictive distributions.
For the experiment, we use the uniform prior over $\Theta=\{0.4,0.8,1.2\}\times\{0.1,0.2,0.3\}$.
We set $n_{\max}=256$, $d=16$, and otherwise match the setup of Section~\ref{sec:exp3}.
In each cell, entries are ordered by attention depth $L\in\{8,16,32\}$.

\begin{table}[h]
    \centering
    \begin{tabular}{c c c c c}
    \toprule
    $n'$ & CRPS & $50\%$ & $90\%$ & $95\%$ \\
    \midrule
    $256$
    & $.380,\ .378,\ .377$
    & $.573,\ .575,\ .590$
    & $.895,\ .897,\ .901$
    & $.937,\ .937,\ .942$ \\
    $512$
    & $.374,\ .371,\ .367$
    & $.570,\ .568,\ .590$
    & $.889,\ .891,\ .903$
    & $.926,\ .925,\ .932$ \\
    $1024$
    & $.358,\ .353,\ .346$
    & $.560,\ .563,\ .598$
    & $.887,\ .887,\ .902$
    & $.930,\ .929,\ .940$ \\
    \bottomrule
    \end{tabular}
    \vskip 0.2cm
    \caption{
    Hierarchical GP experiment with random hyperparameters
    $\theta=(\ell,\sigma)\in\{0.4,0.8,1.2\}\times\{0.1,0.2,0.3\}$,
    $d=16$, $n_{\max}=256$, and $L\in\{8,16,32\}$.
    Entries in each metric column are ordered by $L=8,16,32$.
    }
    \label{supp-tab:hierarchical-gp}
\end{table}

The results are consistent with the main findings.
CRPS improves with depth $L$, while coverage remains reasonably calibrated for
$n'>n_{\max}$, especially at the $90\%$ and $95\%$ levels.
This suggests that the depth effect and normalized-attention mechanism identified in the fixed-hyperparameter GP setting remain relevant in this more complex hierarchical setting.

\textbf{Additional generalization results.}
We repeat the generalization experiment from Figure~\ref{fig:exp2-2} and \ref{supp-fig:exp2-3} for dimensions $d\in \{4,8\}$, with results shown in Figure~\ref{supp-fig:exp2 d4-1}-\ref{supp-fig:exp2 d8-2}.
Across dimensions, the same pattern emerges: generalization improves with deeper $L$ and larger $n_{\max}$, except in the case $d=8$, where the $L=16$ model exhibits higher error than shallower counterparts, likely due to a training failure.

\textbf{Covariate shifts.} 
We replicate the generalization experiment under a covariate shift, where the evaluation inputs are drawn from $x_i\stackrel{\mathrm{iid}}{\sim}\opn{Unif}[-1/\sqrt{d},1/\sqrt{d}]$ for $d\in \{4,8\}$ 
The results shown in Figure~\ref{supp-fig:exp2 d4-1 uni}-\ref{supp-fig:exp2 d8-2 uni}.
Despite the shift in covariates, we observe that the generalization performance is comparable to the true baseline for $n_{\max}=512$ and $L\geq 16$, with $L=16$ model at $d=8$ being an exception.

\begin{figure*}[t]
    \centering
    \includegraphics[width=0.9\textwidth]{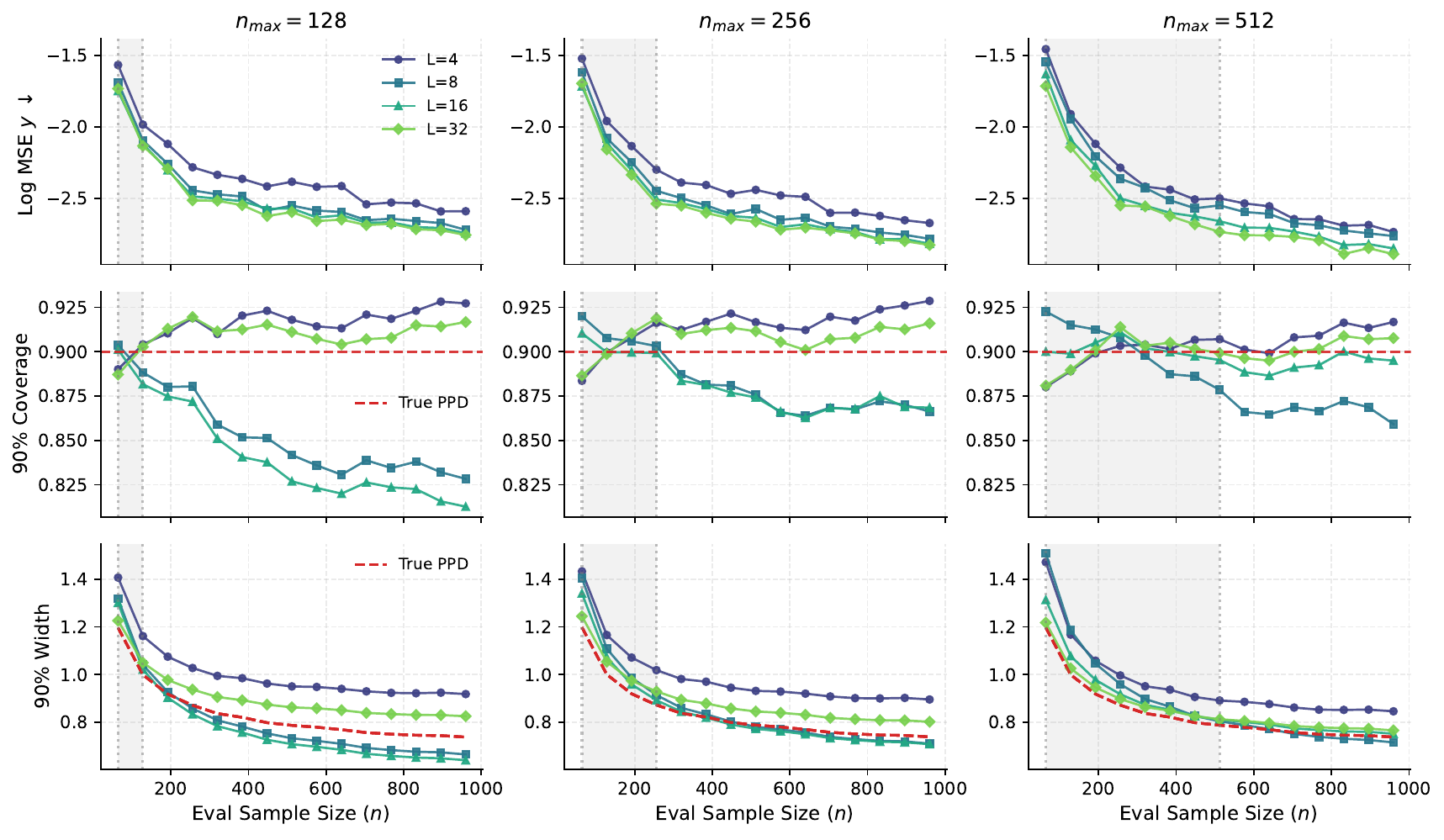}
    \caption{($d=4$)
    Prediction MSE, $90\%$ interval coverage, and $90\%$ interval width versus evaluation sample size for \emph{learnable} normalized models with $C=256$, depths $L\in\{4,8,16,32\}$, and pretraining ranges $n\in[64,n_{\max}]$ with $n_{\max}\in\{128,256,512\}$. Red dashed curves denote the corresponding true PPD interval width/nominal coverage.}
    \label{supp-fig:exp2 d4-1}
\end{figure*}

\begin{figure*}[t]
    \centering\includegraphics[width=0.9\textwidth]{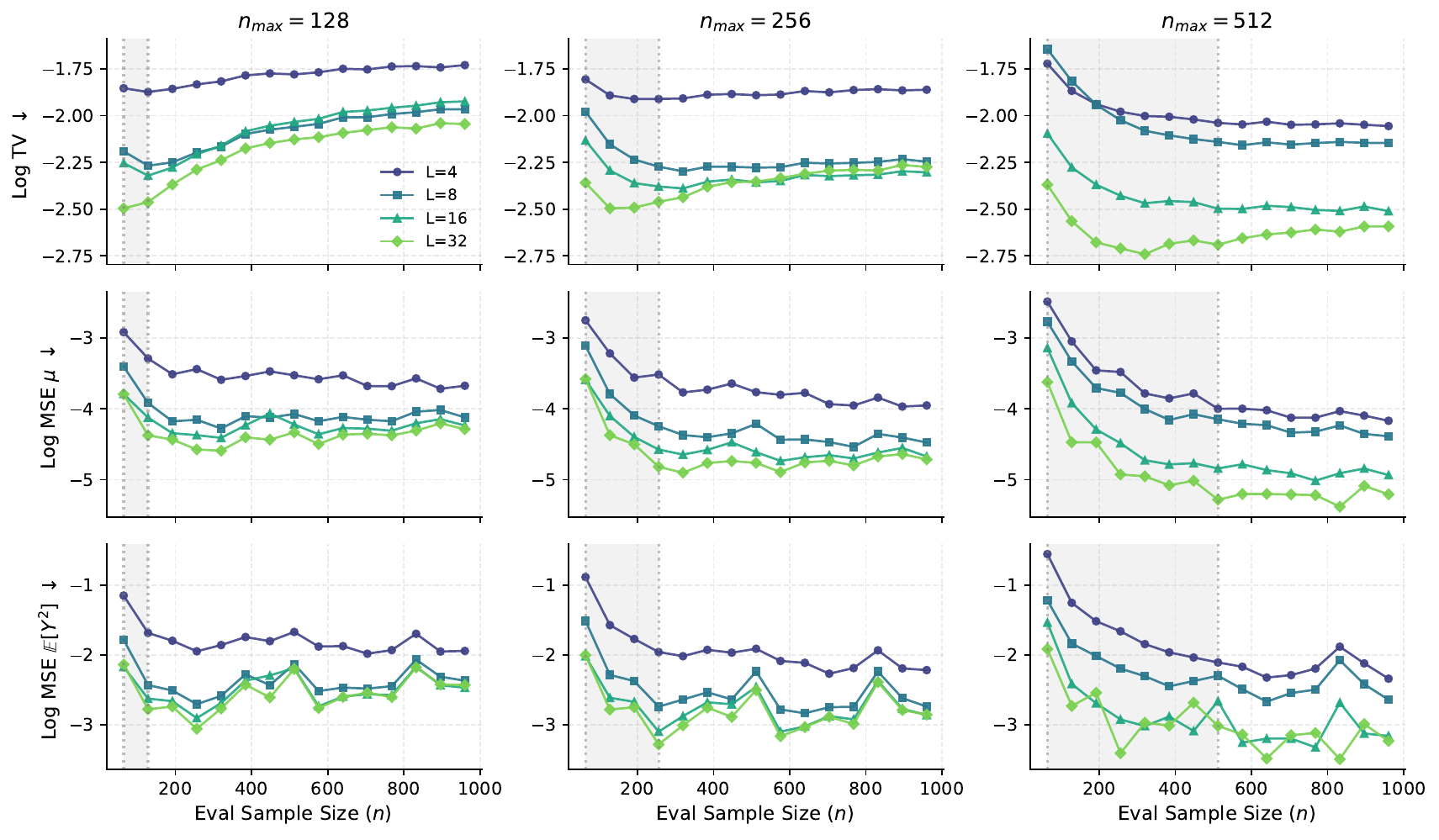}
    \caption{($d=4$) $\log \mrm{E}[\mathrm{TV}(p_{(a,b]},q_\vartheta)]$ and moment MSEs $\mrm{E}(\mu-m_{\vartheta,1})^2$ and $\mrm{E}(\tau+\mu^2-m_{\vartheta,2})^2$ versus evaluation sample size $n'$ for \emph{learnable} normalized models with $C=256$, depths $L\in\{4,8,16,32\}$, and pretraining ranges $n\in[64,n_{\max}]$ with $n_{\max}\in\{128,256,512\}$. }
    \label{supp-fig:exp2 d4-2}
\end{figure*}

\begin{figure*}[t]
    \centering
    \includegraphics[width=0.9\textwidth]{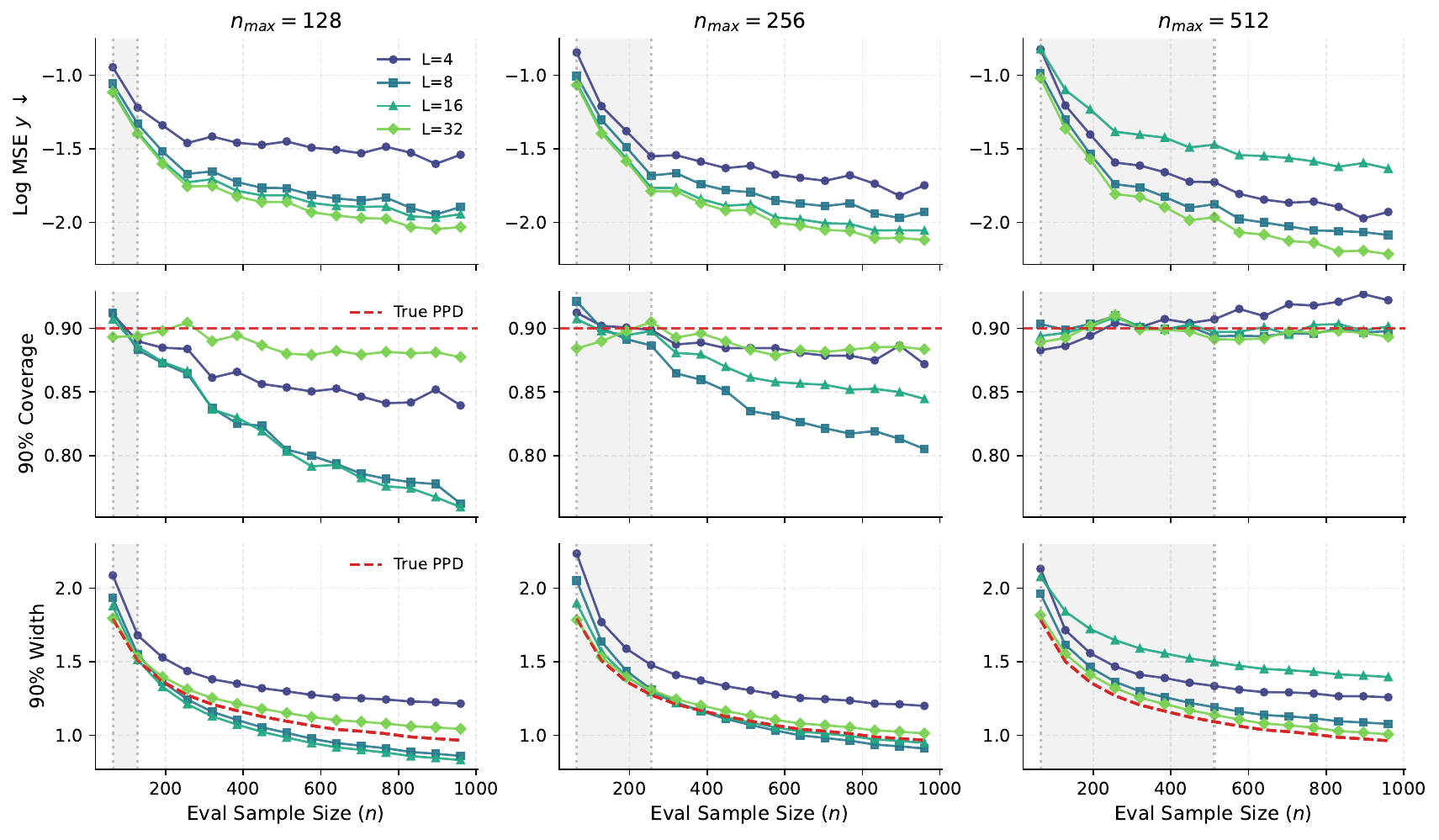}
    \caption{($d=8$)
    Prediction MSE, $90\%$ interval coverage, and $90\%$ interval width versus evaluation sample size for \emph{learnable} normalized models with $C=256$, depths $L\in\{4,8,16,32\}$, and pretraining ranges $n\in[64,n_{\max}]$ with $n_{\max}\in\{128,256,512\}$. Red dashed curves denote the corresponding true PPD interval width/nominal coverage.}
    \label{supp-fig:exp2 d8-1}
\end{figure*}

\begin{figure*}[t]
    \centering\includegraphics[width=0.9\textwidth]{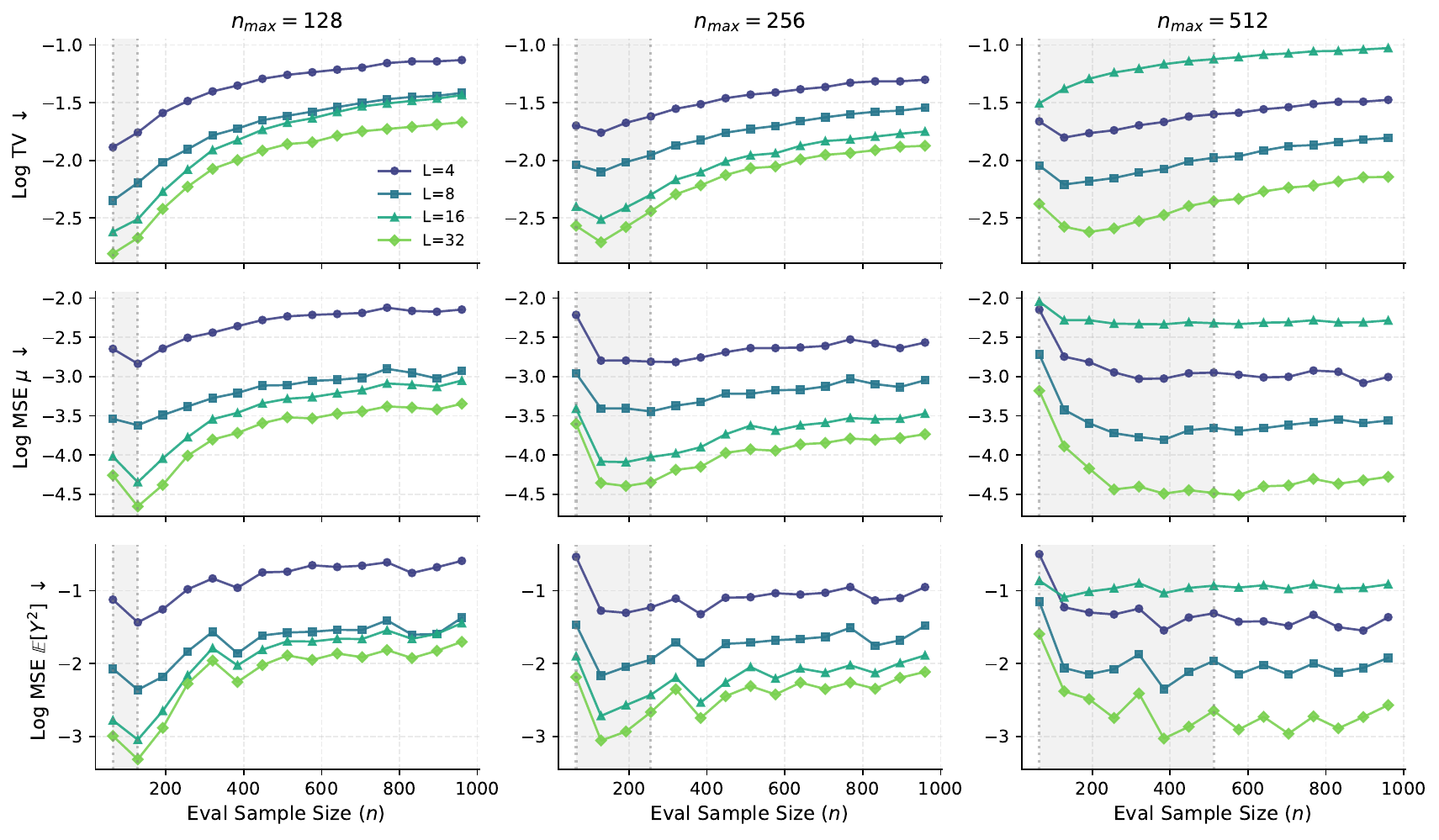}
    \caption{($d=8$) $\log \mrm{E}[\mathrm{TV}(p_{(a,b]},q_\vartheta)]$ and moment MSEs $\mrm{E}(\mu-m_{\vartheta,1})^2$ and $\mrm{E}(\tau+\mu^2-m_{\vartheta,2})^2$ versus evaluation sample size $n'$ for \emph{learnable} normalized models with $C=256$, depths $L\in\{4,8,16,32\}$, and pretraining ranges $n\in[64,n_{\max}]$ with $n_{\max}\in\{128,256,512\}$. }
    \label{supp-fig:exp2 d8-2}
\end{figure*}

\begin{figure*}[t]
    \centering\includegraphics[width=0.9\textwidth]{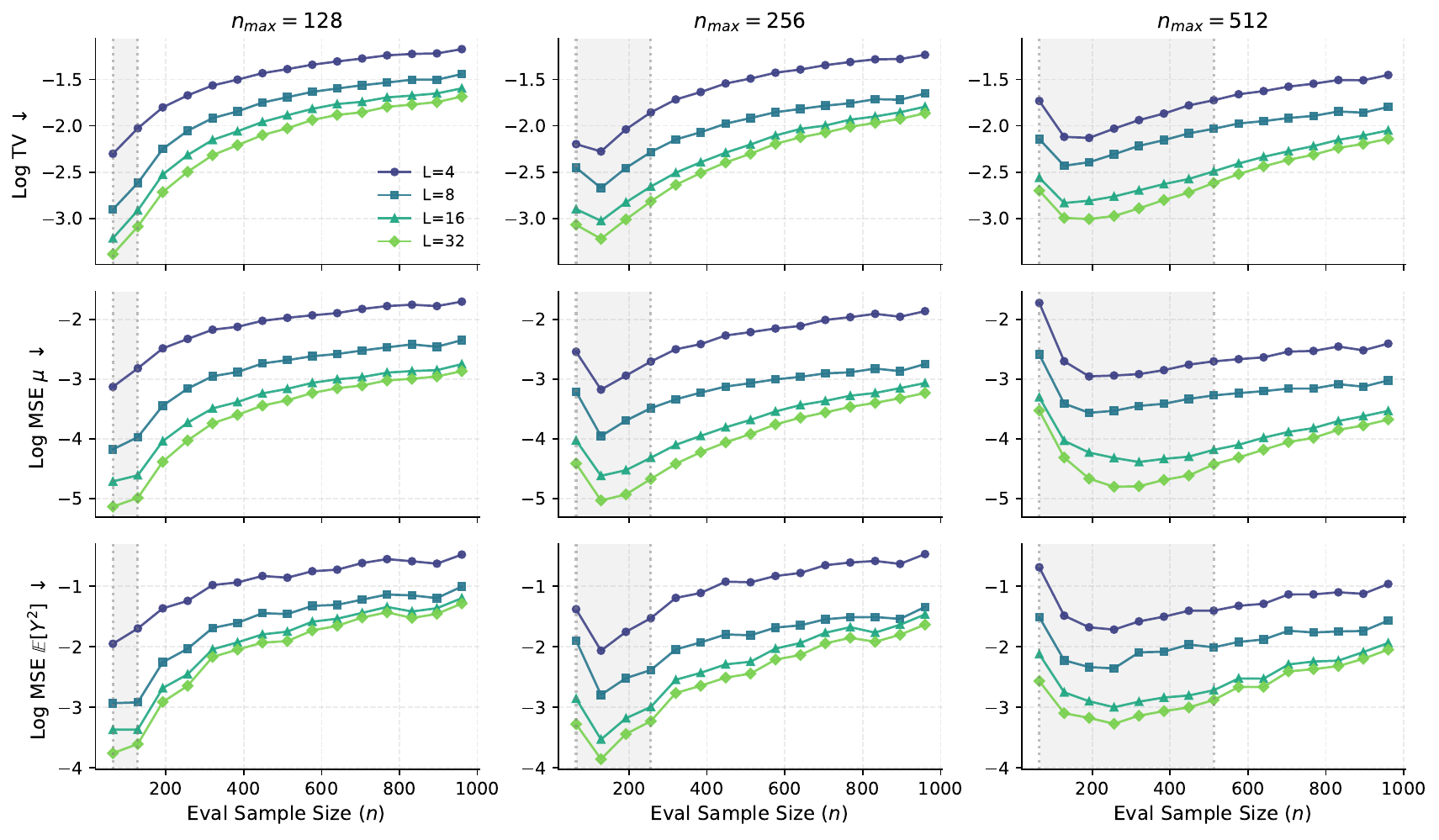}
    \caption{($d=16$) $\log \mrm{E}[\mathrm{TV}(p_{(a,b]},q_\vartheta)]$ and moment MSEs $\mrm{E}(\mu-m_{\vartheta,1})^2$ and $\mrm{E}(\tau+\mu^2-m_{\vartheta,2})^2$ versus evaluation sample size $n'$ for \emph{learnable} normalized models with $C=256$, depths $L\in\{4,8,16,32\}$, and pretraining ranges $n\in[64,n_{\max}]$ with $n_{\max}\in\{128,256,512\}$. Errors decrease with larger $L$ and $n_{\max}$ and grow with $n'$, mirroring the trends in prediction and interval metrics in Figure~\ref{fig:exp2-2}.
    Consistent with Theorem~\ref{thm:L growth}, for fixed $L$ and $n_{\max}$, the error of the finite‑iteration solver increases as the evaluation sample size grows.}
    \label{supp-fig:exp2-3}
\end{figure*}

\begin{figure*}[t]
    \centering
    \includegraphics[width=0.9\textwidth]{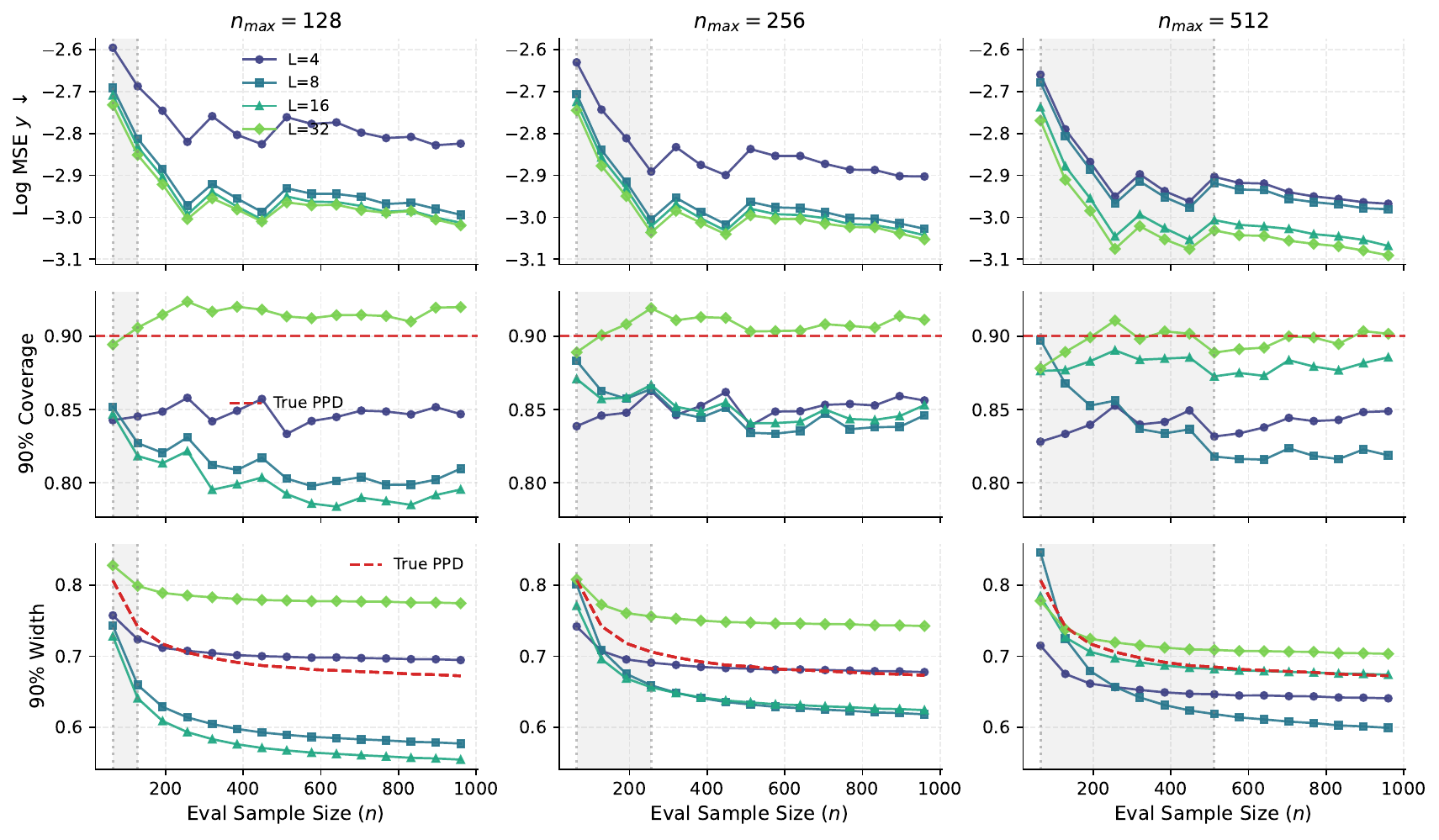}
    \caption{($d=4$, covariate shift)
    Prediction MSE, $90\%$ interval coverage, and $90\%$ interval width versus evaluation sample size for \emph{learnable} normalized models with $C=256$, depths $L\in\{4,8,16,32\}$, and pretraining ranges $n\in[64,n_{\max}]$ with $n_{\max}\in\{128,256,512\}$. Red dashed curves denote the corresponding true PPD interval width/nominal coverage.}
    \label{supp-fig:exp2 d4-1 uni}
\end{figure*}

\begin{figure*}[t]
    \centering\includegraphics[width=0.9\textwidth]{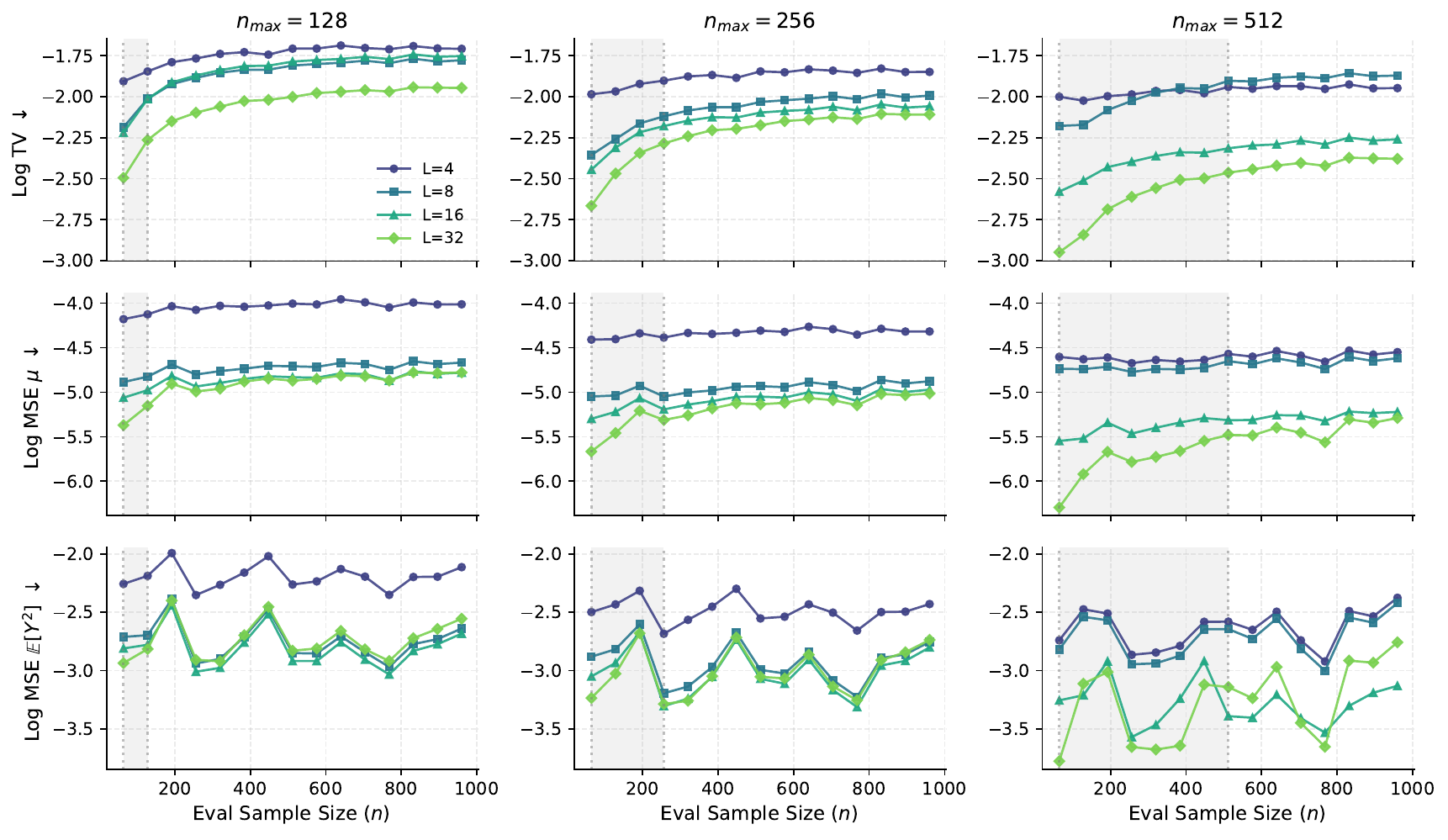}
    \caption{($d=4$, covariate shift) $\log \mrm{E}[\mathrm{TV}(p_{(a,b]},q_\vartheta)]$ and moment MSEs $\mrm{E}(\mu-m_{\vartheta,1})^2$ and $\mrm{E}(\tau+\mu^2-m_{\vartheta,2})^2$ versus evaluation sample size $n'$ for \emph{learnable} normalized models with $C=256$, depths $L\in\{4,8,16,32\}$, and pretraining ranges $n\in[64,n_{\max}]$ with $n_{\max}\in\{128,256,512\}$. }
    \label{supp-fig:exp2 d4-2 uni}
\end{figure*}

\begin{figure*}[t]
    \centering
    \includegraphics[width=0.9\textwidth]{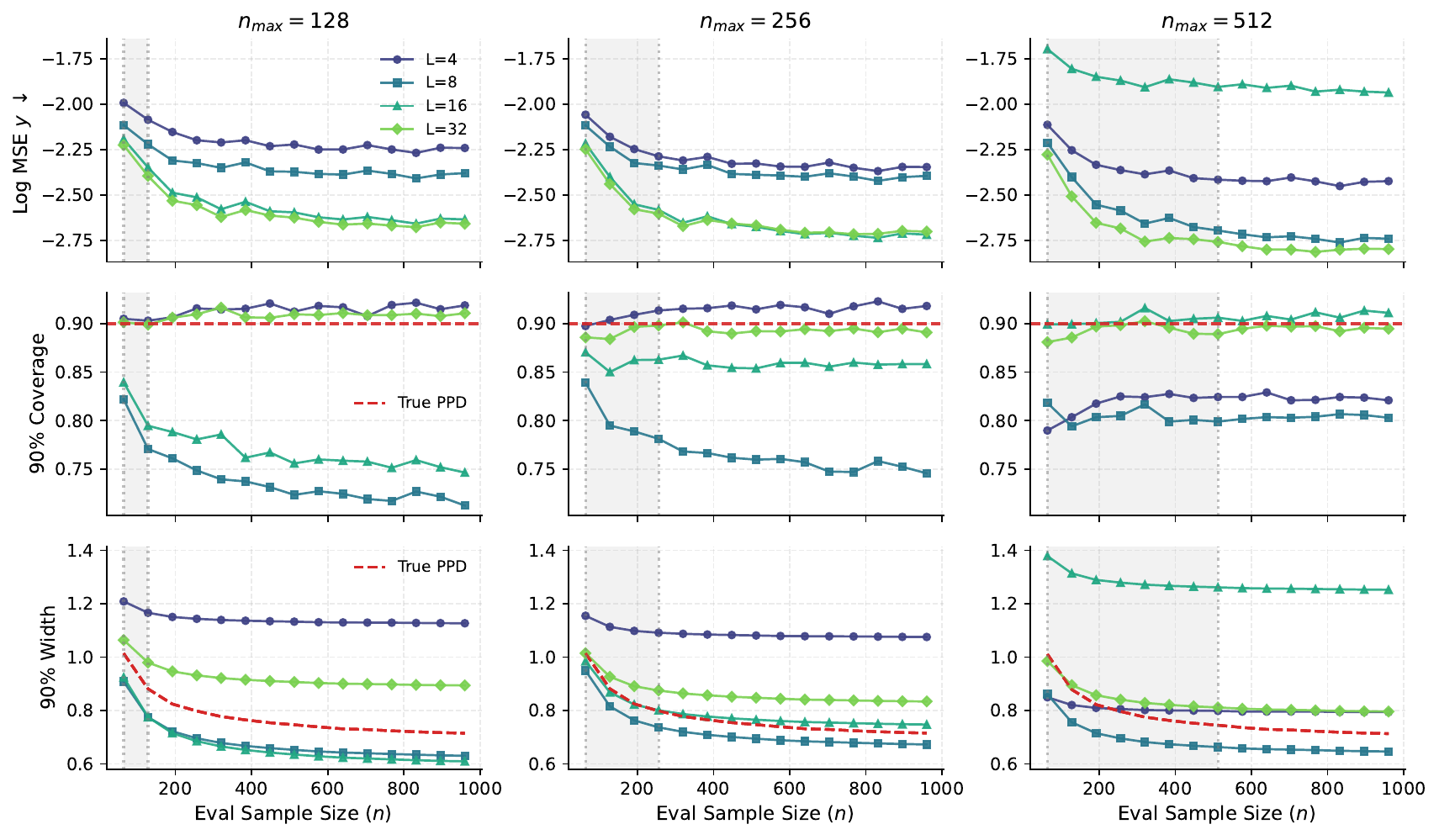}
    \caption{($d=8$, covariate shift)
    Prediction MSE, $90\%$ interval coverage, and $90\%$ interval width versus evaluation sample size for \emph{learnable} normalized models with $C=256$, depths $L\in\{4,8,16,32\}$, and pretraining ranges $n\in[64,n_{\max}]$ with $n_{\max}\in\{128,256,512\}$. Red dashed curves denote the corresponding true PPD interval width/nominal coverage.}
    \label{supp-fig:exp2 d8-1 uni}
\end{figure*}

\begin{figure*}[t]
    \centering\includegraphics[width=0.9\textwidth]{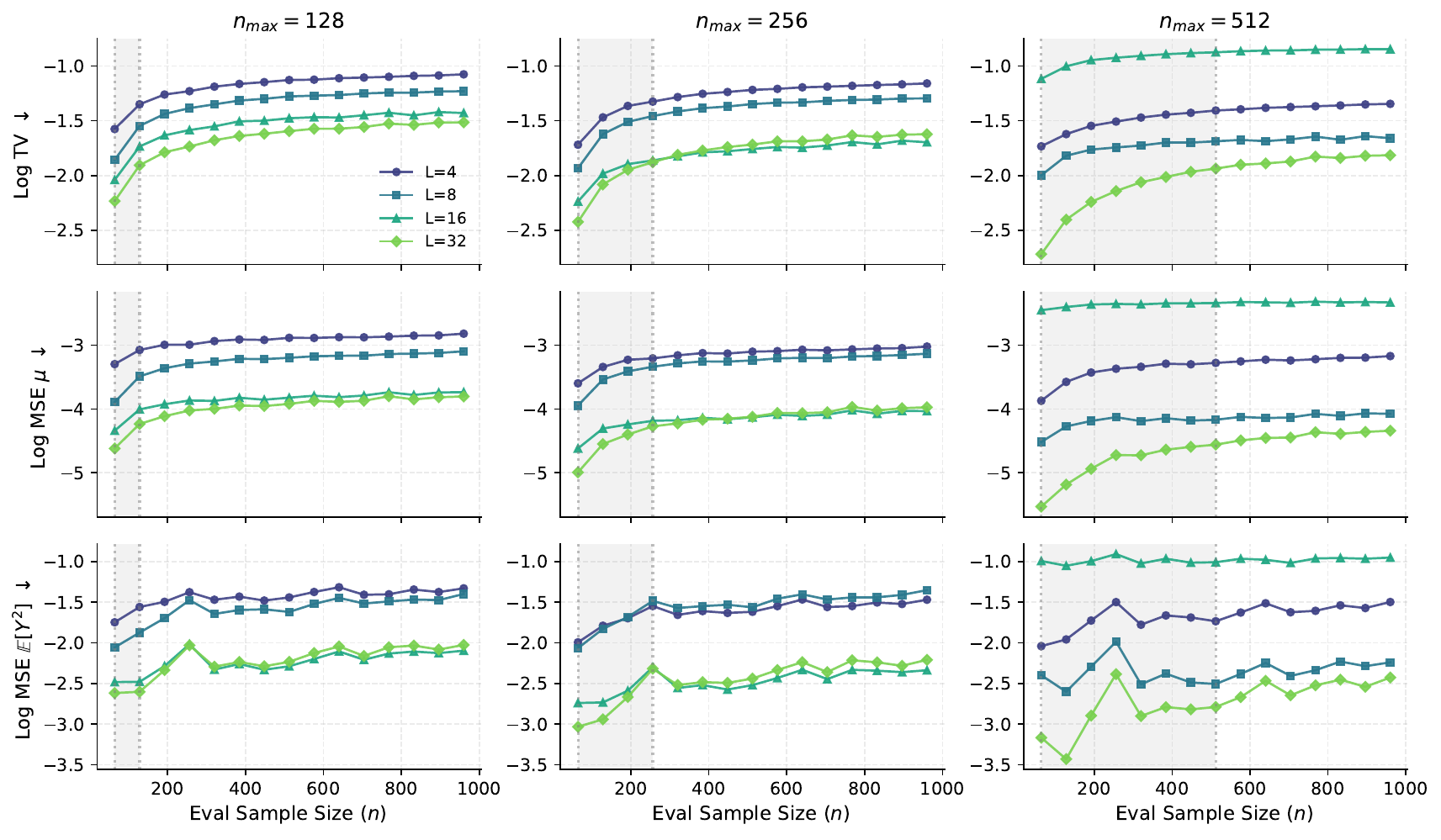}
    \caption{($d=8$, covariate shift) $\log \mrm{E}[\mathrm{TV}(p_{(a,b]},q_\vartheta)]$ and moment MSEs $\mrm{E}(\mu-m_{\vartheta,1})^2$ and $\mrm{E}(\tau+\mu^2-m_{\vartheta,2})^2$ versus evaluation sample size $n'$ for \emph{learnable} normalized models with $C=256$, depths $L\in\{4,8,16,32\}$, and pretraining ranges $n\in[64,n_{\max}]$ with $n_{\max}\in\{128,256,512\}$. }
    \label{supp-fig:exp2 d8-2 uni}
\end{figure*}

\section{Details of Real Data Case Studies}
\label{supp-sec:real data}

We present a real data case study on the Sacramento home price dataset \citep{kuhn2008building} and the Walker Lake dataset \citep{isaaks1989applied}.
Note that this case study is intended as an illustrative validation of the proposed mechanism rather than as a comprehensive benchmark.

\textbf{Sacramento.} The dataset is from the R package \texttt{caret} version 7.0.1 \citep{kuhn2008building}, from which we selected data corresponding to the cities of Sacramento and Elk Grove. We use the two spatial coordinates, longitude and latitude, as input features and denote the response by $V$.
The spatial coordinates are centered and standardized coordinate-wise, and then divided by a constant $0.3$; the response is centered and standardized with the scaling constant $1$.
This preprocessing is used consistently for empirical-Bayes fitting, PFN pretraining, and evaluation.
Following \citet[Chapter~5]{rasmussen2006gaussian}, we first fit an empirical-Bayes anisotropic RBF GP to the Sacramento data.
Specifically, we use the ARD kernel
\[
    \kappa(x,x')
    =
    \alpha^2
    \exp\left(
        -\frac{1}{2}
        \sum_{k=1}^2
        \frac{(x_k-x_k')^2}{\ell_k^2}
    \right),
\]
together with Gaussian observation noise.
The fitted hyperparameters are $\alpha = 0.666$, $\ell=(1.226,0.770)$ and $\sigma = 0.836$.
Using these hyperparameters, we pretrain the normalized transformer
$\opn{TF}^{\opn{pr}}_{\vartheta,L}$ on synthetic data generated from the fitted prior.
We use $C=256$ bins, depth $L=32$, and sample pretraining context sizes from
$n'\sim\opn{Unif}[128,552]$.
The truncation interval $(a,b]$ for the binned output distribution is chosen by Monte Carlo calibration under the fitted prior, as in the synthetic experiments.

In Figure~\ref{fig:sacramento}, we provide the trained PFN with the full Sacramento context and evaluate its PPD on a regular $100\times 100$ spatial grid over the region $[-121.53,-121.33]\times[38.38,38.69]$.
From the PFN output probabilities, we compute the predictive mean and the $5\%$ and $95\%$ predictive quantiles.
As a baseline, we fit an empirical-Bayes GP to the same standardized context and evaluate its predictive mean and Gaussian predictive quantiles on the same grid.
Finally, all predictions are transformed back to the original response scale. 

\begin{figure}[t] 
\centering        
    \includegraphics[width=0.9\textwidth]{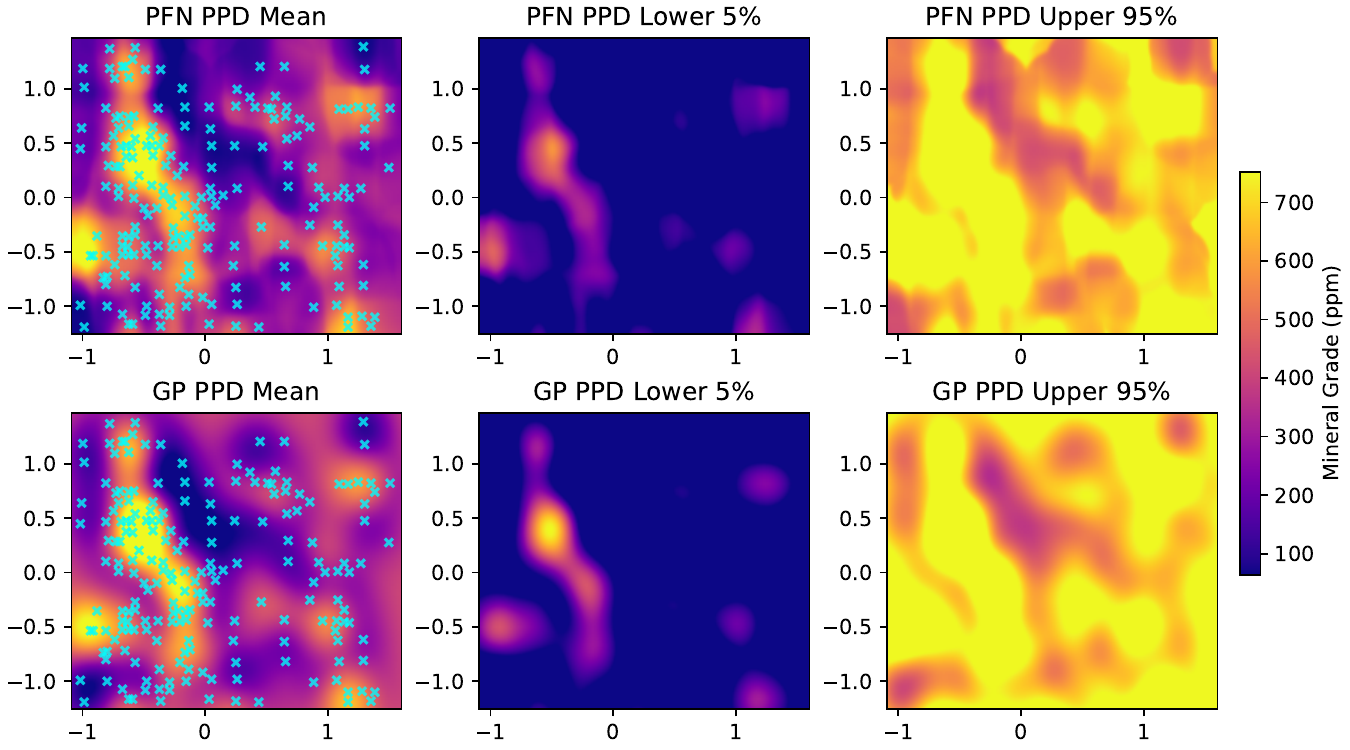} 
    \caption{Illustration of similarities between PPDs produced by a transformer (PFN) and a Gaussian process (GP) on the Walker Lake dataset \citep{isaaks1989applied}. 
    The $x$, $y$ axes represent standardized spatial coordinates, while the color scale shows estimated mineral (V) concentrations in ppm. 
    The top row displays PPD outputs from a transformer, while the bottom row shows PPD based on a GP. 
    Columns correspond to the PPD mean (left), $5\%$ quantile (center), $95\%$ quantile (right) of PPD; both PPDs are based on the same $200$ observations shown in blue $\times$ mark on the left column.} 
    \label{fig:walker} 
\end{figure}

\textbf{Walker Lake.} The dataset is from the R package \texttt{gstat} version 2.1-6 \citep{pebesma2015package}, whose spatial coordinates and response (V) are standardized in the same manner as Sacramento data.   
We estimate anisotropic RBF hyperparameters from a randomly sampled subset of the training data with $n=200$ by maximizing the marginal log likelihood, obtaining $\alpha=0.782$, $\ell_1=0.146$, $\ell_2=0.252$, and $\sigma=0.611$.
We then pretrain $\opn{TF}^{\opn{pr}}_{\vartheta,L}$ with $C=256$ and $L=32$ on synthetic data from the fitted prior with context sizes $n'\sim\opn{Unif}[128,384]$.
Given a separate context of size $n=200$, the PFN PPD is evaluated on the full spatial grid and compared with an empirical-Bayes GP fitted to the same context.
The resulting PFN and GP predictive means and $5\%$/$95\%$ quantiles show qualitatively similar spatial patterns (Figure~\ref{fig:walker}).


\end{document}